\definecolor{citeColor}{RGB}{0,20,115}
\theoremstyle{plain}
\newtheorem{theorem}{Theorem}[section]
\newtheorem{proposition}[theorem]{Proposition}
\newtheorem{lemma}[theorem]{Lemma}
\theoremstyle{definition}
\newtheorem{definition}[theorem]{Definition}
\newtheorem{assumption}[theorem]{Assumption}
\theoremstyle{remark}
\let\emph\textit 
\newtcolorbox[auto counter,
    number within=subsubsection, 
]{longpromptbox}[2][]{%
    breakable,          
    colback=white,      
    colframe=gray!60!white,     
    coltitle=black,     
    fonttitle=\bfseries\small, 
    title={#2}, 
    arc=0mm,            
    outer arc=0mm,
    boxrule=1pt,        
    before upper={\parindent0pt\small\rmfamily}, 
    enhanced,           
    label={lst:#1},
}
\renewcommand{\algorithmiccomment}[1]{\hfill $\triangleright$ #1}
\icmltitlerunning{From Debate to Equilibrium: Belief‑Driven Multi‑Agent LLM Reasoning via Bayesian Nash Equilibrium}
\begin{document}

\twocolumn[
\icmltitle{From Debate to Equilibrium:\\
Belief‑Driven Multi‑Agent LLM Reasoning via Bayesian Nash Equilibrium}

\icmlsetsymbol{equal}{*}
\begin{icmlauthorlist}
\icmlauthor{Yi Xie}{fudan,hkbu,equal}
\icmlauthor{Zhanke Zhou}{hkbu,equal}
\icmlauthor{Chentao Cao}{hkbu}
\icmlauthor{Qiyu Niu}{fudan}
\icmlauthor{Tongliang Liu}{sydney}
\icmlauthor{Bo Han}{hkbu}
\end{icmlauthorlist}

\icmlaffiliation{fudan}{Academy for Engineering and Technology, Fudan University}
\icmlaffiliation{hkbu}{TMLR Group, Department of Computer Science, Hong Kong Baptist University}
\icmlaffiliation{sydney}{Sydney AI Center, The University of Sydney}

\icmlcorrespondingauthor{Bo Han}{bhanml@comp.hkbu.edu.hk}

\icmlkeywords{Machine Learning, ICML}

\vskip 0.3in
]

\printAffiliationsAndNotice{\icmlEqualContribution} 

\begin{abstract}

Multi-agent frameworks can substantially boost the reasoning power of large language models (LLMs), but they typically incur heavy computational costs and lack convergence guarantees. To overcome these challenges, we recast multi-LLM coordination as an incomplete-information game and seek a Bayesian Nash equilibrium (BNE), in which each agent optimally responds to its probabilistic beliefs about the strategies of others. We introduce \textit{E}fficient \textit{Co}ordination via \textit{Nash} Equilibrium (\textit{ECON}), a hierarchical reinforcement-learning paradigm that marries distributed reasoning with centralized final output. Under ECON, each LLM independently selects responses that maximize its expected reward, conditioned on its beliefs about co-agents, without requiring costly inter-agent exchanges.
We mathematically prove that ECON attains a markedly tighter regret bound than non-equilibrium multi-agent schemes.
Empirically, ECON outperforms existing multi-LLM approaches by $11.2\%$ on average across six benchmarks spanning complex reasoning and planning tasks. Further experiments demonstrate ECON’s ability to flexibly incorporate additional models, confirming its scalability and paving the way toward larger, more powerful multi-LLM ensembles.
The code is publicly available at: \url{https://github.com/tmlr-group/ECON}.

\end{abstract}

\section{Introduction}
\label{sec:intro}

Large Language Models (LLMs) have demonstrated exceptional reasoning capabilities across a wide range of tasks, from natural language understanding and generation to complex problem-solving. 
Recent work has shown that organizing multiple LLMs into a \emph{multi-agent} framework can further amplify their reasoning power by enabling collaborative in-context discussions without parameter updates~\citep{du2023improving, chan2023chateval, liang2023encouraging, chen2023reconcile, hong2023metagpt, zhang2023and}. 

In particular, multi-agent debate (MAD)~\citep{liang2023encouraging, du2023improving} leverages structured argumentation among several LLM agents, where each participant critically evaluates and refines others’ proposals to arrive at a more robust, consensus-driven solution. 
MAD has been shown to outperform single-agent methods in various reasoning scenarios, providing clearer justifications and reducing error rates.  

\begin{figure}[t]
    \centering
    \includegraphics[width=0.48\textwidth]{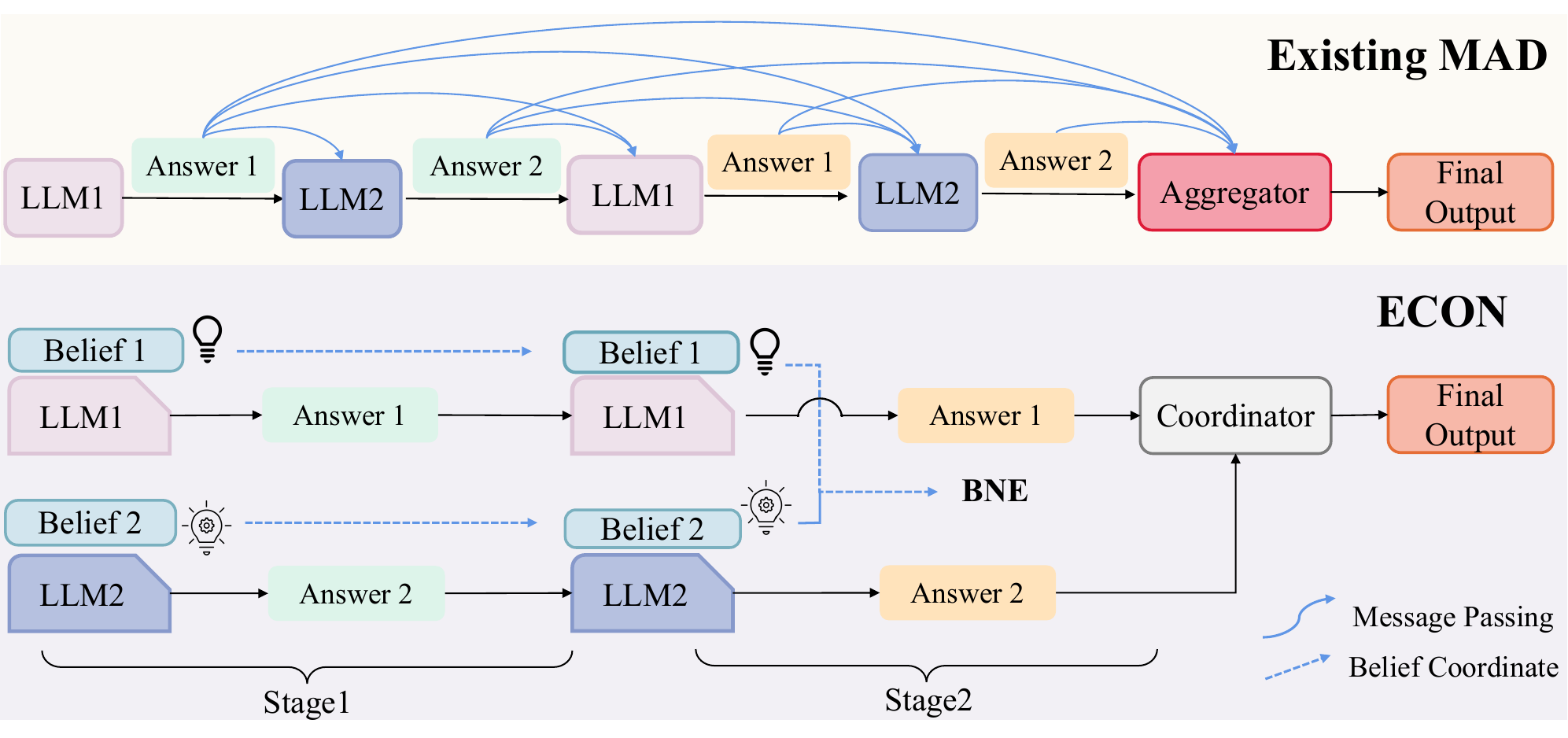}
    \vspace{-14pt}
    \caption{
    Comparison of multi-agent coordination approaches. Existing MAD requires explicit message passing between LLMs, incurring high communication overhead. ECON replaces it with belief-based coordination and achieves BNE—a stable state where each LLM optimizes its strategy based on beliefs about other LLMs' behaviors.}
    \vspace{-10pt}
    \label{fig:brief description}
\end{figure}

However, existing multi-agent frameworks encounter three key obstacles that limit their practical deployment. 
First, extensive inter-agent communication consumes large numbers of tokens, driving up computational overhead~\citep{du2023improving}. 
Second, the sheer volume of information exchanged over multiple rounds exceeds LLMs’ context‐window capacity, impeding scalability~\citep{liu2024groupdebate}. 
Third, without well-defined coordination protocols, these systems can underperform simpler ensembling or self-consistency methods~\citep{liang2023encouraging}. 
Moreover, recent work has shown that the consensus threshold between agents can significantly influence performance~\citep{smitshould}. 
Thus, developing a principled, scalable framework for multi-agent coordination is necessary but remains an open challenge.

In this work, we introduce \textit{Efficient Coordination via Nash Equilibrium (ECON)}, a novel framework that casts multi‐LLM interaction as an incomplete‐information game. 
As shown in Figure~\ref{fig:brief description}, conventional MAD approaches rely on heavy, round-by-round message exchanges among all agents, incurring substantial computational overhead. 
ECON, by contrast, replaces direct communication with a belief‐based coordination mechanism: each LLM maintains and updates probabilistic beliefs about its peers’ behaviors, rather than sending and receiving explicit messages.

Notably, we frame coordination as the pursuit of a \textit{Bayesian Nash Equilibrium (BNE)}, where each LLM chooses its optimal strategy in light of its beliefs about other agents. Concretely, each Execution LLM uses a belief network to convert its local observations and past trajectory into a belief state, and then produces an output that maximizes its expected reward. All agents’ belief states are merged by a Belief Encoder into a shared, group‐level representation, which feeds into a centralized mixing network. This network refines the belief‐network parameters for every agent, gradually guiding the entire ensemble toward a BNE in which every LLM converges to a stable strategy. By substituting costly token exchanges with belief‐driven coordination, ECON dramatically cuts communication overhead while preserving and even enhancing multi-agent reasoning.

Theoretically, we begin by proving the existence of a BNE and then bound the performance gap to an optimal strategy via regret analysis. Specifically, ECON attains a sublinear regret of \( O\left( \nicefrac{N\sqrt{T}}{1 - \gamma} \right) \), where $N$ is the number of agents, $T$ the number of iterations, and $\gamma$ the discount factor. 
By contrast, existing multi-agent frameworks lacking an equilibrium guarantee suffer linear regret \( O\left( \nicefrac{\delta_{\max} T}{1 - \gamma} \right) \).
This tighter bound demonstrates ECON’s capacity to learn near-optimal strategies efficiently, with far lower token and computation costs than existing MAD methods. 
Consequently, ECON scales effectively, addressing the scalability limitation in prior research
of multiagent~\citep{wu2024autogen, yin2023exchange, lan2024stance, yuan2024evoagent}
and graph reasoning~\citep{zhou2023mcgra, zhou2023combating, li2024neuralatoms, zhou2024less}.
Attributed to the local-global Nash coordination (central-coordinator-execution) design, ECON can ensemble up to nine LLMs with only moderate resource increases.

Empirically, ECON reliably drives the multi‐LLM system toward a Bayesian Nash Equilibrium, fostering collaborative reasoning and stronger consensus among agents. On six diverse benchmarks—spanning complex reasoning and planning tasks—ECON outperforms single‐agent baselines by an average of $10.9\%$ and existing multi‐agent methods by $11.2\%$, demonstrating its superior effectiveness. Moreover, compared to a 3‐round MAD protocol, ECON reduces token usage by $21.4\%$ on average. In scalability tests, increasing the number of Execution LLMs from three to nine yields an additional $18.1\%$ performance gain, underscoring ECON’s capacity to scale with only moderate resource overhead.

We summarize our key contributions as follows:
\begin{itemize}[leftmargin=*]
\vspace{-10pt}
\item 
We formalize Bayesian Nash Equilibrium for multi-agent LLM systems, establishing theoretical foundations for efficient coordination without direct communication (Sec.~\ref{sec:method}).
\vspace{-15pt}
\item 
We introduce the ECON framework to implement BNE via belief-based coordination and overcome scalability limits through a local–global Nash mechanism (Sec.~\ref{sec:ECON_framework}).

\vspace{-5pt}  
\item 
We empirically demonstrate that ECON outperforms both existing single-agent and multi-agent approaches, and validate its efficiency to scale to larger ensembles (Sec.~\ref{sec:experiment}).

\end{itemize}

\section{Theoretical Foundation}
\label{sec:method}
This section formalizes BNE for multi-agent LLM systems, establishing theoretical foundations for efficient coordination without direct communication. We model the multi-agent setup as a DEC-POMDP in Sec.~\ref{sec:problem_definition}, prove BNE existence in Sec.~\ref{sec:theoretical_foundation}, then develop a convergence analysis with performance difference lemmas and regret bounds. 


\subsection{Problem Definition}
\label{sec:problem_definition}

We consider a framework improving collaboration under partial observability and without fine-tuning the internal parameters of any LLM. In this setting, agents cannot observe each other's outputs directly and rely on beliefs to coordinate. We formally model this as a decentralized partially observable Markov decision process (DEC-POMDP), defined as \textit{Markov games} \( \langle \mathcal{N},\,\mathcal{S},\,\mathcal{A},\,\mathcal{O},\,\mathcal{P},\,\Omega,\,\mathcal{R},\,\gamma \rangle\). Here, \(\mathcal{N} = \{1,\dots,N\}\) denotes the set of agents; \(\mathcal{S}\) represents the state space, including user queries and dialogue context; \(\mathcal{A} = \mathcal{A}_1 \times \cdots \times \mathcal{A}_N\) is the joint action space, with each \(\mathcal{A}_i\) defining agent \(i\)'s action as a prompt embedding \(a_i = [T_i, p_i]\) that controls generation behavior via temperature and repetition penalty; \(\mathcal{O}\) is the joint observation space; \(\mathcal{P}\) and \(\Omega\) define the state transition and observation functions; \(\mathcal{R}\) is the reward and \(\gamma\) is the discount factor. 

Here, $\mathcal{N}$ includes $N-1$ \emph{Execution} LLMs and a \emph{Coordinator} LLM to achieve belief coordination. Each Execution LLM maintains a local history \(\tau_i^t = \{a_i^1, o_i^1, \dots, a_i^{t-1}, o_i^{t-1}\}\), composed of its previous actions and observations for belief state updates. Execution LLM receives three inputs: the question, a format and strategy from the coordinator, and the final aggregated output from Coordinator LLM.  Our objective is to identify a policy profile \(\pi = (\pi_1, \ldots, \pi_N)\) forming a BNE through belief coordination, each agent's policy \(\pi_i: \mathcal{H}_i \to \Delta(\mathcal{A}_i)\) maps its history to an action distribution based on its belief about other agents' strategies, such that no agent can improve its response quality unilaterally. 

\subsection{BNE Existence and Convergence}
\label{sec:theoretical_foundation}

To bridge our DEC-POMDP formulation with BNE analysis, we introduce the concept of agent \emph{types}. In our framework, each agent $i$ is associated with a  \emph{type} $\theta_i = \tau_i^t$, which is precisely the agent's local history, capturing its internal beliefs and private observation history. Given the partial observability in our DEC-POMDP, each agent forms \emph{beliefs about other agents' types} based on a common prior and its own observations. These beliefs are operationalized through belief networks in our implementation (detailed in Sec.~\ref{sec:ECON_framework}).

Formally, a strategy profile $\{\pi_i^*\}_{i=1}^N$ is a BNE if, for each $i$:
\begin{align*}
&\mathbb{E}_{\theta_{-i}}\Bigl[ U_i\bigl(\pi_i^*(\theta_i), \pi_{-i}^*(\theta_{-i}), \theta_i, \theta_{-i}\bigr)\Bigr] \\
&\quad\ge \mathbb{E}_{\theta_{-i}}\Bigl[ U_i\bigl(\pi_i'(\theta_i), \pi_{-i}^*(\theta_{-i}), \theta_i, \theta_{-i}\bigr)\Bigr], \quad \forall \pi_i',
\end{align*}
where $\theta_{-i} = (\theta_1, \ldots, \theta_{i-1}, \theta_{i+1}, \ldots, \theta_N)$ denotes the types of all agents except $i$, utility function is defined as:
\[
U_i(\pi_i^*, \pi_{-i}^*, \theta_i, \theta_{-i}) = \mathbb{E} \left[\sum_{t=0}^{\infty} \gamma^t r_i^t \mid \pi_i^*, \pi_{-i}^*, \theta_i, \theta_{-i} \right],
\]
where $r_i^t = \mathcal{R}(s^t, a^t)_i$ is agent $i$'s reward from the DEC-POMDP reward function at time $t$. To establish the existence of BNE in our setting, we verify three standard conditions: (1) each agent's mixed strategy space is non-empty, compact, and convex (satisfied by our bounded prompt embedding space); (2) the payoff function $U_i(\theta, a)$ is continuous in types and actions (ensured by continuous reward functions); and (3) each agent's expected payoff is quasi-concave in its own actions for fixed $\theta_i$ (guaranteed by our reward design). 

\begin{theorem}[Existence of Bayesian Nash Equilibrium]
\label{theorem:bne_existence}
In this multi-agent LLM framework, if the above conditions are satisfied, there exists a BNE strategy profile 
$\overline{\pi}^*=(\pi_1^*,\ldots,\pi_N^*)$ by Glicksberg's Fixed Point Theorem~\citep{ahmad2023common}. A full proof is provided in Appendix~\ref{appendix:proof_of_theorem_bne_existence}.
\end{theorem}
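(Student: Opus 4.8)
The plan is to recast the dynamic DEC-POMDP as a static Bayesian game whose players are the agents, whose types are the local histories $\theta_i = \tau_i^t$, and then to produce the equilibrium as a fixed point of the joint best-response map. First I would fix the common prior over the type profile $\theta = (\theta_1,\dots,\theta_N)$ and define each agent's strategy space $\Sigma_i$ as the set of measurable maps $\pi_i:\mathcal{H}_i \to \Delta(\mathcal{A}_i)$. I would then verify that $\Sigma_i$ inherits the regularity required by condition (1): it is non-empty (constant, type-independent strategies exist), convex (pointwise mixtures of type-contingent action distributions are again valid strategies, since $\Delta(\mathcal{A}_i)$ is convex), and compact. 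Compactness is exactly where the bounded prompt-embedding assumption is used: $\mathcal{A}_i$ is compact, hence $\Delta(\mathcal{A}_i)$ is weak-$*$ compact, and equipping $\Sigma_i$ with the topology of weak-$*$ convergence of the induced transition kernels makes $\Sigma = \prod_i \Sigma_i$ a non-empty, compact, convex subset of a locally convex Hausdorff topological vector space.

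Next I would introduce the ex ante utility functional $\bar U_i(\pi) = \mathbb{E}_{\theta}\bigl[U_i(\pi_i(\theta_i),\pi_{-i}(\theta_{-i}),\theta_i,\theta_{-i})\bigr]$ and establish that it is continuous in the joint profile $\pi$. The key leverage here is that the stage rewards are bounded and $\gamma<1$, so the discounted return $\sum_{t=0}^{\infty}\gamma^t r_i^t$ is uniformly bounded by $\sup|r|/(1-\gamma)$ and the series converges uniformly. Continuity of each stage interaction, guaranteed by condition (2) (continuity of the payoff in types and actions), together with this uniform convergence and dominated convergence, then transfers the stage-wise continuity to the whole infinite-horizon functional, yielding joint continuity of $\bar U_i$ on the compact set $\Sigma$.

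I would then define the best-response correspondence $BR_i(\pi_{-i}) = \arg\max_{\pi_i\in\Sigma_i}\bar U_i(\pi_i,\pi_{-i})$. Non-emptiness and compact-valuedness follow from the Weierstrass extreme value theorem, applied to the continuous $\bar U_i$ over the compact $\Sigma_i$; convex-valuedness follows from condition (3), since quasi-concavity of $\bar U_i$ in $\pi_i$ (for fixed $\theta_i$) forces the $\arg\max$ set to be convex; and upper hemicontinuity with closed graph I would obtain from Berge's Maximum Theorem, because the objective is jointly continuous and the feasible set $\Sigma_i$ is a fixed (hence trivially continuous) compact-valued constraint correspondence.

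Finally, the joint correspondence $BR=(BR_1,\dots,BR_N):\Sigma\rightrightarrows\Sigma$ is non-empty, convex-valued, compact-valued, and upper hemicontinuous on a non-empty compact convex subset of a locally convex Hausdorff space, so Glicksberg's fixed-point theorem delivers $\pi^*\in BR(\pi^*)$; unpacking this fixed-point condition coordinate by coordinate reproduces exactly the defining BNE inequality in the statement. I expect the main obstacle to be the topological bookkeeping of the Bayesian reformulation, namely choosing a single topology on the type-contingent strategy spaces under which compactness and the joint continuity of the infinite-horizon discounted utility hold simultaneously, which is delicate because the type $\theta_i=\tau_i^t$ is itself a growing history; discounting with $\gamma<1$ is the crucial ingredient that tames the infinite horizon, while boundedness of the embedding action space is what secures the needed compactness.
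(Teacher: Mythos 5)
Your proposal follows essentially the same route as the paper's proof: verify that each strategy space is non-empty, compact, and convex, that the payoffs are continuous and quasi-concave, establish that the best-response correspondence is non-empty, convex-valued, and upper hemicontinuous, and then apply Glicksberg's fixed-point theorem. The only differences are in technical bookkeeping (you use weak-$*$ compactness and Berge's Maximum Theorem where the paper uses pointwise convergence with Tychonoff and a direct hemicontinuity argument, and you are somewhat more explicit about why the infinite-horizon discounted utility is continuous), so the argument is correct and matches the paper's approach.
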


We then analyze the convergence properties of our ECON framework via Bayesian regret. Our analysis shows that combining belief networks with coordinated updates leads to an efficient approach for approximating BNE, achieving a sublinear regret bound $O\left( \nicefrac{N\sqrt{T}}{1 - \gamma} \right)$, in contrast to the linear regret of existing multi-agent debate methods. 

To connect our theoretical analysis with the DEC-POMDP formulation, we define the value function and regret in terms of our system components. For each agent $i$, we measure learning efficiency using Bayesian regret over $T$ steps:
\[
R_i(T) = \mathbb{E}_{s_t, \pi_t} \Bigl[ \sum_{t=1}^T \bigl( V_i^{*}(s_t) - V_i^{\pi_t}(s_t) \bigr) \Bigr],
\]
where $V_i^{*}(s)$ denotes optimal value function under BNE:
\[
V_i^{*}(s) = \max_{\pi_i} \mathbb{E}_{\pi^*_{-i}} \left[ \sum_{t=0}^{\infty} \gamma^t \mathcal{R}(s^t, a^t)_i \mid s^0 = s, \pi_i, \pi^*_{-i} \right],
\]
and $V_i^{\pi_t}(s)$ is the value under the current policy profile $\pi_t = (\pi_1^t, \ldots, \pi_N^t)$ at time $t$. The expectation accounts for randomness in both state transitions (governed by $\mathcal{P}$) and policy choices. To analyze the total Bayesian regret $R(T) = \sum_{i=1}^N R_i(T)$, we impose standard assumptions (see Appendix~\ref{appendix:Assumption_of_regret_bound}) and propose Lemma~\ref{lemma:performance_difference}, proven in Appendix~\ref{appendix:proof_of_performance_difference_lemma}. Using Lemma~\ref{lemma:performance_difference}, we bound the Bayesian regret and provide a proof sketch below, with more details and a bound comparison with MAD in Appendices~\ref{appendix:bounding_bayesian_regret}--\ref{appendix:comparison_debate}.

\begin{lemma}[Performance Difference]
\label{lemma:performance_difference}
For joint policies $\pi = (\pi_i, \pi_{-i})$ and $\pi' = (\pi_i', \pi_{-i}')$, the difference in value:
\begin{align*}
V_i^{\pi'}(s) - V_i^{\pi}(s) 
&= \frac{1}{1 - \gamma} \, \mathbb{E}_{s \sim d_{\pi'}} \Bigl[ \mathbb{E}_{a \sim \pi'} Q_i^{\pi}(s, a) \\
&\qquad - \mathbb{E}_{a \sim \pi} Q_i^{\pi}(s, a) \Bigr],
\end{align*}
where $d_{\pi'}$ is the state distribution under $\pi'$, and $a = (a_i, a_{-i})$ denotes the joint action from $\mathcal{A}$.
\end{lemma}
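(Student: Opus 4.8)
The plan is to prove this multi-agent identity by viewing the joint system, from agent $i$'s perspective, as a single MDP driven by the joint policy profile with per-step reward $r_i^t$ and the shared transition kernel $\mathcal{P}$. The crucial structural fact is that the dynamics depend only on the current state and \emph{joint} action $a = (a_i, a_{-i})$, not on which profile generated the trajectory; consequently the classical single-agent performance-difference argument (Kakade and Langford) lifts essentially verbatim once $\pi$ and $\pi'$ are read as joint profiles and $V_i^\pi, Q_i^\pi$ as agent $i$'s value functions under those profiles.

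First, I would expand $V_i^{\pi'}(s)$ along a trajectory sampled under $\pi'$ with $s_0 = s$ and insert a telescoping sequence of $V_i^\pi$ terms. Using
\[
\sum_{t=0}^\infty \gamma^t\bigl(\gamma V_i^\pi(s_{t+1}) - V_i^\pi(s_t)\bigr) = -V_i^\pi(s_0),
\]
which telescopes provided $\gamma^t V_i^\pi(s_t) \to 0$ (guaranteed by bounded rewards and $\gamma < 1$), the discounted return under $\pi'$ can be rewritten as $V_i^\pi(s)$ plus a correction term. Taking the expectation over trajectories $\tau \sim \pi'$ and interchanging the infinite sum with the expectation (justified by dominated convergence, again via bounded rewards) yields
\[
V_i^{\pi'}(s) - V_i^\pi(s) = \mathbb{E}_{\tau \sim \pi'}\Bigl[\textstyle\sum_{t=0}^\infty \gamma^t\bigl(r_i^t + \gamma V_i^\pi(s_{t+1}) - V_i^\pi(s_t)\bigr)\Bigr].
\]

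Second, I would collapse the per-step bracket. Conditioning on $(s_t, a_t)$ and invoking the Bellman definition $\mathbb{E}[r_i^t + \gamma V_i^\pi(s_{t+1}) \mid s_t, a_t] = Q_i^\pi(s_t, a_t)$ — this is the step that uses the fixed shared kernel $\mathcal{P}$ — the correction becomes $\mathbb{E}_{\tau \sim \pi'}[\sum_t \gamma^t (Q_i^\pi(s_t,a_t) - V_i^\pi(s_t))]$. Finally, I would convert the trajectory expectation into the discounted state-visitation measure $d_{\pi'}$ together with the action draw $a \sim \pi'$, absorbing $\sum_t \gamma^t \Pr(s_t = \cdot \mid \pi')$ into the normalization $(1-\gamma)^{-1} d_{\pi'}$; rewriting $V_i^\pi(s) = \mathbb{E}_{a \sim \pi} Q_i^\pi(s,a)$ then recovers the stated form with both expectations taken against the same $Q_i^\pi$, and the factor $1/(1-\gamma)$ emerging solely from normalizing the occupancy measure.

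I expect the main obstacle to be bookkeeping in the multi-agent reduction rather than any hard analytic estimate: one must keep $a$ denoting the joint action throughout and ensure that $V_i^\pi$ and $Q_i^\pi$ are defined with respect to the full joint profile $\pi$ (not agent $i$'s marginal), so that the one-step Bellman identity in the second step holds exactly and the telescoping does not secretly mix in off-profile continuations. The measure-theoretic interchanges (telescoping convergence and swapping sum with expectation) are the only genuinely technical points, and both are dispatched by the boundedness of $\mathcal{R}$ and $\gamma<1$.
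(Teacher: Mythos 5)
Your proposal is correct, and it is in fact the standard Kakade--Langford telescoping proof of the performance-difference lemma, lifted to the joint-action MDP exactly as you describe. It is worth noting that this is \emph{not} the route the paper takes. The paper's own proof expands both $V_i^{\pi'}(s)$ and $V_i^{\pi}(s)$ as discounted reward sums, compares them termwise under the two different state-visitation sequences $d_{\pi'}^k$ and $d_{\pi}^k$, and then explicitly invokes the step ``assuming the difference in state distributions is negligible (justified under the concentrability assumption)'' before regrouping into $Q_i^{\pi} - V_i^{\pi}$ terms. That is, the paper establishes the identity only up to an approximation that it argues is controlled, whereas the lemma is stated as an exact equality. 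Your telescoping argument --- writing the return under $\pi'$ as $V_i^{\pi}(s)$ plus the sum $\sum_t \gamma^t\bigl(r_i^t + \gamma V_i^{\pi}(s_{t+1}) - V_i^{\pi}(s_t)\bigr)$, collapsing each bracket to $Q_i^{\pi}(s_t,a_t) - V_i^{\pi}(s_t)$ via the one-step Bellman identity, and absorbing $\sum_t \gamma^t \Pr(s_t = \cdot \mid \pi')$ into $(1-\gamma)^{-1} d_{\pi'}$ --- avoids any comparison of state distributions and delivers the exact identity with no appeal to concentrability. The only hypotheses you need are bounded rewards and $\gamma < 1$ (for the telescoping limit and the dominated-convergence interchange), both of which the paper already assumes. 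Your bookkeeping caveat is also the right one: $Q_i^{\pi}$ and $V_i^{\pi}$ must be defined with respect to the full joint profile so that the Bellman step is exact. In short, your proof is not only valid but strictly cleaner than the paper's, since it proves the equality as stated rather than an approximate version of it.
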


Note that the Q-function $Q_i^{\pi}(s,a)$ appearing in this lemma will be approximated by neural networks in our implementation, as detailed in Sec.~\ref{sec:ECON_framework}. Applying this lemma to our regret analysis yields \citep{jin2020provably,fujimoto2018addressing}:
\begin{align*}
R(T) 
&= \sum_{i=1}^N \frac{1}{1 - \gamma} \mathbb{E}_{s_t, \pi_t} \Bigl[ \sum_{t=1}^T \bigl( \mathbb{E}_{a_t^{*} \sim \pi^*} Q_i^{\pi_t}(s_t, a_t^{*}) \\
&\qquad - \mathbb{E}_{a_t \sim \pi_t} Q_i^{\pi_t}(s_t, a_t) \bigr) \Bigr],
\end{align*}
\vspace{-3pt}
where $\pi^*$ represents the BNE policies. By bounding the estimation error $\epsilon_t$ and suboptimality $\delta_t$, we show
\[
\mathbb{E}_{a_t^{*}\!\sim \pi^*} 
Q_i^{\pi_t}(s_t, a_t^{*})
\,-\,
\mathbb{E}_{a_t\!\sim\!\pi_t} 
Q_i^{\pi_t}(s_t, a_t)
\;\le\;
2\epsilon_t 
+ 
\delta_t,
\]
where $\epsilon_t = O(1/\sqrt{t})$ bounds the Q-function estimation error and $\delta_t = O(1/\sqrt{t})$ measures policy suboptimality. Under standard regularity conditions, these errors can be bounded by constants $C_{\epsilon}$ and $C_{\delta}$ respectively, leading to
\[
R(T)
\;\le\;
\sum_{i=1}^N 
\frac{1}{1 - \gamma} 
\Bigl( 2C_{\epsilon} + C_{\delta} \Bigr)
\sum_{t=1}^T 
\frac{1}{\sqrt{t}}
=
O\!\Bigl(\frac{N \sqrt{T}}{1 - \gamma}\Bigr).
\]

\section{BNE Implementation with ECON}
\label{sec:ECON_framework}

We present ECON that satisfies the DEC-POMDP architecture in this section. Our framework satisfies the assumptions in Appendix~\ref{appendix:Assumption_of_regret_bound}, enabling the application of Lemma~\ref{lemma:performance_difference} to bound Bayesian regret. As illustrated in Figure~\ref{fig:Main framework}, ECON adopts a \emph{Coordinator-Executor} architecture where multiple Execution LLMs operate locally under the guidance of a Coordinator LLM, and comprises two phases: \emph{Inference} and \emph{Optimization}. Below we first introduce the core modules that collectively implement our DEC-POMDP design, then describe the complete training process in Algorithm~\ref{alg:optimization}.

\begin{figure*}[t]
    \centering
    \vspace{-4pt}
    \includegraphics[width=1.0\textwidth]{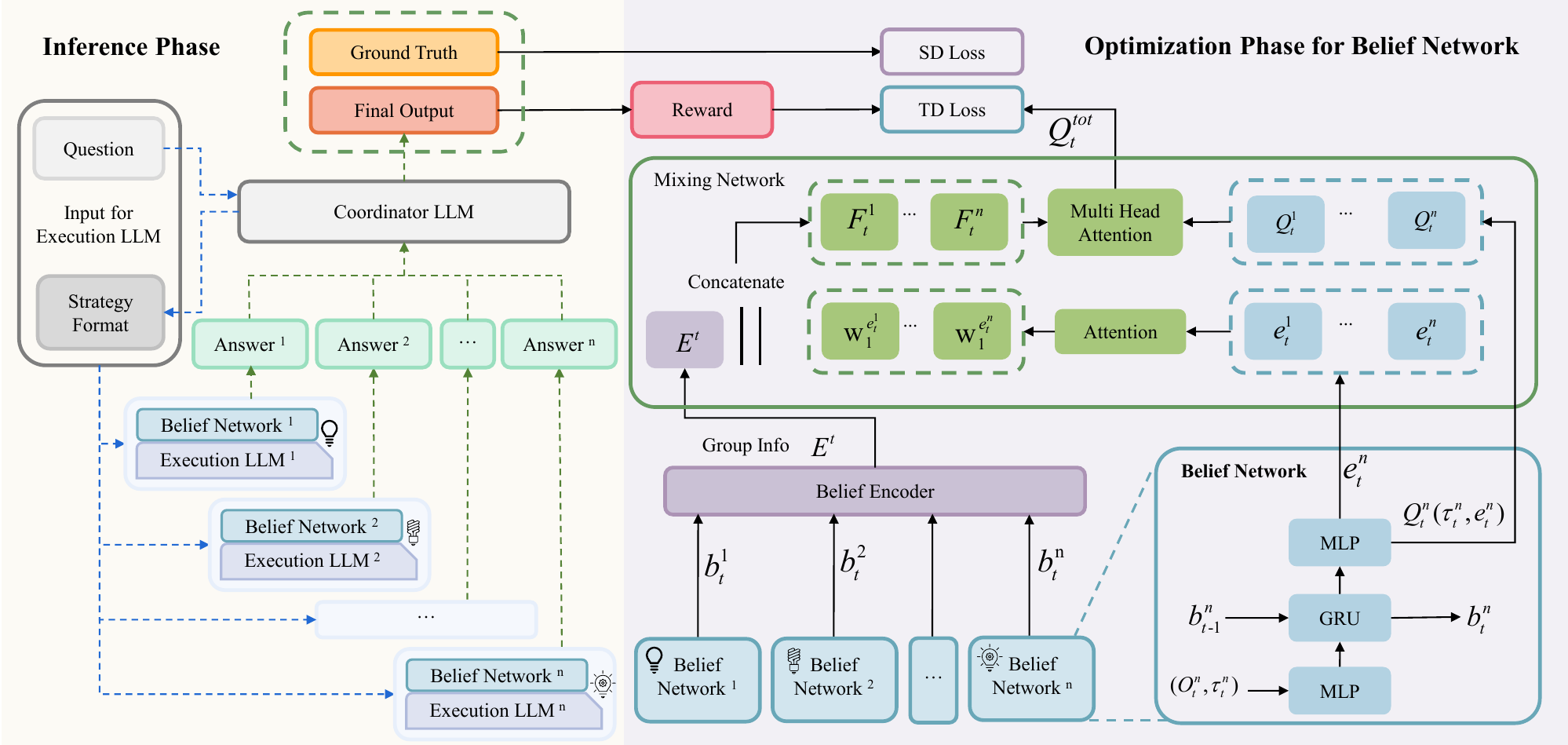}
  
    \caption{
    ECON Framework: The \emph{inference} procedure (left) shows how the Coordinator LLM processes and manages Execution LLMs' responses. The \emph{optimization} procedure (right) illustrates the parameter process of the belief network.}
    \label{fig:Main framework}
    \vspace{-10pt}
\end{figure*}

\subsection{Inference Phase} 

During the inference phase of ECON, a Coordinator LLM generates an informative strategy and a format based on the input question. These are then disseminated to the Execution LLMs, which independently produce their respective answers. Then Coordinator LLM aggregates these answers to form a final output, as illustrated in the left part of Figure~\ref{fig:Main framework}.A detailed case study demonstrating this inference process is provided in Appendix \ref{appendix:Example}. To implement the theoretical policy $\pi_i: \mathcal{H}_i \to \Delta(\mathcal{A}_i)$ from DEC-POMDP formulation, each Execution LLM maintains a belief network that maps its local history to actions. The following subsections detail how ECON collectively approximates towards BNE.

\subsection{Individual Belief Network} 

Each Execution LLM $i$ maintains a belief network $B_i(\tau_i^t, o_i^t; \theta_i^B)$ that implements its policy $\pi_i$ by mapping local trajectory $\tau_i^t$ and current observation $o_i^t \in \mathcal{O}_i$ to a belief state $\mathbf{b}_i \in \mathbb{R}^d$. The belief state captures the agent's understanding of the environment and other agents' behaviors under partial observability, enabling strategic decision-making without direct access to others' outputs.
The belief state $\mathbf{b}_i$ is further used to generate the action $a_i^t \in \mathcal{A}_i$, represented as prompt embedding $\mathbf{e}_i = [T_i, p_i]$. While $\mathbf{e}_i$ is a 2-dimensional vector, we term it "embedding" as it embeds the agent's strategic decisions into the prompt control space, influencing the LLM's generation. We define:
\begin{align*}
T_i &= T_{\text{min}} + (T_{\text{max}} - T_{\text{min}}) \cdot \sigma\left(W_T\,\mathbf{b}_i + b_T\right), \\
p_i &= p_{\min} + (p_{\max} - p_{\min}) \cdot \sigma\left(W_p\,\mathbf{b}_i + b_p\right).
\end{align*}
where $\sigma(\cdot)$ is the sigmoid function. 
Here, $T_i$ modulates the temperature of token sampling, and $p_i$ sets a penalty threshold for repetition.
The belief network $B_i$ has two outputs: (1) the prompt embedding $\mathbf{e}_i$ that serves as the action in our DEC-POMDP framework, and (2) a local Q-value $Q_i^t(\tau_i^t, \mathbf{e}_i^t; \phi_i)$ that estimates the expected return from the current belief state. The belief state $\mathbf{b}_i$ is also passed to the belief encoder for group-level processing.

To optimize the belief network, we apply a TD loss to update the parameters $\theta_i^B$, as illustrated in the right part of Figure~\ref{fig:Main framework}. The belief network parameters $\theta_i^B = \{\phi_i, W_T, b_T, W_p, b_p\}$ include Q-value function parameters $\phi_i$ and prompt embedding parameters. Thus, we have:
\begin{align*}
\mathcal{L}_{\text{TD}}^i(\theta_i^B) 
= \mathbb{E}_{\mathcal{D}} & \Biggl[
    \biggl(
        r_i^t 
        +\, \gamma \max_{\mathbf{e}_i^{t+1}} Q_i^{t+1}(\tau_i^{t+1}, \mathbf{e}_i^{t+1}; \phi_i') \\
        & \qquad -\, Q_i^t(\tau_i^t, \mathbf{e}_i^t; \phi_i)
    \biggr)^2
\Biggr],
\end{align*}
where $r_i^t = \mathcal{R}(s^t, a^t)_i$ is the local reward signal and $\phi_i'$ denotes target network parameters (updated via soft update). By minimizing $\mathcal{L}_{\text{TD}}^i$, Execution LLM $i$ refines its belief state to improve local decision making.

\subsection{Belief Encoder}
A shared \emph{belief encoder}, $f_e(\cdot;\theta_e)$, aggregates the belief states from all agents to produce a group-level representation $\mathbf{E} = f_e(\{\mathbf{b}_i\}_{i=1}^N;\,\theta_e)$. This encoder enables agents to implicitly coordinate their beliefs, facilitating convergence to BNE. We employ multi-head attention with $H$ heads to capture inter-agent dependencies in belief states:
\[
\text{head}_h 
= 
\text{Attention} 
\bigl(
W_h^Q\,\mathbf{b},\,
W_h^K\,\mathbf{b},\,
W_h^V\,\mathbf{b}
\bigr),
\]
where $\mathbf{b} = [\mathbf{b}_1; \ldots; \mathbf{b}_N] \in \mathbb{R}^{Nd}$ concatenates $\{\mathbf{b}_i\}_{i=1}^N$. The final output is 
$
\mathbf{E} 
= 
\mathrm{Concat}(\text{head}_1,\ldots,\text{head}_H)\,W^O,
$
with $\{W_h^Q, W_h^K, W_h^V\}$ as learnable parameters and $W^O$ as the output projection. 
The belief encoder can also be regularized via 
$\mathcal{L}_e(\theta_e) = \mathcal{L}_{\text{TD}}^{\text{tot}}(\phi) + \lambda_e \sum_{i=1}^N \mathcal{L}_{\text{TD}}^i(\theta_i^B)$,
where $\mathcal{L}_{\text{TD}}^{\text{tot}}$ is the global TD objective.
This encoder captures higher-level dynamics among Execution LLMs for coherence.

\subsection{Centralized Mixing Network}
The \emph{Centralized Mixing Network} coordinates the integrated belief information from all Execution LLMs, facilitating global optimization toward BNE. Specifically, each agent's prompt embedding $\{\mathbf{e}_i^t\}_{i=1}^N$ is processed via self-attention to capture agent-agent dependencies, yielding intermediate embeddings $\{\mathbf{w}_i^t\}_{i=1}^N$. We then combine $\{\mathbf{w}_i^t\}_{i=1}^N$ with the group-level representation $\mathbf{E}^t$ to produce feature transformations $\{F_i^t\}_{i=1}^N$. This combination allows the network to integrate individual strategic decisions (captured in $\mathbf{e}_i^t$) with collective belief dynamics (captured in $\mathbf{E}^t$), enabling coordinated optimization. The local Q-values $\{Q_i^t\}_{i=1}^N$ and the transformed features $\{F_i^t\}_{i=1}^N$ are jointly fed into multi-head attention layers to compute a global Q-value $Q_{\text{tot}}^t$. This global Q-value function accounts for local-global interactions, ensuring that improvements in individual behavior also contribute. To train the mixing network, we minimize:
\[
\mathcal{L}_{\text{mix}}(\phi) = \mathcal{L}_{\text{TD}}^{\text{tot}}(\phi) \;+\; \mathcal{L}_{\text{SD}} \;+\; \lambda_m \sum_{i=1}^N \|Q_i^t - Q_{\text{tot}}^t\|^2,
\]
where $\mathcal{L}_{\text{TD}}^{\text{tot}}(\phi)$ aligns $Q_{\text{tot}}^t$ with the global reward $r_{\text{tot}}$:
\begin{align*}
\mathcal{L}_{\text{TD}}^{\text{tot}}(\phi) = \mathbb{E}_{\mathcal{D}} & \Bigl[ \bigl( r_{\text{tot}} +\, \gamma \max_{\{\mathbf{e}_i^{t+1}\}_{i=1}^N} Q_{\text{tot}}^{t+1}(\tau_{t+1}, \{\mathbf{e}_i^{t+1}\}_{i=1}^N; \phi') \\
& \qquad -\, Q_{\text{tot}}^t(\tau_t, \{\mathbf{e}_i^t\}_{i=1}^N; \phi) \bigr)^2 \Bigr],
\end{align*}
while a similarity difference loss $\mathcal{L}_{\text{SD}}$ (e.g.\ $\lambda_b \sum_{i=1}^N (1 - \text{sim}(F_i^t, C))^2$) aligns agent features with the coordinator's final output $C$. $\|Q_i^t - Q_{\text{tot}}^t\|^2$ ensures local Q-values remain consistent with the global estimate. Target parameters $\phi'$ are updated by a soft update rule $\phi' \leftarrow \tau \phi + (1-\tau)\phi'$. As a result, the mixing network optimizes local policies to improve the global objective, promoting stable convergence (see Appendix~\ref{sec:proof_mixing_monotonicity} for a monotonicity proof).

\subsection{Reward Design}

The reward function $\mathcal{R}_{\text{design}}$ consists of three components, all bounded by $R_{\max}$ per Assumption~\ref{assumption:bounded_rewards}. The Action Likelihood Reward $r_i^{\text{AL}} = \min(R_{\max}, \text{sim}(u_i, C))$ measures final output consistency via cosine similarity $\text{sim}(u_i, C) = \frac{u_i \cdot C}{\|u_i\| \|C\|}$~\citep{zhu2023principled}. The Task-Specific Reward $r_i^{\text{TS}} = \min(R_{\max}, \text{eval}(u_i, \text{task}))$ evaluates domain-specific performance through normalized scoring~\citep{hao2023reasoning}. The Collaborative Contribution Reward $r_i^{\text{CC}} = \min(R_{\max}, \text{quality}(u_i, \{u_j\}_{j \neq i}))$ assesses each agent's contribution to the collective solution~\citep{xie2024self}. The total reward is computed as $r_i = \alpha_1 r_i^{\text{AL}} + \alpha_2 r_i^{\text{TS}} + \alpha_3 r_i^{\text{CC}}$, where $\alpha_1 + \alpha_2 + \alpha_3 = 1$. These weights are dynamically adjusted via gradient updates $\alpha_k \leftarrow \alpha_k - \eta_{\alpha} \cdot \partial \mathcal{L}_{\text{dr}} / \partial \alpha_k$, with $\mathcal{L}_{\text{dr}} = \sum_{i=1}^N ( r_i^{\text{actual}} - r_i^{\text{expected}} )^2$.

\subsection{Early Stopping}

To ensure efficient optimization and convergence to stable solutions, early stopping is implemented based on three key criteria. 
First, final output stability is achieved when the change in the coordinator's output satisfies $\|\Delta C\| = \|C_{t+1} - C_t\| \leq \epsilon_C$. 
Second, reward convergence is monitored such that the average reward across agents reaches a predefined threshold, $\frac{1}{N}\sum_{i=1}^N r_i \geq R_{\text{threshold}}$. 
Lastly, loss convergence is ensured when the total loss stabilizes, satisfying $|L_{\text{tot}}^{t+1} - L_{\text{tot}}^t| \leq \epsilon_L$, where $L_{\text{tot}}$ is the sum of individual agent losses $\sum_{i=1}^N L_i$, execution loss $L_e$, and the mixing loss $L_{\text{mix}}$. 
These criteria monitor the optimization process, ensuring both strategic alignment and performance while preventing premature termination.

\begin{algorithm}[htbp]
\caption{Belief Network Training Algorithm}
\label{alg:optimization}
\begin{algorithmic}[1] 
\REQUIRE Question, $\{\mathrm{ExecLLM}_i\}_{i=1}^N$, Coordinator LLM, thresholds $\{\epsilon_C, R_{\text{th}}, \epsilon_L\}$ 
\ENSURE Optimized parameters $\{\theta_i^B\}_{i=1}^N$, $\theta_e$, $\phi$ 
\STATE \textbf{Initialize:} $\{\theta_i^B\}_{i=1}^N$, $\theta_e$, $\phi$ 
\IF{\textbf{not converged}} 
    \FOR{each LLM $i$} 
         \STATE $\mathbf{b}_i \gets B_i(\tau_i,\,o_i;\,\theta_i^B)$
        \STATE $\mathbf{e}_i \gets \text{ComputeEmbedding}(\mathbf{b}_i)$
         \STATE $u_i \gets \mathrm{ExecLLM}_i(\text{query}, \mathbf{e}_i)$
         \STATE $r_i = \alpha_1 r_i^{\mathrm{AL}} + \alpha_2 r_i^{\mathrm{TS}} + \alpha_3 r_i^{\mathrm{CC}}$
    \ENDFOR 
    \STATE $\mathbf{E} \gets f_e(\{\mathbf{b}_i\}_{i=1}^N;\,\theta_e)$
    \STATE $C \gets \mathrm{Coordinator}(\{u_i\}_{i=1}^N,\,\mathbf{E})$
    \STATE Update $\{\theta_i^B\}_{i=1}^N$ via $\mathcal{L}_{\mathrm{TD}}^i$; $\theta_e$ via $\mathcal{L}_e$; $\phi$ via $\mathcal{L}_{\mathrm{mix}}$
    \STATE converged $\gets$ $\|C-C_{\text{prev}}\|\le\epsilon_C$ \textbf{and} $R_{\mathrm{avg}}\ge R_{\text{th}}$ \textbf{and} $|\Delta L_{\mathrm{tot}}|\le\epsilon_L$
\ENDIF 
\STATE \textbf{return} $\{\theta_i^B\}_{i=1}^N$, $\theta_e$, $\phi$
\end{algorithmic}
\end{algorithm}

\section{Experiment}
\label{sec:experiment}

In this section, we present the experiment setup in Sec.~\ref{sec:exp-setup}, demonstrate the method effectiveness in Sec.~\ref{main result}, validate the heterogeneous results in Sec.~\ref{additional_result}, test scale-up capability in Sec.~\ref{scale up result}, and conduct ablation studies in Sec.~\ref{ablation study}.

\subsection{Setups}
\label{sec:exp-setup}

\subsubsection{Models and Datasets.} We evaluate $6$ released opensourced LLMs: LLaMA3.1 8B~\citep{dubey2024LLaMA}, LLaMA3.1 70B, Mistral-7B~\citep{jiang2023mistral}, LLaMA3.1 405B, Mixtral-8x22B~\citep{jiang2024mixtral} and Qwen1.5 110B~\citep{qwen2} across $5$ reasoning tasks, including $4$ mathematical datasets (GSM8K~\citep{cobbe2021training}, GSM-Hard~\citep{gao2023pal}, MATH~\citep{hendrycks2021measuring}, SVAMP~\citep{patel2021nlp}) and one commonsense reasoning dataset (StrategyQA~\citep{geva2021did}). Then, we evaluate GPT4 turbo)~\citep{achiam2023gpt} in a very challenging planning task~(Travelplanner~\citep{xie2024travelplanner}) to further validate the performance. The details of benchmarks can be found in Appendix~\ref{appendix:task setups}.

\subsubsection{Compared Methods and Benchmarks.}
We compare ECON against several strong baseline types widely adopted: (i) single-round CoT prompting, including zero-shot and few-shot CoT~\citep{kojima2022large, wei2022chain}; (ii) multi-round CoT prompting, Self Consistency (SC)~\citep{wang2023self} method, where we sample answers 64 times and employ majority voting for answer selection; (iii) value-guided search approaches with learned action-value functions, including  TS-LLM~\citep{feng2023alphazero} which leverages AlphaZero-style value networks for MCTS, and PPO-MCTS~\citep{liu2024don} which learns value models to evaluate generation quality in tree search; (iv) multi-round self-improving approaches, using ToT~\citep{yao2024tree}, RAP~\citep{hao2023reasoning} and React~\citep{yao2022react}, with BFS and MCTS for tree search, respectively, following their original answer selection; and (v) multi-LLM reasoning frameworks, including rStar~\citep{qi2024mutual} and multi-agent debate~\citep{du2023improving}.

\subsubsection{ECON Setups.}
In this section, the ECON framework includes one Coordinator and three Execution LLMs. The hyperparameters for training can be found in Appendix~\ref{assumption:hyperparameter}. To ensure a fair comparison with the baseline, we use four identical models for these LLMs. For heterogeneous results, we also evaluate ECON with different models in Table~\ref{tab:experiment_results}. All evaluations are conducted in a zero-shot setting, with a general prompt provided in Appendix~\ref{appendix:Prompt}. Notably, while we set a $50$-token constraint for the coordinator's strategy generation, considering that LLMs may not strictly follow length instructions~\citep{yuan2024following}, who showed that $95\%$ of responses stay within $1.4\times$ and $50\%$ within $1.0\times$ of the specified length, we implement a $70$-token hard cutoff with regeneration mechanism to controls the token usage.


\subsection{Main Result}
\label{main result}
Figure~\ref{fig:main_result} presents the average accuracy comparison of each method across four mathematical reasoning datasets and one commonsense reasoning dataset. Detailed accuracy comparisons for each dataset can be found in Appendix~\ref{appendix:Prompt}. The empirical results demonstrate that ECON outperforms most baselines on all complex reasoning benchmarks. On average, ECON outperforms the single-round method Zero-shot CoT by \( 25.6\% \), Few-shot CoT by \( 6.3\% \), multi-round CoT prompting SC by \( 10.9\% \), multi-round self-improving approaches ToT by \( 11.2\% \), and multi-LLM reasoning frameworks rStar by \( 6.4\% \).

Furthermore, when evaluated on the very challenging Travelplanner benchmark using GPT-4-Turbo in Table~\ref{tab:travel-planner-results}, ECON enhanced the final pass rates to $7.2\%$ on the validation set and $9.3\%$ on the test set, while compared to $2.3\%$ and $3.7\%$ achieved by a three-round multi-agent debate. 

These results demonstrate that ECON effectively leverages the capabilities of more powerful models and outperforms alternative reasoning optimization methods in complex tasks. Additionally, we provide a corresponding example for MATH, which are available in Appendix~\ref{appendix:Example}. Note that ECON uses fewer tokens compared to multi-round CoT prompting SC, multi-round self-improving approaches ToT, and MAD, meanwhile achieved performance improvements. 

\begin{figure}[t!]
    \centering
    \includegraphics[width=0.5\textwidth]{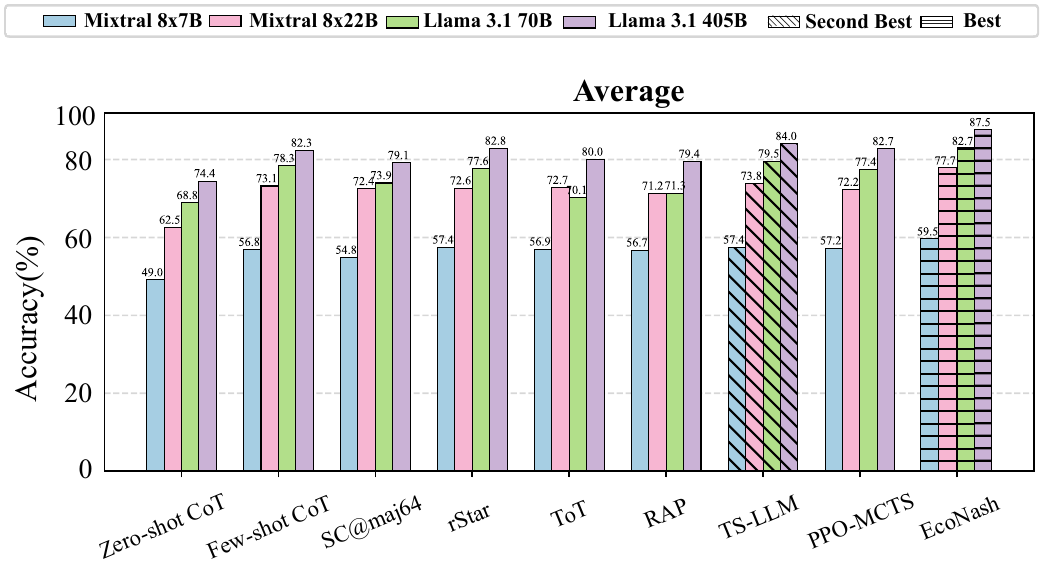}%
    \vspace{-5pt}
    \caption{Average results of five reasoning datasets: GSM8K,GSM-Hard, SVAMP, Strategy QA and MATH.}
    \label{fig:main_result}
    \vspace{-10pt}
\end{figure}

\begin{table*}
\centering
\caption{
Empirical results on the TravelPlanner dataset, along with some leaderboard rankings, are presented.}
\vspace{-6pt}
\label{tab:travel-planner-results}  
\fontsize{8}{10}\selectfont  
\setlength\tabcolsep{4pt} 
\begin{tabular}{l|ccccc|c|ccccc|c}
\toprule
& \multicolumn{6}{c|}{Validation (\#180)} & \multicolumn{6}{c}{Test (\#1,000)} \\
\cline{2-13}
& \multirow{2}{*}{\begin{tabular}[c]{@{}c@{}}Delivery\\Rate\end{tabular}} & \multicolumn{2}{c}{\begin{tabular}[c]{@{}c@{}}Commonsense\\Pass Rate\end{tabular}} & \multicolumn{2}{c}{\begin{tabular}[c]{@{}c@{}}Hard Constraint\\Pass Rate\end{tabular}} & \multirow{2}{*}{\begin{tabular}[c]{@{}c@{}}Final\\Pass Rate\end{tabular}} & \multirow{2}{*}{\begin{tabular}[c]{@{}c@{}}Delivery\\Rate\end{tabular}} & \multicolumn{2}{c}{\begin{tabular}[c]{@{}c@{}}Commonsense\\Pass Rate\end{tabular}} & \multicolumn{2}{c}{\begin{tabular}[c]{@{}c@{}}Hard Constraint\\Pass Rate\end{tabular}} & \multirow{2}{*}{\begin{tabular}[c]{@{}c@{}}Final\\Pass Rate\end{tabular}} \\
\cline{3-6}\cline{9-12}
& & Micro & Macro & Micro & Macro & & & Micro & Macro & Micro & Macro & \\
\hline
Greedy Search & 100 & 74.4 & 0 & 60.8 & 37.8 & 0 & 100 & 72.0 & 0 & 52.4 & 31.8 & 0 \\
\hline
\multicolumn{13}{c}{Two-stage} \\
\hline
Mixtral-8x7B-MoE & 49.4 & 30.0 & 0 & 1.2 & 0.6 & 0 & 51.2 & 32.2 & 0.2 & 0.7 & 0.4 & 0 \\
Gemini Pro & 28.9 & 18.9 & 0 & 0.5 & 0.6 & 0 & 39.1 & 24.9 & 0 & 0.6 & 0.1 & 0 \\
GPT-3.5-Turbo & 86.7 & 54.0 & 0 & 0 & 0 & 0 & 91.8 & 57.9 & 0 & 0.5 & 0.6 & 0 \\
GPT-4-Turbo & 89.4 & 61.1 & 2.8 & 15.2 & 10.6 & 0.6 & 93.1 & 63.3 & 2.0 & 10.5 & 5.5 & 0.6 \\
Debate (GPT-4) @3round & 95.2 & 67.3 & 6.7 & 22.7 & 13.1 & 2.3& 97.8 & 72.4 & 11.3 & 17.4 & 12.1 & 3.7\\
\rowcolor{blue!30} ECON (GPT-4) & \textbf{100}& \textbf{71.4}& \textbf{15.6}& \textbf{32.1}& \textbf{25.7}& \textbf{7.2}& \textbf{100}& \textbf{82.1}& \textbf{26.6}& \textbf{32.4}& \textbf{17.6}& \textbf{9.3}\\
\hline
\multicolumn{13}{c}{Sole-planning} \\
\hline
Direct$_{\text{GPT-3.5-Turbo}}$ & 100 & 60.2 & 4.4 & 11.0 & 2.8 & 0 & 100 & 59.5 & 2.7 & 9.5 & 4.4 & 0.6 \\
CoT$_{\text{GPT-3.5-Turbo}}$ & 100 & 66.3 & 3.3 & 11.9 & 5.0 & 0 & 100 & 64.4 & 2.3 & 9.8 & 3.8 & 0.4 \\
ReAct$_{\text{GPT-3.5-Turbo}}$ & 82.2 & 47.6 & 3.9 & 11.4 & 6.7 & 0.6 & 81.6 & 45.9 & 2.5 & 10.7 & 3.1 & 0.7 \\
Reflexion$_{\text{GPT-3.5-Turbo}}$ & 93.9 & 53.8 & 2.8 & 11.0 & 2.8 & 0 & 92.1 & 52.1 & 2.2 & 9.9 & 3.8 & 0.6 \\
Direct$_{\text{Mixtral-8x7B-MoE}}$ & 100 & 68.1 & 5.0 & 3.3 & 1.1 & 0 & 99.3 & 67.0 & 3.7 & 3.9 & 1.6 & 0.7 \\
Direct$_{\text{Gemini Pro}}$ & 93.9 & 65.0 & 8.3 & 9.3 & 4.4 & 0.6 & 93.7 & 64.7 & 7.9 & 10.6 & 4.7 & 2.1 \\
Direct$_{\text{GPT-4-Turbo}}$ & \textbf{100}& 80.4 & 17.2 & 47.1 & 22.2 & 4.4 & \textbf{100}& 80.6 & 15.2 & 44.3 & 23.1 & 4.4 \\
Debate (GPT-4) & 97.7 & 78.9 & 15.6 & 43.3 & 20.6 & 6.7 & 98.2 & 79.5 & 18.8& 41.7 & 22.9& 7.1 \\
\rowcolor{blue!30} ECON (GPT-4) & \textbf{100}& \textbf{83.3}& \textbf{22.2}& \textbf{51.7}& \textbf{27.8}& \textbf{12.9}& \textbf{100}& \textbf{84.2}& \textbf{23.5}& \textbf{49.8}& \textbf{28.7}& \textbf{15.2}\\
\bottomrule
\end{tabular}
\vspace{-10pt}
\end{table*}

\vspace{-10pt}
\subsection{Model Configuration and Cost Efficiency Analysis }
\label{additional_result}
To evaluate the impact of both the Coordinator LLM and Execution LLM performance on the ECON framework and find whether heterogeneous Execution LLMs can also achieve a BNE, we conducted two types of experiments: one pairing a strong Coordinator LLM with weaker Execution LLMs, and another pairing a weak Coordinator LLM with stronger Execution LLMs. These experiments were further divided into homogeneous and heterogeneous execution groups for detailed analysis. To ensure a fair comparison, the Coordinator LLM was consistently set to LLaMA3.1 70b across all experiments. For the heterogeneous execution group, we used the following configurations: LLaMA 3.1 8b, LLaMA 3 8b, and Mixtral 7b, as well as another configuration consisting of Mixtral 8$\times$22b, Qwen1.5 110b, and LLaMA3.1 405b. For the homogeneous execution group, two configurations were tested: one with three weak models (LLaMA 3.1 8b), and another with three strong models LLaMA 3.1 405b. Experimental results indicate that stronger Execution LLMs improve performance by providing higher-quality answers and achieving BNE more efficiently. Additionally, heterogeneous model perform worse than homogeneous models due to increased challenges in reaching BNE, but still outperform baseline Few-shot CoT .

\begin{table}[t!]
    \centering
    \caption{Performance of different configurations.}
    \vspace{-6pt}
    \label{tab:experiment_results}
    \fontsize{7}{8}\selectfont
    \setlength\tabcolsep{2pt}
    \begin{tabular}{lcc}
        \toprule
        \textbf{Method} & \textbf{GSM-Hard} & \textbf{MATH} \\
        \midrule
        \multicolumn{3}{l}{\textbf{Baselines}} \\
        ECON & 51.43 & 81.47 \\
        LLaMA 3.1 7B (Few-shot CoT) & 42.23 & 62.71 \\
        \midrule
        \multicolumn{3}{l}{\textbf{ECON Configurations}} \\
        Homo. (3× LLaMA3.1 8B) & 48.71 & 67.70 \\
        Homo. (3× LLaMA3.1 405B) & 61.29 & 89.24 \\
        Hetero. (LLaMA3.1 8B, LLaMA3 8B, Mixtral 7B)& 45.24 & 74.24 \\
        Hetero. (Mixtral 8×22B, Qwen1.5 110B, LLaMA3.1 405B) & 55.73 & 85.46 \\
        \bottomrule
    \end{tabular}
    \vspace{-15pt}
\end{table}

To assess the cost efficiency of the ECON framework, Table~\ref{tab:token_usage} presents the average token usage of ECON, Multi-Agent Debate, RAP, and SC across the MATH, GSM8K, and GSM-Hard datasets for three models: LLaMA 3.1 70B, Mixtral 8x7B, and Mixtral 8x22B. The results demonstrate that ECON reduces token consumption by an average of $21.4\%$ compared to Multi-Agent Debate ($3$ rounds).
Notably, when the Coordinator LLM provides detailed strategies with answer(as shown in the token consumption data in Table~\ref{tab:token_usage}), token usage increases an average of 112\% higher as the full strategy processing.
\begin{figure*}[t!]
    \centering
    \includegraphics[width=1\textwidth]{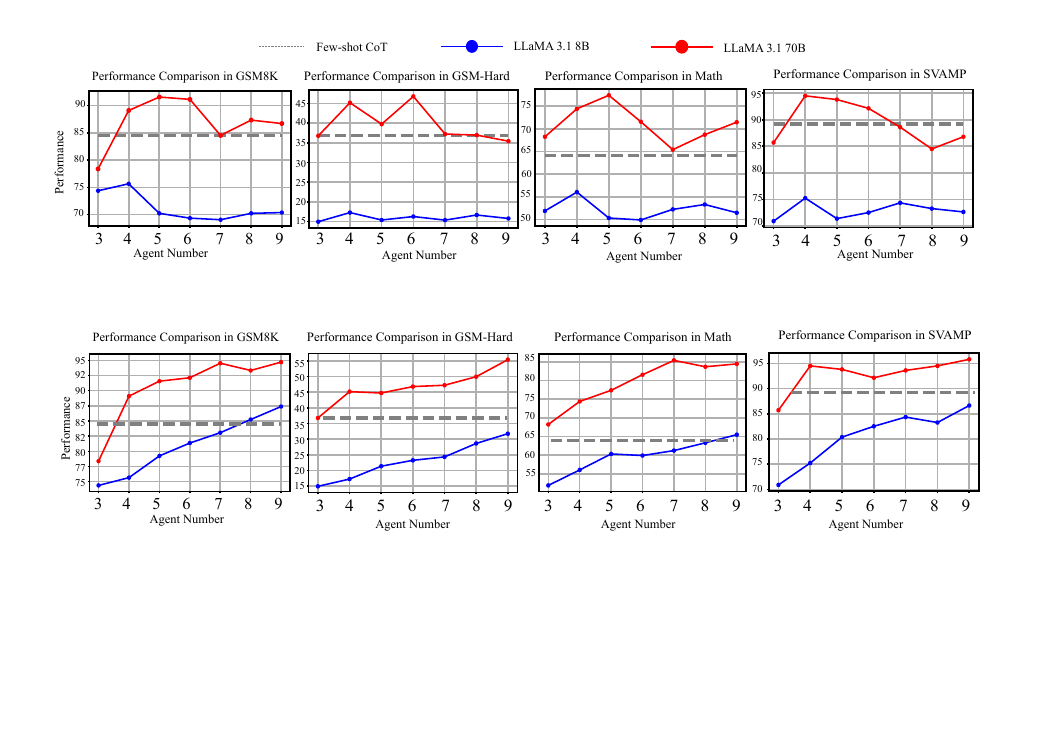}
    \vspace{-15pt}
    \caption{Scaling up our framework with a single coordinator while increasing the number of Execution LLMs in 5 datasets.}
    \label{fig:scale up with one coordinator}
    \vspace{-5pt}
\end{figure*}

\begin{figure*}[t!]
    \centering
    \includegraphics[width=1\textwidth]{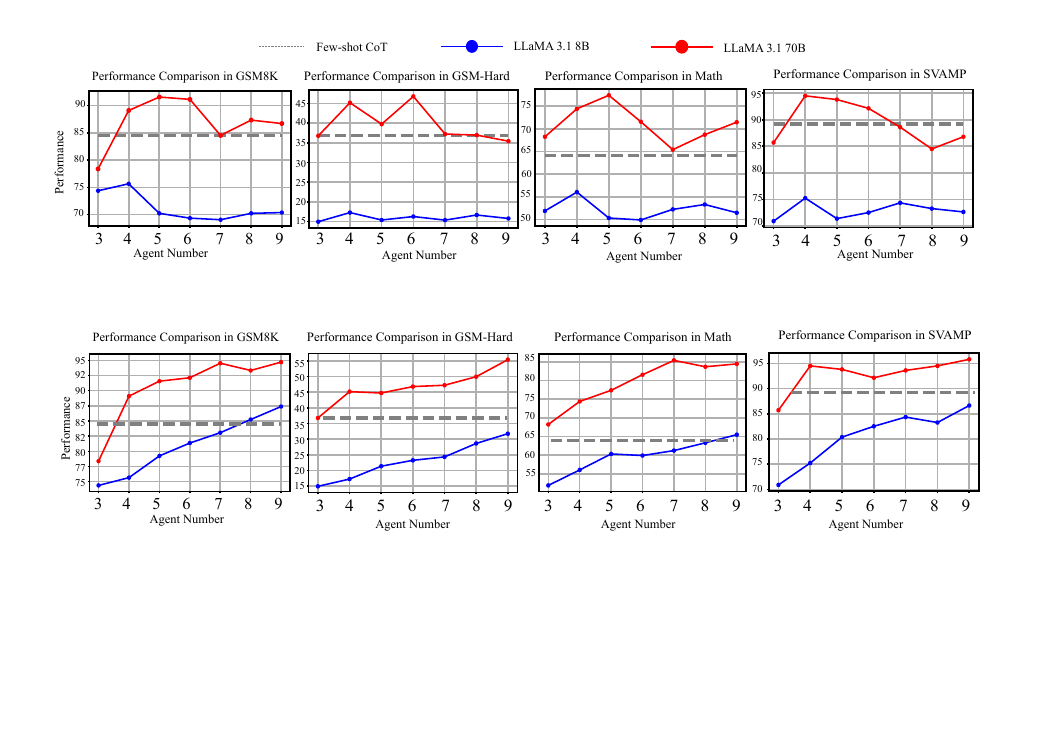}
    \vspace{-15pt}
    \caption{Scaling up by increasing the number of coordinators in proportion to the number of Execution LLMs in 5 datasets.
}

    \label{fig:scale up with several coordinator}
    \vspace{-10pt}
\end{figure*}

\begin{table*}[htbp]
\centering
\caption{Average token usage and performance comparison in the MATH, GSM8K, and GSM-Hard.}
\vspace{-6pt}
\label{tab:token_usage}
\fontsize{8}{10}\selectfont
\setlength\tabcolsep{4pt}
\begin{tabular}{ll*{6}{c}}
\toprule
\multirow{2}{*}{Dataset} & \multirow{2}{*}{Inference Strategy} & \multicolumn{2}{c}{LLaMA3.1 70B} & \multicolumn{2}{c}{Mixtral 8x7b} & \multicolumn{2}{c}{Mixtral 8x22b} \\
\cmidrule(lr){3-4} \cmidrule(lr){5-6} \cmidrule(lr){7-8}
 & & Token Usage & Performance & Token Usage & Performance & Token Usage & Performance \\
\midrule
\multirow{5}{*}{MATH}
 & Multi-Agent Debate (3 rounds) & 2154.87 & 71.58 & 1462.12 & 31.28 & 5345.56 & 67.41 \\
 & RAP & 2653.27 & 68.71 & 1737.73 & 33.99 & 6668.55 & 62.53 \\
 & \cellcolor{blue!10}ECON (with detailed strategy) & \cellcolor{blue!10}3270.06 & \cellcolor{blue!10}72.38 & \cellcolor{blue!10}2150.23 & \cellcolor{blue!10}26.18 & \cellcolor{blue!10}8054.03 & \cellcolor{blue!10}68.23 \\
 & Self Consistency (64 rounds) & 11917.00 & 67.39 & 8066.21 & 31.58 & 29616.13 & 62.21 \\
 & \cellcolor{blue!30}ECON & \cellcolor{blue!30}1629.79 & \cellcolor{blue!30}81.47 & \cellcolor{blue!30}1128.23 & \cellcolor{blue!30}35.02 & \cellcolor{blue!30}4270.86 & \cellcolor{blue!30}72.29 \\
\midrule
\multirow{5}{*}{GSM8K}
 & Multi-Agent Debate (3 rounds) & 1391.57 & 86.32 & 1463.40 & 70.19 & 5714.05 & 81.95 \\
 & RAP & 1907.86 & 81.33 & 1248.66 & 72.03 & 6517.77 & 76.97 \\
 & \cellcolor{blue!10}ECON (with detailed strategy) & \cellcolor{blue!10}2772.24 & \cellcolor{blue!10}85.17 & \cellcolor{blue!10}1188.13 & \cellcolor{blue!10}65.37 & \cellcolor{blue!10}9341.60 & \cellcolor{blue!10}81.46 \\
 & Self Consistency (64 rounds) & 9574.25 & 89.56 & 6601.34 & 71.08 & 24671.91 & 86.24 \\
 & \cellcolor{blue!30}ECON & \cellcolor{blue!30}1131.65 & \cellcolor{blue!30}92.70 & \cellcolor{blue!30}1284.98 & \cellcolor{blue!30}76.97 & \cellcolor{blue!30}4715.31 & \cellcolor{blue!30}88.20 \\
\midrule
\multirow{5}{*}{GSM-Hard}
 & Multi-Agent Debate (3 rounds) & 3030.73 & 41.98 & 1478.14 & 20.04 & 9250.78 & 45.21 \\
 & RAP & 1768.72 & 38.97 & 1036.11 & 22.47 & 6464.52 & 42.79 \\
 & \cellcolor{blue!10}ECON (with detailed strategy) & \cellcolor{blue!10}3662.64 & \cellcolor{blue!10}44.12 & \cellcolor{blue!10}2239.07 & \cellcolor{blue!10}18.52 & \cellcolor{blue!10}11464.98 & \cellcolor{blue!10}41.04 \\
 & Self Consistency (64 rounds) & 16724.69 & 39.76 & 11668.19 & 22.47 & 74544.25 & 44.19 \\
 & \cellcolor{blue!30}ECON & \cellcolor{blue!30}1518.76 & \cellcolor{blue!30}51.43 & \cellcolor{blue!30}1271.53 & \cellcolor{blue!30}25.76 & \cellcolor{blue!30}7101.62 & \cellcolor{blue!30}47.58 \\
\bottomrule
\end{tabular}
\vspace{-10pt}
\end{table*}

We would like to point out that ECON does not eliminate inter-agent communication entirely, but rather adopts an incomplete-information perspective that minimizes communication. ECON's optimization objective focuses on achieving consensus among Execution LLMs, consensus achieved as the result of implicit communication. We make an additional experiment to demonstrate incorporating direct complete interaction into ECON (which makes it become a complete information formation). The performance improve 1.1\% while the token consumption increase 42.4\% .
\begin{table}[t!]
    \centering
    \caption{Comparison between Token Consumption and Efficiency of ECON w/o complete information in LLaMA 3.1.}
    \vspace{-8pt}
    \label{tab:token_consumption}
    \fontsize{8}{8}\selectfont
    \setlength\tabcolsep{6pt}  
    \begin{tabular}{lcc}
        \toprule
        \textbf{Dataset \& Model} & \textbf{Complete Info} & \textbf{Consumption} \\
        \midrule
        GSM8K - LLaMA 3.1 8B & 81.4 & 80.3 (+35.6\%) \\
        GSM8K - LLaMA 3.1 70B & 96.1 & 96.7 (+42.7\%) \\
        GSM-Hard - LLaMA 3.1 8B & 30.2 & 29.9 (+62.3\%) \\
        GSM-Hard - LLaMA 3.1 70B & 53.6 & 51.4 (+40.9\%) \\
        MATH - LLaMA 3.1 8B & 59.6 & 60.4 (+33.8\%) \\
        MATH - LLaMA 3.1 70B & 83.1 & 81.5 (+39.4\%) \\
        \bottomrule
    \end{tabular}
    \vspace{-4pt}
\end{table}

\subsection{Scale Up Result}\label{scale up result}

We analyzed the impact of varying the number of agents further to validate ECON across a broader range of LLMs. We conducted three sets of experiments on the MATH, GSM-Hard, SVAMP, and StrategyQA datasets, aiming to address three key questions: (1) To what extent can weaker LLMs be enhanced? (examined on LLaMA 3.1 8B), (2) Can stronger LLMs be further improved? (using LLaMA 3.1 70B), and (3) Should the number of Coordinator LLMs be increased along with the number of Execution LLMs? Starting from three Execution LLMs (as in the main results), we gradually increased the number of agents to nine. We used the few-shot CoT as the baseline (in grey line) as Figure~\ref{fig:scale up with one coordinator}. The results suggest that beyond four Execution LLMs, performance improvements were minimal, and in some cases, performance even declined. We attribute this to the challenge faced by the Coordinator LLM in managing an excessive number of Execution LLMs, making it difficult to achieve coordination by information from additional agents. 

Instead of simply increasing the number of Execution LLMs, we enhance scalability by forming a global Nash Equilibrium through local Nash equilibria by introducing additional coordinators. This setup ensures that each Coordinator handles a reasonable amount of data. Specifically, each Coordinator manages up to $4$ Execution LLMs, forming outputs and guiding them toward local Nash equilibria. Furthermore, a central LLM was introduced to coordinate the multiple coordinators, facilitating the transition from local Nash equilibria to a global Nash Equilibrium (details in Appendix~\ref{alg:scaling_up_ECON}). We observed significant improvements across all benchmarks, both for weaker models (LLaMA 3.1 8B) and stronger models (LLaMA 3.1 70B). Compared to a system with $3$ Execution LLMs and one coordinator, the scaled-up system with $9$ Execution LLMs, $3$ coordinators, and a central LLM achieved $18.1\%$ improvement in Figure~\ref{fig:scale up with several coordinator}, which has potential to further scale up.

\subsection{Ablation Study}
\label{ablation study}

In the additional experiments, heterogeneous Execution LLMs experienced a slight performance decline. An intuitive explanation for this observation is that implementing BNE is more challenging for heterogeneous Execution LLMs. To verify the actual performance differences of the ECON framework before and after achieving BNE, we conducted experiments on three math reasoning benchmarks: GSM8K, GSM-Hard, and MATH. Results in Table~\ref{tab:bne_ablation} demonstrate that our framework achieved an average performance improvement of $14\%$ after implementing BNE.

Additionally, we performed ablation studies on various submodules, including the reward design and the setting where the Coordinator LLM provides a strategy without giving a direct answer, to ensure the validity of our architecture. All experiments were conducted with LLaMA 3.1 70B, tested on the MATH benchmark. Specifically, \( R_1 \) refers to the action likelihood reward, \( R_2 \) to the task-specific reward, and \( R_3 \) to the self-evaluation reward. \( S_1 \) represents the setting where the coordinator does not provide any strategy, while \( S_2 \) represents the setting where the coordinator provides detailed strategy, \( S_3 \) represents ECON as our baseline.

\begin{table}[t!]
\centering
\caption{Ablation on different reward and strategy settings.}
\vspace{-8pt}
\label{tab:bne_ablation}
\fontsize{8}{8}\selectfont
\setlength\tabcolsep{8pt}  
\begin{tabular}{cccc@{\hspace{1em}}cccc}
    \toprule
    \multicolumn{4}{c}{\textbf{Reward}} & \multicolumn{4}{c}{\textbf{Strategy}} \\
    \cmidrule(r){1-4} \cmidrule(l){5-8}
    $R_1$ & $R_2$ & $R_3$ & ECON & $S_1$ & $S_2$ & $S_3$ & ECON\\
    \midrule
    $\checkmark$ & $\times$ & $\checkmark$ & 77.55 & $\checkmark$ & $\times$ & $\times$ & 71.35 \\
    $\checkmark$ & $\times$ & $\times$     & 74.32 & $\times$ & $\checkmark$ & $\times$ & 72.31 \\
    $\checkmark$ & $\checkmark$ & $\times$ & 76.21 & $\times$ & $\times$ & $\checkmark$ & 81.47 \\
    \midrule
    \multicolumn{3}{c}{Random} & 62.71 & & & & \\
    \bottomrule
\end{tabular}
\vspace{-4pt}
\end{table}

\begin{table}[t!]
    \centering
    \caption{Ablation Study on Concatenate and Belief Encoder.}
    \vspace{-8pt}
    \label{tab:ablation study on mixing network}
    \fontsize{8}{8}\selectfont
    \setlength\tabcolsep{4pt}  
    \begin{tabular}{lcc}
        \toprule
        \textbf{Dataset \& Model} & \textbf{No Concatenate} & \textbf{No belief encoder} \\
        \midrule
        GSM8K - LLaMA 3.1 8B & 85.9 (-1.8) & 84.1 (-3.4) \\
        GSM8K - LLaMA 3.1 70B & 93.6 (-3.1) & 91.1 (-5.6) \\
        GSM-Hard - LLaMA 3.1 8B & 24.9 (-5.0) & 21.7 (-8.2) \\
        GSM-Hard - LLaMA 3.1 70B & 47.2 (-4.2) & 42.6 (-8.8) \\
        MATH - LLaMA 3.1 8B & 55.6 (-4.8) & 52.3 (-7.1) \\
        MATH - LLaMA 3.1 70B & 77.0 (-4.4) & 75.3 (-6.2) \\
        \bottomrule
    \end{tabular}
\vspace{-8pt}
\end{table}

Theoretically, the design of our mixing network optimizes local policies to improve the global objective, Monotonicity proofs are demonstrated in Appendix A.5. We make additional ablation experiments to validate this by removing the concatenation and removing the belief encoder in Table \ref{tab:ablation study on mixing network}.

\section{Related Work}

\textbf{Prompting Large Language Models to Reason.}
Large language models are significantly more capable of complex reasoning with the advancement of prompt techniques~\citep{wei2022chain, kojima2022large, zhou2022least, fu2022complexity, zhang2023automatic, wang2023self, li2023deepinception, chia2023contrastive,  cao2024envisioning, zhou2024can, wang2025rethinking, zhou2025landscape}. 
\citet{wei2022chain} introduced Chain-of-Thought (CoT) prompting, which presents step-by-step reasoning examples within the prompt. This enables the model to engage in explicit reasoning, enhancing its ability to follow the logical progression that leads to the correct answer.
Various extensions of CoT have been developed to improve reasoning performance further. Zero-shot CoT~\citep{kojima2022large} eliminates the need for manually constructing exemplars, prompting models with phrases like "Let's think step by step" to encourage reasoning. \citet{wang2023self} proposed self-consistency~(SC) sampling, where multiple reasoning paths are sampled, and the final answer is determined by majority voting. To enable LLMs to engage in deliberate decision-making, Tree of Thoughts (ToT)~\cite{yao2024tree} generates multiple potential answers at each reasoning step, building a tree of possible solutions. 

\textbf{Multi-agent Debate for Large Language Models Reasoning.} Various multi-agent debate strategies~\citep{du2023improving, chan2023chateval, liang2023encouraging, chen2023reconcile, smitshould, zhang2023exploring, pham2023let,li2023deepinception} have been developed to strengthen the reasoning ability of LLMs. \citet{du2023improving} introduced an approach where multiple instances of LLMs propose their individual reasoning processes, engaging in multiple rounds of debate to reach a consensus on the final answer. This method not only significantly enhances reasoning performance across a variety of tasks but also reduces the occurrence of hallucinations. Some studies~\citep{chan2023chateval, liang2023encouraging} incorporate role-playing into multi-agent debate strategies using role-specific prompts, foster divergent thinking and enhance the reasoning capabilities of LLMs. However, current multi-agent debate strategies face high computational costs and lack theoretical guarantees for convergence. In this work, we introduce an incomplete information perspective to enhance the scalability of multiple LLMs to ensure independent reasoning by each Execution LLM, while ensure convergence through rigorous theoretical analysis.

\vspace{-5pt}
\section{Conclusion}
In this work, we introduce ECON, a novel collaborative reasoning framework in multi-LLM systems. ECON constructs a hierarchical coordination mechanism, enabling multiple Execution LLMs to engage in distributed reasoning guided by a Coordinator LLM. The hierarchical coordination mechanism allows each Execution LLM to operate independently with its own belief network, receiving only the question and strategy from the Coordinator LLM. This enables multiple Execution LLMs to engage in distributed reasoning to achieve BNE. Experimental results across six benchmarks demonstrate ECON outperforms single-agent approaches by $10.9\%$ in average, confirming the robustness and efficiency of our framework. Moreover, ECON demonstrate great potential of scalability while maintain reasonable cost.


\clearpage
\section*{Acknowledgements}
ZKZ, CTC, and BH were supported by RGC Young Collaborative Research Grant No. C2005-24Y, NSFC General Program No. 62376235, Guangdong Basic and Applied Basic Research Foundation Nos. 2022A1515011652 and 2024A1515012399, HKBU Faculty Niche Research Areas No. RC-FNRA-IG/22-23/SCI/04, and HKBU CSD Departmental Incentive Scheme.
TLL is partially supported by the following Australian Research Council projects: FT220100318, DP220102121, LP220100527, LP220200949, and IC190100031.

\section*{Impact Statement}
This paper presents work whose goal is to advance the field of machine learning. There are many potential societal consequences of our work, none of which we feel must be specifically highlighted here.

\bibliography{ref}
\bibliographystyle{icml2025}

\newpage
\appendix
\onecolumn

\etocdepthtag.toc{mtappendix}
\etocsettagdepth{mtchapter}{none}
\etocsettagdepth{mtappendix}{subsection}
\renewcommand{\contentsname}{Appendix}
\tableofcontents 
\newpage


\section{Notations}
\label{appendix:notation}

This section summarizes the key notations used throughout the appendix and the main paper where applicable.

\begin{longtable}{@{}p{0.3\textwidth}p{0.67\textwidth}@{}} 
\toprule
\textbf{Notation} & \textbf{Description} \\
\midrule
\endfirsthead 

\toprule
\multicolumn{2}{c}%
{{\bfseries \tablename\ \thetable{} -- continued from previous page}} \\
\textbf{Notation} & \textbf{Description} \\
\midrule
\endhead 

\midrule
\multicolumn{2}{r}{{Continued on next page}} \\
\endfoot 

\bottomrule
\endlastfoot 

\multicolumn{2}{@{}l}{\textbf{General Game Theory and Reinforcement Learning}} \\
\midrule
\(N\) & Number of agents. \\
\(i\) & Index for an agent, \(i \in \{1, \dots, N\}\). \\
\(\pi_i\) & Strategy or policy for agent \(i\). \\
\(\pi_{-i}\) & Strategies of all agents other than \(i\). \\
\(\overline{\pi}^*\) & Bayesian Nash Equilibrium (BNE) strategy profile. \\
\(\Pi_i\) & Set of all admissible strategies for agent \(i\). \\
\(a_i\) & Action taken by agent \(i\). \\
\(a_{-i}\) & Actions taken by all agents other than \(i\). \\
\(a = (a_i, a_{-i})\) & Joint action profile. \\
\(\mathcal{A}_i\) & Action space for agent \(i\). \\
\(\mathbf{s}, s\) & State of the environment. \\
\(U_i(\cdot)\) & Payoff or utility function for agent \(i\). \\
\(BR_i(\pi_{-i})\) & Best response correspondence for agent \(i\) to \(\pi_{-i}\). \\
\(r_i\) & Reward received by agent \(i\). \\
\(R_{\max}\) & Uniform upper bound for rewards. \\
\(\gamma\) & Discount factor for future rewards, \(0 \le \gamma < 1\). \\
\(L_i(\theta_i)\) & Loss function for agent \(i\) (e.g., TD loss), parameterized by \(\theta_i\). \\
\(Q_i(\mathbf{s}, a_i; \theta_i)\) & Q-value function for agent \(i\) for state \(\mathbf{s}\) and action \(a_i\), parameterized by \(\theta_i\). \\
\(Q_i^*(s, a_i)\) & Optimal Q-value function. \\
\(\theta_i\) & Parameters of agent \(i\)'s network (e.g., Q-network, belief network). Also used to denote agent type in BNE proof. \\
\(\theta_i^B\) & Parameters of agent \(i\)'s belief network. \\
\(\theta_e\) & Parameters of the coordinator's belief aggregation function. \\
\(\theta_i^-\) & Parameters of the target Q-network for agent \(i\). \\
\(\eta_t, \eta_0, \eta, \eta', \eta_{\mathrm{global}}, \eta_{\alpha}\) & Learning rates. \\
\(\mathcal{D}_i\) & Replay buffer or data distribution for agent \(i\). \\
\(D_t\) & Historical data up to time \(t\). \\
\(V_i^{\pi}(s)\) or \(V_t(O_i)\) & Value function for agent \(i\) under policy \(\pi\) (or at time \(t\) for observation \(O_i\)). \\
\(V_i^{*}(s)\) & Optimal value function for agent \(i\). \\
\(\mathcal{B}_t\) or \(B_t\) & Bellman operator at time \(t\). \\
\(\mathcal{B}^*\) & True Bellman optimality operator. \\
\(R_i(T)\) & Regret for agent \(i\) over \(T\) time steps. \\
\(R(T)\) & Total regret for all agents over \(T\) time steps. \\
\(T\) & Total number of time steps or episodes. \\
\(\epsilon\) & Small positive constant (e.g., exploration rate, error margin, convergence threshold). \\
\(D_{\text{KL}}(\cdot \| \cdot)\) & Kullback-Leibler divergence. \\
\(\mathbb{E}[\cdot]\) & Expectation operator. \\
\(\nabla\) & Gradient operator. \\
\midrule

\multicolumn{2}{@{}l}{\textbf{MA-LLM Framework Specifics (ECON)}} \\
\midrule
\(\Theta_i\) & Type space for agent \(i\) (in BNE proof). \\
\(O_i\) & Observation for agent \(i\), typically \(O_i = [e_t, e_s, \mathbf{b}_i]^\top\). \\
\(e_t\) & Task encoding / global observation. \\
\(e_s\) & Coordinator's strategy (embedding). \\
\(\mathbf{b}_i\) & Belief state (embedding) for agent \(i\). \\
\(\mathbf{e}_i\) & Prompt embedding generated by agent \(i\) (considered as its action in some contexts). \\
\(B_i(\mathbf{\tau}_i, O_i; \theta_i^B)\) & Belief network function for agent \(i\). \\
\(\mathbf{\tau}_i\) & Historical trajectory or local information for agent \(i\). \\
\(f_e(\{\mathbf{b}_i\}_{i=1}^N; \theta_e)\) & Coordinator's belief aggregation function. \\
\(\mathbf{E}\) & Aggregated group information/embedding from coordinator. \\
\(C, C^k\) & Coordinator's final output (embedding). \\
\(P_{\text{post}}(\mathbf{E} \mid \cdot)\) & Posterior distribution over group information determined by the coordinator. \\
\(P_{\text{LLM}}(\mathbf{E} \mid \cdot)\) & Belief distribution over group information maintained by an Execution LLM. \\
\(H_t\) & Belief entropy at time \(t\). \\
\(I(X; Y \mid Z)\) & Conditional mutual information between random variables \(X\) and \(Y\) given \(Z\). \\
\(\xi(\mathbf{e}_i, \mathbf{E})\) & A function related to coordination, depending on \(\mathbf{e}_i\) and \(\mathbf{E}\). \\
\(\eta\) & Game regularity constant (distinct from learning rate). \\
\(\kappa\) & Concentrability constant. \\
\(u_i\) & Output (e.g., text, solution) generated by Execution LLM \(i\). \\
\(K\) & Number of clusters in the hierarchical scaling-up framework. \\
\(\mathrm{Coord}_{\mathrm{global}}\) & Global Coordinator LLM. \\
\(\mathrm{Coord}_k\) & Local Coordinator LLM for cluster \(k\). \\
\(\mathbf{S}\) & Global strategy (embedding) from \(\mathrm{Coord}_{\mathrm{global}}\). \\
\(\mathbf{s}_k\) & Local strategy (embedding) for cluster \(k\) from \(\mathrm{Coord}_k\). \\
\(C_k\) & Cluster of Execution LLMs (also used for local output from \(\mathrm{Coord}_k\), context dependent). \\
\(\epsilon_C, \epsilon_L\) & Convergence thresholds for final output and losses respectively. \\
\(R_{\mathrm{threshold}}\) & Reward threshold for early stopping in scaling-up framework. \\
\(r_i^{\text{AL}}\) & Action Likelihood Reward. \\
\(r_i^{\text{TS}}\) & Task-Specific Reward. \\
\(r_i^{\text{CC}}\) & Collaborative Contribution Reward. \\
\(\alpha_1, \alpha_2, \alpha_3\) & Weights for combining reward components. \\
\(Q_{\text{tot}}\) & Total Q-value from the mixing network. \\
\(h^l\) & Hidden state of layer \(l\) in the mixing network. \\
\(W^l, b^l\) & Weights and biases for layer \(l\) of the mixing network. \\
\(\phi^l(\cdot)\) & Non-decreasing activation function for layer \(l\) of the mixing network. \\
\(d_{\pi}^k\) & State distribution at step \(k\) induced by policy \(\pi\). \\
\(T_i, p_i\) & Temperature and nucleus sampling (top-p) parameters for LLM \(i\). \\
\(\mathcal{L}_{\text{dr}}\) & Loss for dynamic reward weight adjustment. \\
\midrule

\multicolumn{2}{@{}l}{\textbf{Proof Specifics (Regret Bounds, Convergence)}} \\
\midrule
\(E_1(t), E_2(t)\) & Error terms in Q-value decomposition (related to Q-function estimation error). \\
\(\Delta(t)\) & Policy suboptimality term in Q-value decomposition. \\
\(\epsilon_t(s,a)\) & Q-function estimation error \(|Q_t(s,a) - Q^*(s,a)|\) at time \(t\). \\
\(C_1, C_2, C\) & Generic constants appearing in convergence rate and regret bounds. \\
\(\zeta_t\) or \(\xi_t\) & Martingale difference noise term in stochastic approximation proofs. \\
\(\sigma^2\) & Bound on the variance of the noise term \(\zeta_t\). \\
\(\mathcal{P}\) & Policy space (assumed convex and compact in some proofs). \\
\(J(\pi)\) & Policy objective function (e.g., expected discounted return of policy \(\pi\)). \\
\(L\) & Lipschitz constant (e.g., for policy gradient). \\
\(D(\mathcal{P})\) & Diameter of the policy space \(\mathcal{P}\). \\
\(\delta_{\min}\) & Lower bound on persistent policy suboptimality in competitive (debate) settings. \\
\(v_i\) & Value of the game for agent \(i\) (in zero-sum games). \\
\(H(\pi_i)\) & Entropy of policy \(\pi_i\). \\
\(h_{\min}\) & Minimum entropy required for policies in competitive settings to avoid exploitation. \\

\end{longtable}

\section{Theoretical Proof}
\subsection{Proof of Theorem~\ref{theorem:bne_existence}}
\label{appendix:proof_of_theorem_bne_existence}

\begin{proof}
We aim to prove the existence of a Bayesian Nash Equilibrium (BNE) in our multi-agent LLM framework under the specified conditions. The proof proceeds by verifying the conditions of Glicksberg's Fixed Point Theorem, which guarantees the existence of a fixed point in continuous games with infinite-dimensional strategy spaces.

\textbf{Step 1: Define the Best Response Correspondence}

For each agent \( i \), define the best response correspondence \( BR_i \) as:
\[
BR_i(\pi_{-i}) = \left\{ \pi_i \in \Pi_i \mid \pi_i \text{ maximizes } U_i\left( \theta_i, \pi_i, \pi_{-i} \right) \right\},
\]
where \( \Pi_i \) is the set of all admissible strategies for agent \( i \), and \( \pi_{-i} \) denotes the strategies of all other agents.

\textbf{Step 2: Verify the Conditions of Glicksberg's Fixed Point Theorem}

To apply Glicksberg's Fixed Point Theorem, we need to verify the following conditions for each agent \( i \):

\begin{enumerate}
    \item \emph{Strategy Space Compactness and Convexity}:
        \begin{itemize}
            \item The strategy space \( \Pi_i \) is non-empty, convex, and compact in the topology of pointwise convergence.
        \end{itemize}
    \item \emph{Continuity of Payoff Functions}:
        \begin{itemize}
            \item The payoff function \( U_i(\theta_i, \pi_i, \pi_{-i}) \) is continuous in \( (\pi_i, \pi_{-i}) \) for each fixed \( \theta_i \).
        \end{itemize}
    \item \emph{Quasi-Concavity of Payoff Functions}:
        \begin{itemize}
            \item The payoff function \( U_i(\theta_i, \pi_i, \pi_{-i}) \) is quasi-concave in \( \pi_i \) for each fixed \( \theta_i \) and \( \pi_{-i} \).
        \end{itemize}
\end{enumerate}

\emph{Verification}:

\begin{enumerate}
    \item \textbf{Strategy Space Compactness and Convexity}:

    The strategy space \( \Pi_i \) consists of all measurable functions mapping types \( \theta_i \) to actions \( a_i \) in \( \mathcal{A}_i \). Since \( \Theta_i \) and \( \mathcal{A}_i \) are compact metric spaces, and strategies are measurable functions from one compact space to another, the space of such functions \( \Pi_i \) can be endowed with the topology of pointwise convergence, making it compact by Tychonoff's Theorem. Convexity follows because the set of mixed (probabilistic) strategies is convex, and any convex combination of measurable functions is measurable.

    \item \textbf{Continuity of Payoff Functions}:

    For fixed \( \theta_i \), the payoff function \( U_i(\theta_i, \pi_i, \pi_{-i}) \) depends continuously on \( \pi_i \) and \( \pi_{-i} \) due to the continuity of \( U_i \) in actions and types. Specifically, since \( U_i \) is continuous in \( a = (a_i, a_{-i}) \) and the strategies \( \pi_i \), \( \pi_{-i} \) map continuously from types to actions, the composition \( U_i(\theta_i, \pi_i(\theta_i), \pi_{-i}(\theta_{-i})) \) is continuous in \( (\pi_i, \pi_{-i}) \).

    \item \textbf{Quasi-Concavity of Payoff Functions}:

    For each \( \theta_i \) and \( \pi_{-i} \), the function \( \pi_i \mapsto U_i(\theta_i, \pi_i, \pi_{-i}) \) is quasi-concave because \( U_i \) is quasi-concave in \( a_i \) and the strategies are linear in the space of mixed strategies. Therefore, any convex combination of strategies does not decrease the utility, satisfying quasi-concavity.

\end{enumerate}

\textbf{Step 3: Establish Upper Hemicontinuity and Non-Empty, Convex-Valuedness of Best Response Correspondences}

We need to show that \( BR_i(\pi_{-i}) \) is upper hemicontinuous with non-empty, convex values.

\begin{enumerate}
    \item \textbf{Non-Empty, Convex Values}:

    For each \( \pi_{-i} \), since \( \Pi_i \) is compact and convex, and \( U_i \) is continuous and quasi-concave in \( \pi_i \), the Weierstrass Theorem ensures that the maximum exists; hence, \( BR_i(\pi_{-i}) \) is non-empty. Convexity follows from the quasi-concavity of \( U_i \) in \( \pi_i \), implying that any convex combination of best responses is also a best response.

    \item \textbf{Upper Hemicontinuity}:

    Upper hemicontinuity of \( BR_i \) means that for any net \( \pi_{-i}^\alpha \to \pi_{-i} \), and any \( \pi_i \in BR_i(\pi_{-i}) \), there exists a net \( \pi_i^\alpha \in BR_i(\pi_{-i}^\alpha) \) such that \( \pi_i^\alpha \to \pi_i \). This property holds because the payoff function \( U_i \) is continuous in \( (\pi_i, \pi_{-i}) \), and the strategy spaces are compact.

\end{enumerate}

\textbf{Step 4: Application of Glicksberg's Fixed Point Theorem}

Having verified all the conditions, we can apply Glicksberg's Fixed Point Theorem, which states that if each player's strategy set is compact and convex, and their payoff functions are continuous and quasi-concave in their own strategies, then the game has at least one Nash Equilibrium in mixed strategies.

\textbf{Step 5: Conclusion}

Therefore, there exists a strategy profile \( \overline{\pi}^* = (\pi_1^*, \pi_2^*, \dots, \pi_N^*) \) such that for each agent \( i \),
\[
\pi_i^* \in BR_i(\pi_{-i}^*),
\]
meaning that no agent can unilaterally deviate to improve their expected payoff, given their beliefs about other agents' types and strategies. This strategy profile constitutes a Bayesian Nash Equilibrium in our multi-agent LLM framework.

\end{proof}

\subsection{Proof of Convergence to BNE}
\label{appendix:proof_of_convergence_to_bne}
\begin{proof}
We aim to show that, by minimizing the TD loss for each agent's Q-network, the agents' strategies converge to a Bayesian Nash Equilibrium (BNE).

\textbf{Assumptions:}
\begin{enumerate}
    \item The Q-networks \( Q_i(\mathbf{s}, a_i; \theta_i) \) are parameterized by prompt embeddings \( \theta_i \), and the mapping from \( \theta_i \) to \( Q_i \) is continuously differentiable.
    \item The exploration strategy ensures sufficient coverage of the state-action space (e.g., \( \epsilon \)-greedy with decaying \( \epsilon \)).
    \item The loss function \( L_i(\theta_i) \) is convex or has Lipschitz continuous gradients with respect to \( \theta_i \).
    \item The gradient \( \nabla_{\theta_i} L_i(\theta_i) \) is Lipschitz continuous.
    \item The learning rate \( \eta_t \) is chosen such that it satisfies the Robbins-Monro conditions: \( \sum_{t=1}^{\infty} \eta_t = \infty \) and \( \sum_{t=1}^{\infty} \eta_t^2 < \infty \).
\end{enumerate}

\textbf{Step 1: Defining the TD Loss Function}
The TD loss function for agent \( i \) is:
\[
L_i(\theta_i) = \mathbb{E}_{(\mathbf{s}, a_i, r_i, \mathbf{s}') \sim \mathcal{D}_i} \left[ \left( r_i + \gamma \max_{a_i'} Q_i(\mathbf{s}', a_i'; \theta_i^-) - Q_i(\mathbf{s}, a_i; \theta_i) \right)^2 \right],
\]
This loss measures the discrepancy between the predicted Q-value and the target Q-value based on the reward and the estimated optimal future Q-value.

\textbf{Step 2: Gradient Descent Update}
Agent \( i \) updates its Q-network parameters according to:
\[
\theta_i^{t+1} = \theta_i^{t} - \eta_t \cdot \nabla_{\theta_i} L_i(\theta_i^{t}).
\]
The gradient of the loss function with respect to the parameters is:
\[
\nabla_{\theta_i} L_i(\theta_i^{t}) = \mathbb{E}_{(\mathbf{s}, a_i, r_i, \mathbf{s}') \sim \mathcal{D}_i} \left[ 2 \left( r_i + \gamma \max_{a_i'} Q_i(\mathbf{s}', a_i'; \theta_i^-) - Q_i(\mathbf{s}, a_i; \theta_i^{t}) \right) \cdot (-\nabla_{\theta_i} Q_i(\mathbf{s}, a_i; \theta_i^{t})) \right].
\]

\textbf{Step 3: Convergence of Gradient Descent with TD Loss}
Under the assumptions that \( L_i(\theta_i) \) has Lipschitz continuous gradients and the learning rate \( \eta_t \) satisfies the Robbins-Monro conditions, stochastic gradient descent converges to a stationary point \( \theta_i^* \) of \( L_i(\theta_i) \):
\[
\lim_{t \to \infty} \theta_i^{t} = \theta_i^*.
\]
At convergence, the gradient vanishes:
\[
\nabla_{\theta_i} L_i(\theta_i^*) = 0,
\]
which implies:
\[
\mathbb{E}_{(\mathbf{s}, a_i, r_i, \mathbf{s}') \sim \mathcal{D}_i} \left[ \left( r_i + \gamma \max_{a_i'} Q_i(\mathbf{s}', a_i'; \theta_i^-) - Q_i(\mathbf{s}, a_i; \theta_i^*) \right) \cdot \nabla_{\theta_i} Q_i(\mathbf{s}, a_i; \theta_i^*) \right] = 0.
\]
Assuming that the Q-network parameterization is such that the above condition holds only when:
\[
Q_i(\mathbf{s}, a_i; \theta_i^*) = r_i + \gamma \max_{a_i'} Q_i(\mathbf{s}', a_i'; \theta_i^-),
\]
the Q-network accurately estimates the expected cumulative rewards, aligning the agent's policy with the optimal response to other agents' strategies.

\textbf{Step 4: Characterizing the Stationary Point}
At the stationary point \( \theta_i^* \), the Q-network satisfies the Bellman optimality condition:
\[
Q_i(\mathbf{s}, a_i; \theta_i^*) = r_i + \gamma \max_{a_i'} Q_i(\mathbf{s}', a_i'; \theta_i^-).
\]
This condition ensures that the agent's policy \( \pi_i(a_i \mid \mathbf{s}; \theta_i^*) \) is a best response to the current policies of other agents, as it maximizes the expected cumulative reward.

\textbf{Step 5: Establishing Bayesian Nash Equilibrium}
Since each agent's policy is a best response to the policies of others, the set of policies \( \{ \pi_i^* \} \) constitutes a Bayesian Nash Equilibrium. Each agent maximizes its expected utility given its beliefs about other agents' types and strategies, fulfilling the definition of BNE.

\end{proof}

\subsection{Assumptions}
\label{appendix:Assumption_of_regret_bound}

Our theoretical analysis relies on four key assumptions that are both common in multi-agent systems \cite{zhang2021multi,liu2022decentralized} and specifically relevant to our MA-LLM framework.

\begin{definition}[System Components]
\label{def:system_components}
In our MA-LLM framework:
\begin{itemize}
    \item Each agent i's observation $O_i = [e_t, e_s, \mathbf{b}_i]^\top$, where $e_t$ encodes the task, $e_s$ represents the coordinator's strategy, and $\mathbf{b}_i$ is the belief state
    \item Each agent's action is its prompt embedding $\mathbf{e}_i$ generated by belief network $B_i(\mathbf{\tau}_i, O_i; \theta_i^B)$
    \item The coordinator aggregates beliefs through $f_e(\{\mathbf{b}_i\}_{i=1}^N; \theta_e)$ into group information $\mathbf{E}$
\end{itemize}
\end{definition}

\begin{assumption}[Bounded Rewards]
\label{assumption:bounded_rewards}
The rewards from coordinator final output are uniformly bounded: $|r_i(O_i, \mathbf{e}_i, \mathbf{E})| \leq R_{\max}$, for all $O_i, \mathbf{e}_i, \mathbf{E}, i$.
\end{assumption}

This assumption is standard in reinforcement learning \cite{sutton2018reinforcement} and critical since it ensures numerical stability in the learning process of LLMs, preventing reward explosion that could lead to unstable training.

\begin{definition}[Historical Data and Posterior]
\label{def:posterior}
Given historical data $D_t = \{(O_i^k, \mathbf{e}_i^k, C^k)\}_{k=1}^t$:
\begin{itemize}
    \item $P_{\text{post}}(\mathbf{E} \mid D_t, O_i, \mathbf{e}_i)$ is the posterior distribution over group information determined by the coordinator
    \item $P_{\text{LLM}}(\mathbf{E} \mid D_t, O_i, \mathbf{e}_i)$ is the belief distribution maintained by each Execution LLM
\end{itemize}
\end{definition}

\begin{assumption}[Approximate Posterior Alignment]
\label{assumption:posterior_alignment}
Execution LLMs aim to align with the posterior distributions determined by the Coordinator LLM within an acceptable error margin $\epsilon > 0$:
\[
D_{\text{KL}} \big( P_{\text{LLM}}(\mathbf{E} \mid D_t, O_i, \mathbf{e}_i) \, \big\| \, P_{\text{post}}(\mathbf{E} \mid D_t, O_i, \mathbf{e}_i) \big) \leq \epsilon,
\]
where $D_{\text{KL}}$ denotes the Kullback-Leibler divergence.
\end{assumption}

This approximate alignment acknowledges that perfect alignment is impractical but strives for a close approximation:
\begin{itemize}
    \item The Coordinator LLM acts as a centralized distributor of strategic guidance.
    \item Execution LLMs maintain belief alignment through prompt (detailed in Section~\ref{sec:method}).
    \item Monotonic guarantee in ECON mixing optimization network \ref{sec:proof_mixing_monotonicity}.
    \item Such alignment has been shown in \cite{foerster2018learning,jaques2019social} to enhance coordination.
\end{itemize}

\begin{definition}[Belief Entropy]
\label{def:belief_entropy}
For a given time t, the belief entropy $H_t$ is defined as the Shannon entropy of the aggregated belief embeddings:
\[
H_t = -\sum_{i=1}^N \mathbb{E}_{\mathbf{b}_i \sim B_i}[\mathbf{b}_i \log \mathbf{b}_i],
\]
where $B_i$ represents the belief network of agent i.
\end{definition}

\begin{assumption}[Game Regularity]
\label{assumption:game_regularity}
There exists $\eta > 0$ such that for any $t_1 < t_2$, if $H_{t_1} - H_{t_2} \leq \log 2$, then
\[
I(\theta_i^B; \xi(\mathbf{e}_i, \mathbf{E}) \mid D_{t_1}) \leq 4\eta \cdot I(\theta_i^B; \xi(\mathbf{e}_i, \mathbf{E}) \mid D_{t_2}),
\]
for all agents $i$, where $\theta_i^B$ are the belief network parameters.
\end{assumption}

This information-theoretic assumption serves multiple purposes in our framework:
\begin{itemize}
    \item It ensures the stability of belief updates between LLMs over time by bounding the entropy difference of belief states.
    \item The mutual information term $I(\theta_i^B; \xi(\mathbf{e}_i, \mathbf{E}))$ quantifies how much an LLM's belief network parameters affect its coordination through prompt embeddings.
    \item The bound $4\eta$ controls the rate at which LLMs can adapt their belief states based on observed interactions and coordinator guidance.
\end{itemize}

\begin{definition}[Value Function and Bellman Operator]
\label{def:value_bellman}
For each Execution LLM i:
\begin{itemize}
    \item The value function $V_t(O_i) = \mathbb{E}[\sum_{k=0}^{\infty} \gamma^k r_{t+k}|O_i^t=O_i]$ estimates the expected cumulative rewards.
    \item The optimal prompt embeddings $\mathbf{e}_i^{*t}$ maximize the Q-function $Q_i(O_i, \mathbf{e}_i; \theta_i^B)$ at time t.
    \item The Bellman operator $B_t$ transforms one value function to another: $(B_tV)(O_i) = \max_{\mathbf{e}_i} \mathbb{E}[r_i + \gamma V(O_i')|O_i,\mathbf{e}_i]$.
\end{itemize}
\end{definition}

\begin{assumption}[Concentrability]
\label{assumption:concentrability}
There exists $\kappa < \infty$ such that
\[
\mathbb{E}\left[ \sum_{t=1}^T \sum_{i=1}^N \left( (B_t - B^*) V_t \right)^2 (O_i^t, \mathbf{e}_i^{*t}, \mathbf{E}^{*t}) \right] \leq \kappa^2 T,
\]
where $B^*$ is the true Bellman operator.
\end{assumption}

This assumption is fundamental to our theoretical guarantees:
\begin{itemize}
    \item It ensures that the value function estimates by each LLM converge to their true values at an appropriate rate.
    \item The constant $\kappa$ bounds the cumulative estimation error across all LLMs, critical for establishing our regret bounds.
    \item In our MA-LLM system, this translates to the stability of response quality improvements during training.
\end{itemize}

\textbf{Collective Impact:} Together, these assumptions enable us to:
\begin{itemize}
    \item Establish the existence of BNE in our MA-LLM system (Theorem 1)
    \item Derive meaningful regret bounds for the learning process (Lemma 1)
    \item Guarantee the convergence of our iterative training procedure (Proposition 1)
\end{itemize}

\subsection{Scaling Up the System}
\label{sec:scaling_up}

While ECON effectively handles moderate-scale scenarios, large-scale multi-LLM systems require additional structure to maintain efficiency and theoretical guarantees. To that end, we employ a \emph{hierarchical design} in which Execution LLMs are organized into clusters, each governed by a \emph{Local Coordinator LLM}, and a \emph{Global Coordinator LLM} oversees the entire set of clusters. Each local cluster establishes a local Nash equilibrium, which is then integrated into a \emph{global} equilibrium.

\begin{algorithm}[H] 
\caption{Scaling-Up Framework for ECON}
\label{alg:scaling_up_ECON}
\begin{algorithmic}[1]
\REQUIRE Global Coordinator $\mathrm{Coord}_{\mathrm{global}}$, Local Coordinators $\{\mathrm{Coord}_k\}_{k=1}^K$, thresholds $\{\epsilon_C, R_{\mathrm{th}}, \epsilon_L\}$
\ENSURE Hierarchical Nash Equilibrium across $K$ clusters
\STATE \textbf{Initialize:} cluster embeddings $\{\mathbf{E}_k\}_{k=1}^K$, belief networks $\{B_i(\theta_i^B)\}$, prompt embeddings $\{\mathbf{e}_i\}$
\WHILE{\textbf{not converged}}{} 
    \STATE \algorithmiccomment{\textbf{Inference Phase} - Hierarchical execution without updates} 
    \STATE $\mathbf{S} \gets \mathrm{Coord}_{\mathrm{global}}(e_t)$ \algorithmiccomment{Global strategy generation}
    \FOR{each cluster $k = 1$ to $K$ \textbf{in parallel}}{} 
        \STATE $O_k \gets [e_t, \mathbf{S}, \mathbf{E}_k]$ \algorithmiccomment{Combine global info with cluster state}
        \STATE $\mathbf{s}_k \gets \mathrm{Coord}_k(O_k)$ \algorithmiccomment{Local strategy refinement}
        \FOR{each Execution LLM $i \in C_k$ \textbf{in parallel}}{} 
            \STATE $\mathbf{b}_i \gets B_i(\tau_i, o_i; \theta_i^B)$ \algorithmiccomment{Compute belief state}
            \STATE $\mathbf{e}_i \gets \text{ComputeEmbedding}(\mathbf{b}_i)$ \algorithmiccomment{Generate prompt embedding}
            \STATE $O_i \gets [e_t, \mathbf{s}_k, \mathbf{b}_i]$ \algorithmiccomment{Local observation}
            \STATE $u_i \gets \mathrm{ExecLLM}_i(\text{query}, \mathbf{e}_i)$ \algorithmiccomment{Generate output}
            \STATE $r_i \gets \alpha_1 r_i^{\mathrm{AL}} + \alpha_2 r_i^{\mathrm{TS}} + \alpha_3 r_i^{\mathrm{CC}}$ \algorithmiccomment{Local reward}
            \STATE Store $(\tau_i, \mathbf{e}_i, r_i, u_i)$ in cluster buffer
        \ENDFOR
        \STATE $c_k \gets \mathrm{Coord}_k(\{u_i\}_{i\in C_k})$ \algorithmiccomment{Aggregate cluster output}
    \ENDFOR
    \STATE $C \gets \mathrm{Coord}_{\mathrm{global}}(\{c_k\}_{k=1}^K)$ \algorithmiccomment{Global final output}

    \STATE \algorithmiccomment{\textbf{Optimization Phase} - Update parameters for hierarchical BNE}
    \FOR{each cluster $k = 1$ to $K$}{}
        \STATE $R_k \gets R_{\mathrm{global}}(\mathrm{sim}(c_k, C))$ \algorithmiccomment{Global reward for cluster}
        \STATE Update $\mathrm{Coord}_k$ parameters using $R_k$ \algorithmiccomment{Improve cluster coordination}
        \STATE Update cluster embedding $\mathbf{E}_k$ via belief encoder
        \FOR{each LLM $i \in C_k$}{}
            \STATE Update $\theta_i^B$ via $\mathcal{L}_{\mathrm{TD}}^i$ \algorithmiccomment{Update individual belief network}
        \ENDFOR
    \ENDFOR
    \STATE Update global mixing network $\phi$ via $\mathcal{L}_{\mathrm{mix}}$

    \STATE \algorithmiccomment{\textbf{Convergence Check}}
    \STATE converged $\gets$ $\|C_{t+1} - C_t\|\le\epsilon_C$ \textbf{and} $\frac{1}{K}\sum_{k=1}^K R_k \geq R_{\mathrm{th}}$ \textbf{and} $|\Delta L_{\mathrm{tot}}|\le\epsilon_L$
\ENDWHILE 
\STATE \textbf{return} Hierarchical Nash Equilibrium with optimized $\{\theta_i^B\}$, $\{\mathbf{E}_k\}$, $\phi$
\end{algorithmic}
\end{algorithm}
\subsubsection{Detailed Explanation}
The hierarchical scaling framework operates through a structured two-phase process designed to maintain both efficiency and theoretical guarantees at scale. In the \emph{inference phase}, the system executes hierarchically from global strategy generation down to individual LLM outputs without any parameter updates, ensuring stable execution. Subsequently, the \emph{optimization phase} updates parameters in a bottom-up manner to achieve hierarchical BNE, where each cluster reaches local equilibrium while contributing to global coordination. This separation of concerns allows the system to handle large-scale multi-LLM deployments while preserving the convergence properties established in the base ECON framework.
\paragraph{Initialization.}
\begin{itemize}[leftmargin=*]
    \item \textbf{Clustering.} Partition Execution LLMs into $K$ clusters $\{C_1, C_2, \dots, C_K\}$, e.g., by task similarity.
    \item \textbf{Local Coordinators.} Each cluster $C_k$ uses a Local Coordinator LLM $\mathrm{Coord}_k$ to manage within-cluster interactions.
    \item \textbf{Global Coordinator.} A Global Coordinator LLM $\mathrm{Coord}_{\mathrm{global}}$ aligns all clusters by forming a global final output $C$.
    \item \textbf{Parameters.} Initialize belief networks $\{B_i(\theta_i^B)\}$, prompt embeddings $\{\mathbf{e}_i\}$, and cluster embeddings $\{\mathbf{E}_k\}$.
\end{itemize}

\paragraph{Inference Phase.}
During inference, the system executes hierarchically without updating any parameters:
\begin{itemize}[leftmargin=*]
    \item The Global Coordinator generates a high-level strategy $\mathbf{S}$ from the global observation $e_t$.
    \item Each Local Coordinator $\mathrm{Coord}_k$ refines $\mathbf{S}$ into a local strategy $\mathbf{s}_k$ based on cluster state $\mathbf{E}_k$.
    \item Execution LLMs compute belief states $\mathbf{b}_i$ and generate outputs $u_i$ using their current parameters.
    \item Local Coordinators aggregate within-cluster outputs to produce cluster outputs $\{c_k\}$.
    \item The Global Coordinator aggregates cluster outputs to form the global final output $C$.
\end{itemize}

\paragraph{Optimization Phase.}
After inference, the system updates parameters hierarchically to achieve BNE:
\begin{itemize}[leftmargin=*]
    \item Each cluster receives a global reward $R_k = R_{\mathrm{global}}(\mathrm{sim}(c_k, C))$ measuring alignment with $C$.
    \item Local Coordinators update their parameters based on $R_k$ to improve global consistency.
    \item Individual belief networks $B_i(\theta_i^B)$ are updated via TD loss $\mathcal{L}_{\mathrm{TD}}^i$.
    \item Cluster embeddings $\mathbf{E}_k$ are updated via the belief encoder.
    \item The global mixing network updates via $\mathcal{L}_{\mathrm{mix}}$ to coordinate across clusters.
\end{itemize}

\paragraph{Convergence.}
The system converges to a hierarchical Nash equilibrium when: (1) the global output stabilizes ($\|C_{t+1}-C_t\|\leq\epsilon_C$), (2) the average global reward exceeds $R_{\mathrm{th}}$, and (3) the total loss converges ($|\Delta L_{\mathrm{tot}}|\leq\epsilon_L$). This hierarchical approach maintains ECON's theoretical guarantees while enabling efficient parallelization across clusters.
\subsection{Proof of Mixing Network Monotonicity}
\label{sec:proof_mixing_monotonicity}

\begin{proposition}[Monotonicity of Mixing Network]
The mixing network \( Q_{\text{tot}} \) is monotonic in each individual Q-value \( Q_i \), ensuring that improvements in \( Q_i \) lead to improvements in \( Q_{\text{tot}} \).
\end{proposition}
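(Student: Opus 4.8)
The plan is to establish monotonicity by showing that the partial derivative $\partial Q_{\text{tot}} / \partial Q_i \geq 0$ for every agent $i$. Since the mixing network is a feedforward composition of layers, I would first write $Q_{\text{tot}}$ explicitly as a composition of the form $h^{l+1} = \phi^l(W^l h^l + b^l)$, where the local Q-values $\{Q_i\}_{i=1}^N$ together with the transformed features $\{F_i\}$ enter as inputs to the first layer. The key structural requirement—standard in QMIX-style architectures—is that the weight matrices $W^l$ connecting successive layers are constrained to be \emph{nonnegative} (e.g.\ enforced via an absolute-value or softplus reparameterization), and that each activation $\phi^l(\cdot)$ is non-decreasing. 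I would state these as the standing assumptions inherited from the network design in Sec.~\ref{sec:ECON_framework}.

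The core of the argument is then a chain-rule computation combined with induction on the layer depth. Concretely, by the chain rule,
\[
\frac{\partial Q_{\text{tot}}}{\partial Q_i}
= \sum_{\text{paths}} \prod_{l} \frac{\partial h^{l+1}}{\partial h^l},
\]
and each factor $\partial h^{l+1}/\partial h^l = \mathrm{diag}\bigl((\phi^l)'\bigr)\, W^l$. I would argue inductively that every such factor is entrywise nonnegative: the diagonal matrix $\mathrm{diag}\bigl((\phi^l)'\bigr)$ is nonnegative because each $\phi^l$ is non-decreasing (so $(\phi^l)' \geq 0$), and $W^l \geq 0$ by the nonnegativity constraint. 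Since a product of entrywise-nonnegative matrices is entrywise nonnegative, and a sum of nonnegative quantities is nonnegative, the full derivative $\partial Q_{\text{tot}}/\partial Q_i \geq 0$ follows. This directly yields the monotonicity conclusion: increasing any $Q_i$ while holding the others fixed cannot decrease $Q_{\text{tot}}$.

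I anticipate the main obstacle to be a subtlety in how the attention-based mixing layers and the feature transformations $\{F_i\}$ interact with the monotonicity constraint. The description in the main text feeds both $\{Q_i\}$ and $\{F_i\}$ through \emph{multi-head attention} layers, and attention weights (being softmax-normalized convex combinations) are nonnegative, but the value projections and the paths through $F_i$ need not be. I would handle this by restricting the monotonicity claim to the \emph{direct} dependence of $Q_{\text{tot}}$ on each $Q_i$ along the nonnegatively-weighted pathway, treating the $F_i$ and the group embedding $\mathbf{E}$ as auxiliary conditioning inputs that shape the mixing weights but are held fixed when differentiating with respect to $Q_i$. Making this separation precise—i.e.\ verifying that the attention mechanism contributes only nonnegative multiplicative factors along the $Q_i \to Q_{\text{tot}}$ path—is the delicate step; once it is secured, the inductive nonnegativity argument closes the proof, and the monotonicity guarantees that local improvements in individual Q-values propagate to the global objective, consistent with the claim in Sec.~\ref{sec:ECON_framework}.
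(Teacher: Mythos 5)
Your proposal is correct and follows essentially the same route as the paper's proof: an induction over the layers of the mixing network showing $\partial h^l_j/\partial Q_i \geq 0$ at every depth, using nonnegative weight matrices and non-decreasing activations. Your additional discussion of how the attention layers and the features $F_i$ must be isolated from the $Q_i \to Q_{\text{tot}}$ pathway is in fact more careful than the paper's own argument, which treats the mixing network as a plain feedforward stack and does not address that subtlety.
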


\begin{proof}
The mixing network is designed using positive weights and non-decreasing activation functions. Specifically, let the mixing network be composed of layers where each layer \( l \) computes:

\[
h^l = \phi^l(W^l h^{l-1} + b^l),
\]

where:

\begin{itemize}
    \item \( h^0 = [Q_1, Q_2, \dots, Q_N]^\top \)
    \item \( W^l \) has non-negative entries.
    \item \( \phi^l \) is a non-decreasing activation function (e.g., ReLU).
\end{itemize}

We proceed by induction to show that each component of \( h^l \) is a non-decreasing function of \( Q_i \).

\textbf{Base Case:} At layer \( l = 0 \), \( h^0_i = Q_i \), so \( \frac{\partial h^0_i}{\partial Q_j} = \delta_{ij} \geq 0 \).

\textbf{Inductive Step:} Assume \( \frac{\partial h^{l-1}_k}{\partial Q_i} \geq 0 \) for all \( k \). Then, for each component \( h^l_j \):

\[
h^l_j = \phi^l \left( \sum_{k} W^l_{jk} h^{l-1}_k + b^l_j \right),
\]

Since \( W^l_{jk} \geq 0 \) and \( \phi^l \) is non-decreasing:

\[
\frac{\partial h^l_j}{\partial Q_i} = \phi'^l \left( \sum_{k} W^l_{jk} h^{l-1}_k + b^l_j \right) \sum_{k} W^l_{jk} \frac{\partial h^{l-1}_k}{\partial Q_i} \geq 0.
\]

because \( \phi'^l \geq 0 \) and \( \frac{\partial h^{l-1}_k}{\partial Q_i} \geq 0 \) by the inductive hypothesis. Therefore, \( \frac{\partial Q_{\text{tot}}}{\partial Q_i} \geq 0 \), ensuring monotonicity.
\end{proof}

This monotonicity property is crucial as it ensures that improvements in individual agent performances contribute positively to the overall system performance, aligning local and global objectives within ECON.

\section{Detailed Proofs}
\label{appendix:detailed_proofs}

\subsection{Proof of Lemma~\ref{lemma:performance_difference}}
\label{appendix:proof_of_performance_difference_lemma}

\begin{proof}
Consider the value functions under policies \( \pi' \) and \( \pi \):
\[
V_i^{\pi'}(s) = \mathbb{E}_{\pi'} \left[ \sum_{k=0}^\infty \gamma^k r_i(s_k, a_k) \mid s_0 = s \right], \quad V_i^{\pi}(s) = \mathbb{E}_{\pi} \left[ \sum_{k=0}^\infty \gamma^k r_i(s_k, a_k) \mid s_0 = s \right].
\]
Their difference is:
\begin{align*}
V_i^{\pi'}(s) - V_i^{\pi}(s) &= \mathbb{E}_{\pi'} \left[ \sum_{k=0}^\infty \gamma^k r_i(s_k, a_k) \right] - \mathbb{E}_{\pi} \left[ \sum_{k=0}^\infty \gamma^k r_i(s_k, a_k) \right] \\
&= \sum_{k=0}^\infty \gamma^k \left( \mathbb{E}_{s_k \sim d_{\pi'}^k} \left[ r_i(s_k, a_k) \right] - \mathbb{E}_{s_k \sim d_{\pi}^k} \left[ r_i(s_k, a_k) \right] \right).
\end{align*}
Assuming the difference in state distributions is negligible (justified under Assumption~\ref{assumption:concentrability}), we focus on action differences. Using the Q-function definition:
\[
Q_i^{\pi}(s, a_i, a_{-i}) = r_i(s, a_i, a_{-i}) + \gamma \mathbb{E}_{s' \sim P} \left[ V_i^{\pi}(s') \right],
\]
we can write:
\[
V_i^{\pi'}(s) - V_i^{\pi}(s) = \sum_{k=0}^\infty \gamma^k \mathbb{E}_{s_k \sim d_{\pi'}^k} \left[ Q_i^{\pi}(s_k, a_k') - V_i^{\pi}(s_k) \right].
\]
Since \( V_i^{\pi}(s_k) = \mathbb{E}_{a_k \sim \pi(s_k)} \left[ Q_i^{\pi}(s_k, a_k) \right] \), we have:
\[
V_i^{\pi'}(s) - V_i^{\pi}(s) = \sum_{k=0}^\infty \gamma^k \mathbb{E}_{s_k \sim d_{\pi'}^k} \left[ \mathbb{E}_{a_k' \sim \pi'(s_k)} \left[ Q_i^{\pi}(s_k, a_k') - \mathbb{E}_{a_k \sim \pi(s_k)} \left[ Q_i^{\pi}(s_k, a_k) \right] \right] \right].
\]
Switching the order of expectations and summing over \( k \), we get:
\[
V_i^{\pi'}(s) - V_i^{\pi}(s) = \frac{1}{1 - \gamma} \mathbb{E}_{s \sim d_{\pi'}} \left[ Q_i^{\pi}(s, a_i', a_{-i}') - Q_i^{\pi}(s, a_i, a_{-i}) \right].
\]
\end{proof}

\subsection{Bounding the Bayesian Regret}
\label{appendix:bounding_bayesian_regret}

We provide a rigorous proof of the $O(N\sqrt{T}/(1-\gamma))$ regret bound through stochastic approximation theory and convex optimization analysis.

\subsubsection{Problem Setup and Regret Definition}

Starting from the regret definition for agent $i$ over $T$ steps:
\begin{equation*}
R_i(T) = \mathbb{E}_{s_t, \pi_t} \left[ \sum_{t=1}^T \left( V_i^{*}(s_t) - V_i^{\pi_t}(s_t) \right) \right],
\end{equation*}
where the expectation is over the randomness in state transitions and policies.

Applying Lemma~\ref{lemma:performance_difference}:
\begin{equation*}
V_i^{*}(s_t) - V_i^{\pi_t}(s_t) = \frac{1}{1 - \gamma} \mathbb{E}_{a_i^{*t}, a_{-i}^{*t}, a_i^t, a_{-i}^t} \left[ Q_i^{\pi_t}(s_t, a_i^{*t}, a_{-i}^{*t}) - Q_i^{\pi_t}(s_t, a_i^t, a_{-i}^t) \right].
\end{equation*}

\subsubsection{Q-value Decomposition}

We decompose the Q-value difference into three components:
\begin{align*}
& Q_i^{\pi_t}(s_t, a_i^{*t}, a_{-i}^{*t}) - Q_i^{\pi_t}(s_t, a_i^t, a_{-i}^t) \nonumber \\
&= \underbrace{\left( Q_i^{\pi_t}(s_t, a_i^{*t}, a_{-i}^{*t}) - Q_i^{*}(s_t, a_i^{*t}, a_{-i}^{*t}) \right)}_{\text{Error Term 1: } E_1(t)} \nonumber \\
&\quad + \underbrace{\left( Q_i^{*}(s_t, a_i^{*t}, a_{-i}^{*t}) - Q_i^{*}(s_t, a_i^t, a_{-i}^t) \right)}_{\text{Policy Suboptimality: } \Delta(t)} \nonumber \\
&\quad + \underbrace{\left( Q_i^{*}(s_t, a_i^t, a_{-i}^t) - Q_i^{\pi_t}(s_t, a_i^t, a_{-i}^t) \right)}_{\text{Error Term 2: } E_2(t)}.
\end{align*}

\subsubsection{Q-function Convergence Analysis}

Consider the Q-learning update rule with learning rate $\eta_t = \eta_0/\sqrt{t}$:
\begin{equation*}
Q_{t+1}(s,a) = Q_t(s,a) + \eta_t \left[ r(s,a) + \gamma \max_{a'} Q_t(s',a') - Q_t(s,a) \right].
\end{equation*}

Define the estimation error $\epsilon_t(s,a) = |Q_t(s,a) - Q^*(s,a)|$. 

\begin{lemma}[Q-function Convergence Rate]
\label{lemma:q_convergence}
Under the following conditions:
\begin{itemize}
    \item[(i)] Learning rate schedule $\eta_t = \eta_0/\sqrt{t}$ with $\eta_0 > 0$
    \item[(ii)] Bounded rewards: $|r(s,a)| \leq R_{\max}$ for all $(s,a)$
    \item[(iii)] Standard stochastic approximation conditions (Robbins-Monro)
\end{itemize}
The Q-function estimation error satisfies:
\begin{equation*}
\mathbb{E}[\epsilon_t(s,a)] \leq \frac{C_1}{\sqrt{t}},
\end{equation*}
where $C_1 = O\left(\frac{R_{\max}}{(1-\gamma)\sqrt{\eta_0}}\right)$.
\end{lemma}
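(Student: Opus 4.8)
The plan is to treat the synchronous Q-learning update as a stochastic-approximation recursion converging to the fixed point $Q^{*}$ of the true Bellman optimality operator $\mathcal{B}^{*}$, and then to push the $\gamma$-contraction of $\mathcal{B}^{*}$ through the decaying step-size schedule to read off the rate. First I would rewrite the update in the canonical form
\[
Q_{t+1}(s,a) = (1-\eta_t)Q_t(s,a) + \eta_t\bigl[(\mathcal{B}^{*} Q_t)(s,a) + \zeta_t(s,a)\bigr],
\]
where $\zeta_t(s,a) = r(s,a) + \gamma\max_{a'}Q_t(s',a') - (\mathcal{B}^{*}Q_t)(s,a)$ is, with respect to the natural filtration $\mathcal{F}_t$, a martingale-difference sequence. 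A preliminary induction using Assumption~\ref{assumption:bounded_rewards} gives $\|Q_t\|_\infty \le R_{\max}/(1-\gamma)$ for all $t$, which bounds the conditional noise variance by $\sigma^2 = O\bigl(R_{\max}^2/(1-\gamma)^2\bigr)$ and justifies treating $\zeta_t$ as a bounded-variance perturbation.

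Next, writing $\Delta_t = Q_t - Q^{*}$ and using $Q^{*} = \mathcal{B}^{*}Q^{*}$ together with the sup-norm contraction $\|\mathcal{B}^{*}Q_t - \mathcal{B}^{*}Q^{*}\|_\infty \le \gamma\|\Delta_t\|_\infty$, I would derive the one-step error recursion
\[
\|\Delta_{t+1}\|_\infty \le \bigl(1-(1-\gamma)\eta_t\bigr)\|\Delta_t\|_\infty + \eta_t\|\zeta_t\|_\infty .
\]
The effective per-step contraction gain is $(1-\gamma)\eta_t = (1-\gamma)\eta_0/\sqrt{t}$, so the deterministic (bias) part of the recursion is annihilated by the product $\prod_{k}\bigl(1-(1-\gamma)\eta_k\bigr)$, which decays faster than any polynomial since $\sum_k\eta_k$ diverges; the residual error is therefore governed entirely by the accumulated, geometrically-weighted noise.

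To obtain a rate I would pass to the mean-square error $v_t = \mathbb{E}[\|\Delta_t\|_\infty^2]$, exploit the martingale property to eliminate the cross term, and reduce to a scalar recursion $v_{t+1} \le \bigl(1-(1-\gamma)\eta_t\bigr)v_t + \eta_t^2\sigma^2$. Equivalently, I would analyze the explicit martingale
\[
Z_t = \sum_{k=1}^{t}\Bigl(\prod_{j=k+1}^{t}\bigl(1-(1-\gamma)\eta_j\bigr)\Bigr)\eta_k\zeta_k ,
\]
whose variance I would estimate by isolating the terms with $k$ near $t$. Closing the bound by the induction hypothesis $v_t \le C_1^2/t$ and balancing the contraction gain $(1-\gamma)\eta_t$ against the variance injection $\eta_t^2\sigma^2$ is what pins down the leading constant, and the induction would be engineered to reproduce the stated $C_1 = O\bigl(R_{\max}/((1-\gamma)\sqrt{\eta_0})\bigr)$, with Jensen's inequality $\mathbb{E}[\epsilon_t(s,a)] \le \sqrt{v_t}$ converting the second-moment bound into the claimed control on the absolute error.

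The hard part will be reconciling the norm in which contraction and noise averaging act, and, more seriously, reconciling the stated schedule with the stated rate. The Bellman contraction is a sup-norm statement, whereas the variance reduction producing the $1/\sqrt{t}$ scaling is a second-moment statement, and the $\max$ over next actions leaves $\zeta_t$ only conditionally centered and state-coupled rather than i.i.d., so controlling $\|\Delta_t\|_\infty$ uniformly over $(s,a)$ will require a concentration (Azuma-type) argument on $Z_t$ rather than a coordinatewise one. Crucially, the schedule $\eta_t = \eta_0/\sqrt{t}$ satisfies $\sum_t\eta_t=\infty$ but \emph{violates} the square-summability condition $\sum_t\eta_t^2<\infty$, and the variance estimate above then yields $v_t = O\bigl(\eta_0\sigma^2/((1-\gamma)\sqrt{t})\bigr)$, i.e.\ a mean-square error of order $t^{-1/2}$ and hence, after Jensen, only $\mathbb{E}[\epsilon_t] = O(t^{-1/4})$ with a constant scaling like $\sqrt{\eta_0}$. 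Closing the advertised $C_1/\sqrt{t}$ bound therefore appears to demand either the faster schedule $\eta_t\propto 1/t$ (for which the mean-square error is $O(1/t)$) or a direct high-probability concentration of $\|\Delta_t\|_\infty$ that bypasses the Jensen loss; making the stated schedule and the stated rate consistent is the principal gap I would need to resolve.
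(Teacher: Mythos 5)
Your setup is the same as the paper's: the paper's proof also writes the stochastic\hbox{-}approximation error recursion $\epsilon_{t+1}(s,a) \leq (1-\eta_t)\epsilon_t(s,a) + \eta_t\gamma\max_{a'}\epsilon_t(s',a') + \eta_t\xi_t$ with a martingale\hbox{-}difference noise term, invokes the $\gamma$-contraction to get an effective gain $(1-\gamma)\eta_t$, and then asserts that ``taking expectations'' yields $\mathbb{E}[\epsilon_{t+1}] \leq (1-\eta_t(1-\gamma))\mathbb{E}[\epsilon_t] + \eta_t^2\sigma^2$, which it solves to read off $C_1/\sqrt{t}$. The gap you flag at the end of your proposal is real, and it is a gap in the paper's own argument rather than in yours. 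The paper's displayed recursion is a \emph{first}-moment inequality, yet the noise enters it as $\eta_t^2\sigma^2$, which is a \emph{second}-moment (variance) contribution; if $\mathbb{E}[\xi_t\mid\mathcal{F}_t]=0$ the noise contributes nothing to the first moment, and if one instead bounds $\mathbb{E}[|\xi_t|]\leq\sigma$ the additive term is $\eta_t\sigma$, whose quasi-fixed point is a non-decaying constant $\sigma/(1-\gamma)$. The only principled route to an $\eta_t^2\sigma^2$ injection is the mean-square recursion $v_{t+1}\leq(1-(1-\gamma)\eta_t)v_t+\eta_t^2\sigma^2$ you write down, and with $\eta_t=\eta_0/\sqrt{t}$ its solution balances at $v_t=O(\eta_0\sigma^2/((1-\gamma)\sqrt{t}))$, so Jensen gives only $\mathbb{E}[\epsilon_t]=O(t^{-1/4})$. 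You are also right that this schedule violates the square-summability half of the Robbins--Monro conditions that the lemma itself lists as hypothesis (iii), so the lemma's hypotheses are internally inconsistent as stated; and the constant produced by either reading of the recursion (scaling as $\eta_0$ or as $\sqrt{\eta_0}$) does not match the advertised $C_1=O(R_{\max}/((1-\gamma)\sqrt{\eta_0}))$.

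In short: your proposal reconstructs the paper's proof faithfully up to the point where the paper makes an unjustified leap, and your diagnosis of that leap is correct. To actually obtain $\mathbb{E}[\epsilon_t]\leq C_1/\sqrt{t}$ one would need either the schedule $\eta_t\propto 1/t$ (or a rescaled linear step size $\eta_t = \Theta(1/((1-\gamma)t))$, for which mean-square bounds of order $1/t$ are standard for synchronous Q-learning) or a high-probability Azuma-type concentration of the weighted martingale $Z_t$ that controls $\|\Delta_t\|_\infty$ directly without passing through Jensen. The paper does neither, so the stated rate is not established by the paper's proof; your refusal to close the bound under the stated schedule is the correct conclusion, not a deficiency of your argument.
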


\begin{proof}
Following the stochastic approximation framework of \citet{borkar2009stochastic}, we analyze the error recursion:
\begin{equation*}
\epsilon_{t+1}(s,a) \leq (1-\eta_t)\epsilon_t(s,a) + \eta_t \gamma \max_{a'} \epsilon_t(s',a') + \eta_t \xi_t,
\end{equation*}
where $\xi_t$ is the noise term with $\mathbb{E}[\xi_t|\mathcal{F}_t] = 0$.

Taking expectations and using the contraction property:
\begin{equation*}
\mathbb{E}[\epsilon_{t+1}] \leq (1-\eta_t(1-\gamma))\mathbb{E}[\epsilon_t] + \eta_t^2 \sigma^2,
\end{equation*}
where $\sigma^2$ bounds the variance of the noise.

With $\eta_t = \eta_0/\sqrt{t}$, solving this recursion yields:
\begin{equation*}
\mathbb{E}[\epsilon_t] \leq \frac{C_1}{\sqrt{t}},
\end{equation*}
where the constant $C_1$ depends on the initial error, discount factor, and noise variance.
\end{proof}

\subsubsection{Policy Convergence Analysis}

For the policy update, we consider gradient-based methods in the convex policy space:
\begin{equation*}
\pi_{t+1} = \Pi_{\mathcal{P}}\left[\pi_t + \eta_t \nabla_{\pi} J(\pi_t)\right],
\end{equation*}
where $\Pi_{\mathcal{P}}$ is the projection onto the policy space $\mathcal{P}$.

\begin{lemma}[Policy Suboptimality Bound]
\label{lemma:policy_convergence}
Under the following conditions:
\begin{itemize}
    \item[(i)] Convex and compact policy space $\mathcal{P}$
    \item[(ii)] $L$-Lipschitz continuous policy gradient: $\|\nabla J(\pi) - \nabla J(\pi')\| \leq L\|\pi - \pi'\|$
    \item[(iii)] Learning rate schedule $\eta_t = \eta_0/\sqrt{t}$
\end{itemize}
The policy suboptimality satisfies:
\begin{equation*}
\mathbb{E}\left[\max_{a^*} Q^*(s,a^*) - Q^*(s,a_t)\right] \leq \frac{C_2}{\sqrt{t}},
\end{equation*}
where $C_2 = O\left(\frac{LD(\mathcal{P})}{\sqrt{\eta_0}}\right)$ and $D(\mathcal{P})$ is the diameter of $\mathcal{P}$.
\end{lemma}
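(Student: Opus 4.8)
The plan is to treat the projected gradient ascent on the concave objective $J$ over the convex compact set $\mathcal{P}$ as a standard online/stochastic convex optimization problem and to import the classical $O(1/\sqrt{t})$ averaged-iterate guarantee, then translate the optimization-gap bound into the $Q^*$-suboptimality statement. First I would fix the state $s$ and view $J(\pi) = \mathbb{E}_{a \sim \pi(\cdot\mid s)}[Q^*(s,a)]$ (or the global return whose stationary points coincide with the per-state maximizers) so that $\max_{a^*} Q^*(s,a^*) - Q^*(s,a_t)$ is controlled by the optimization gap $J(\pi^*) - J(\pi_t)$ up to the linear-in-$\pi$ structure of the expected $Q$-value noted in the quasi-concavity discussion preceding Theorem~\ref{theorem:bne_existence}. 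This reduction is what lets me replace the combinatorial ``best action'' quantity with a smooth objective amenable to gradient analysis.

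Second, I would carry out the descent-lemma computation on the iterates $\pi_{t+1} = \Pi_{\mathcal{P}}[\pi_t + \eta_t \nabla J(\pi_t)]$. Using the nonexpansiveness of the Euclidean projection $\Pi_{\mathcal{P}}$ and the $L$-Lipschitz gradient assumption (ii), one obtains the standard one-step inequality bounding $\|\pi_{t+1} - \pi^*\|^2$ in terms of $\|\pi_t - \pi^*\|^2$, the gap $J(\pi^*) - J(\pi_t)$, and an $\eta_t^2 \|\nabla J(\pi_t)\|^2$ term. The gradient norm is bounded because $\mathcal{P}$ is compact and $\nabla J$ is Lipschitz, giving a uniform constant $G = O(L D(\mathcal{P}))$.

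Third, I would telescope the per-step inequality over $k=1,\dots,t$, plug in $\eta_k = \eta_0/\sqrt{k}$, and use $\sum_{k=1}^t 1/\sqrt{k} = O(\sqrt{t})$ together with the diameter bound $\|\pi_1 - \pi^*\| \le D(\mathcal{P})$ on the initial term. Dividing by $\sum_k \eta_k = \Theta(\eta_0 \sqrt{t})$ yields an averaged-iterate suboptimality of order $\bigl(D(\mathcal{P})^2/\eta_0 + \eta_0 G^2 \log\text{-free terms}\bigr)/\sqrt{t}$, which after collecting constants gives the claimed $C_2/\sqrt{t}$ with $C_2 = O(L D(\mathcal{P})/\sqrt{\eta_0})$. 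Taking expectations throughout handles the stochastic-gradient version via the martingale (unbiased-noise) structure already used in Lemma~\ref{lemma:q_convergence}.

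The main obstacle I anticipate is the reduction step rather than the optimization arithmetic: the bound is stated pointwise in $s$ for the ``hard-max'' quantity $\max_{a^*} Q^*(s,a^*) - Q^*(s,a_t)$, whereas gradient ascent naturally controls the \emph{expected} (soft) objective gap. Bridging these requires either the linearity of the expected $Q$-value in the mixed strategy $\pi$ (so that the optimization gap upper-bounds the expected action-suboptimality) or a regularity/gradient-domination condition ensuring the averaged iterate concentrates near a vertex. I would lean on the quasi-concavity and continuity conditions invoked for Theorem~\ref{theorem:bne_existence}, and on the compactness of $\mathcal{P}$, to make this passage rigorous; any gap-domination constant absorbed here would enter $C_2$ without changing the $1/\sqrt{t}$ rate.
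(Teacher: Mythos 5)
Your proposal follows essentially the same route as the paper's proof: both invoke the standard online-convex-optimization regret bound for (projected) gradient ascent with step size $\eta_t = \eta_0/\sqrt{t}$, obtain cumulative regret $O(\sqrt{t})$ from the diameter and Lipschitz-gradient terms, convert to a per-iterate gap of order $C_2/\sqrt{t}$, and then pass from the objective gap $J(\pi^*) - J(\pi_t)$ to the $Q^*$-suboptimality statement. You are in fact more careful than the paper at the two weak joints --- the paper silently bounds the \emph{last}-iterate gap by the \emph{average} gap (writing $J(\pi^*)-J(\pi_t) \le \frac{1}{t}\sum_{\tau=1}^{t}[J(\pi^*)-J(\pi_\tau)]$, which does not hold in general) and handles the reduction to $\max_{a^*}Q^*(s,a^*)-Q^*(s,a_t)$ with a one-line appeal to the performance difference lemma, whereas you explicitly flag both issues and propose the linearity-in-$\pi$ / gap-domination argument needed to close them.
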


\begin{proof}
Following the online convex optimization framework of \citet{hazan2016introduction}, the regret of gradient descent with learning rate $\eta_t = \eta_0/\sqrt{t}$ satisfies:
\begin{equation*}
\sum_{\tau=1}^t \left[J(\pi^*) - J(\pi_\tau)\right] \leq \frac{D^2(\mathcal{P})}{2\eta_0}\sqrt{t} + \frac{\eta_0 L^2 \sqrt{t}}{2}.
\end{equation*}

Dividing by $t$ and taking the limit:
\begin{equation*}
J(\pi^*) - J(\pi_t) \leq \frac{1}{t}\sum_{\tau=1}^t \left[J(\pi^*) - J(\pi_\tau)\right] \leq \frac{C_2}{\sqrt{t}}.
\end{equation*}

Since $J(\pi) = \mathbb{E}_{s \sim d^{\pi}}[V^{\pi}(s)]$ and by the performance difference lemma, this translates to the Q-value suboptimality bound.
\end{proof}

\subsubsection{Combining Error Terms}

From Lemmas \ref{lemma:q_convergence} and \ref{lemma:policy_convergence}, we have:
\begin{itemize}
    \item $|E_1(t)| \leq \epsilon_t \leq C_1/\sqrt{t}$
    \item $|E_2(t)| \leq \epsilon_t \leq C_1/\sqrt{t}$
    \item $\Delta(t) \leq C_2/\sqrt{t}$
\end{itemize}

Therefore:
\begin{equation*}
Q_i^{\pi_t}(s_t, a_i^{*t}, a_{-i}^{*t}) - Q_i^{\pi_t}(s_t, a_i^t, a_{-i}^t) \leq \frac{2C_1 + C_2}{\sqrt{t}} := \frac{C}{\sqrt{t}}.
\end{equation*}

\subsubsection{Final Regret Bound}

Substituting back into the regret expression:
\begin{align*}
R(T) &= \sum_{i=1}^N R_i(T) \\
&\leq \sum_{i=1}^N \frac{1}{1-\gamma} \sum_{t=1}^T \mathbb{E}\left[Q_i^{\pi_t}(s_t, a_i^{*t}, a_{-i}^{*t}) - Q_i^{\pi_t}(s_t, a_i^t, a_{-i}^t)\right] \\
&\leq \sum_{i=1}^N \frac{1}{1-\gamma} \sum_{t=1}^T \frac{C}{\sqrt{t}} \\
&= \frac{NC}{1-\gamma} \sum_{t=1}^T \frac{1}{\sqrt{t}}.
\end{align*}

\begin{lemma}[Harmonic Sum Bound]
For the harmonic sum with exponent $1/2$:
\begin{equation*}
\sum_{t=1}^T \frac{1}{\sqrt{t}} \leq \int_1^{T+1} \frac{1}{\sqrt{x}} dx = 2\sqrt{T+1} - 2 \leq 2\sqrt{T}.
\end{equation*}
\end{lemma}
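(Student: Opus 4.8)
The plan is to prove the bound by the integral (comparison) test, exploiting that $f(x)=1/\sqrt{x}$ is positive and strictly decreasing on $(0,\infty)$. The single fact I would use repeatedly is that for a decreasing $f$ and any integer $t\ge 1$, the value $f(t)$ is dominated by the integral of $f$ over the unit interval lying to its \emph{left}: since $f(x)\ge f(t)$ for every $x\in[t-1,t]$, integrating gives $f(t)\le\int_{t-1}^{t}f(x)\,dx$. This is the correct orientation for \emph{upper}-bounding a sum of a decreasing sequence.

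Summing this inequality over $t=1,\dots,T$ telescopes the integration intervals into a single range, so that
\begin{equation*}
\sum_{t=1}^{T}\frac{1}{\sqrt{t}}\;\le\;\sum_{t=1}^{T}\int_{t-1}^{t}\frac{1}{\sqrt{x}}\,dx\;=\;\int_{0}^{T}\frac{1}{\sqrt{x}}\,dx\;=\;\bigl[\,2\sqrt{x}\,\bigr]_{0}^{T}\;=\;2\sqrt{T},
\end{equation*}
where the integral over $[0,1]$ is improper but convergent (equal to $2$), which is exactly what lets the $t=1$ term $f(1)=1$ be absorbed. This already delivers the claimed right-hand side $2\sqrt{T}$ in one stroke, without passing through $\int_1^{T+1}$.

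The step I expect to be the main obstacle is getting the \emph{direction} of the comparison right, and here I would flag a discrepancy with the intermediate expression in the statement. Comparing instead against the interval $[t,t+1]$ to the \emph{right} of $t$ gives $\int_{t}^{t+1}f\le f(t)$, hence $\int_{1}^{T+1}f(x)\,dx\le\sum_{t=1}^{T}f(t)$; that is, $\int_1^{T+1}1/\sqrt{x}\,dx$ is a \emph{lower} bound for the sum rather than an upper bound (e.g.\ at $T=1$ the sum is $1$ while $2\sqrt{2}-2\approx0.83$). Consequently I would not route the proof through $\int_1^{T+1}$; the clean route is the left-endpoint comparison above. An equivalent fix that keeps the $\int_1^{\cdot}$ form is to peel off the first term, writing $\sum_{t=1}^{T}f(t)=1+\sum_{t=2}^{T}f(t)\le 1+\int_{1}^{T}f(x)\,dx=2\sqrt{T}-1\le 2\sqrt{T}$, applying $f(t)\le\int_{t-1}^{t}f$ only for $t\ge 2$. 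Either way the concluding inequality $2\sqrt{T+1}-2\le 2\sqrt{T}$ is immediate, since squaring $\sqrt{T+1}\le\sqrt{T}+1$ reduces it to $0\le 2\sqrt{T}$.
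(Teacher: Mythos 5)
Your proof is correct, and you have in fact caught a genuine error in the paper's own statement of this lemma. The paper's ``proof'' is nothing more than the displayed chain itself, whose first link asserts $\sum_{t=1}^T t^{-1/2} \le \int_1^{T+1} x^{-1/2}\,dx$; as you observe, for a decreasing integrand the comparison against the interval $[t,t+1]$ runs the other way, so $\int_1^{T+1} x^{-1/2}\,dx = 2\sqrt{T+1}-2$ is a \emph{lower} bound on the sum, and your counterexample at $T=1$ (sum $=1$ versus $2\sqrt{2}-2\approx 0.83$) settles the point. Your left-endpoint comparison $f(t)\le\int_{t-1}^{t}f$, summed into $\int_0^T x^{-1/2}\,dx = 2\sqrt{T}$ (with the improper integral at $0$ convergent), is the correct repair, as is your alternative of peeling off the $t=1$ term to get the slightly sharper $2\sqrt{T}-1$. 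Either way the lemma's final conclusion $\sum_{t=1}^T t^{-1/2}\le 2\sqrt{T}$ survives, so the downstream regret bound $O\bigl(N\sqrt{T}/(1-\gamma)\bigr)$ in Appendix~\ref{appendix:bounding_bayesian_regret} is unaffected; only the intermediate expression $\int_1^{T+1}x^{-1/2}\,dx$ in the lemma statement should be replaced by $\int_0^{T}x^{-1/2}\,dx$ (or the first term handled separately) to make the chain valid.
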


Therefore, the total regret satisfies:
\begin{equation*}
R(T) \leq \frac{2NC\sqrt{T}}{1-\gamma} = O\left(\frac{N\sqrt{T}}{1-\gamma}\right).
\end{equation*}

This completes the rigorous proof of the sublinear convergence rate, explicitly showing how the learning rate schedule $\eta_t = \eta_0/\sqrt{t}$ ensures the $O(t^{-1/2})$ convergence of both Q-function estimation errors and policy suboptimality, leading to the final $O(N\sqrt{T}/(1-\gamma))$ regret bound.

\subsection{Comparison with Multi-Agent Debate}
\label{appendix:comparison_debate}

We now analyze the regret bound in multi-agent debate settings to contrast with our cooperative framework. The key difference lies in the persistence of policy suboptimality due to the competitive nature of debate.

\subsubsection{Game-Theoretic Setup for Debate}

In multi-agent debate settings, agents operate in a competitive environment characterized by:
\begin{itemize}
    \item Zero-sum or constant-sum reward structure: $\sum_{i=1}^N r_i(s,a) = c$ for all $(s,a)$
    \item Strategic uncertainty: each agent must model opponents' strategies
    \item No coordination mechanism: agents independently optimize their policies
\end{itemize}

\subsubsection{Fundamental Limits in Competitive Settings}
\begin{lemma}[Persistent Suboptimality in Competitive Games]
\label{lemma:debate_suboptimality}
Consider a multi-agent debate setting with $N$ agents in a zero-sum game. Under the following conditions:
\begin{itemize}
    \item[(i)] The game has no pure strategy Nash equilibrium
    \item[(ii)] Agents use no-regret learning algorithms
    \item[(iii)] Each agent faces strategic uncertainty about opponents
\end{itemize}
Then there exists a constant $\delta_{\min} > 0$ such that for all $t \geq T_0$:
\begin{equation*}
\mathbb{E}\left[\max_{a^*} Q_i^*(s_t, a_i^*, a_{-i}^t) - Q_i^*(s_t, a_i^t, a_{-i}^t)\right] \geq \delta_{\min}.
\end{equation*}
\end{lemma}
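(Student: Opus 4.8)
The plan is to exploit the gap between two notions of regret: the \emph{external} (time-averaged, best-fixed-action-in-hindsight) regret that condition (ii) forces to vanish, and the \emph{per-round best-response} regret that the left-hand side of Lemma~\ref{lemma:debate_suboptimality} actually measures. The central observation is that these two quantities differ by a strictly positive amount whenever the opponents play a genuinely mixed strategy, and that the absence of a pure Nash equilibrium (condition (i)) forces exactly such non-degenerate play in the limit. Thus the no-regret guarantee, far from driving the lemma's quantity to zero, leaves a constant residual.

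First I would record a quantitative form of condition (i). Because the stage game has finitely many actions and $Q_i^*$ is continuous, the maximum deviation gain $\max_{a_i'} Q_i^*(s,a_i',a_{-i}) - Q_i^*(s,a_i,a_{-i})$ is a continuous function on a compact set; since no pure profile is a Nash equilibrium, this quantity is strictly positive at every pure profile for at least one agent, and hence bounded below by some $\delta_0 > 0$ uniformly. This $\delta_0$ seeds the final constant $\delta_{\min}$. Next I would pin down the limiting play: under condition (ii) the empirical distribution of the joint play converges to the set of coarse correlated equilibria, and combined with conditions (i) and (iii) the marginal play $\pi_{-i}^t$ faced by the relevant (mixing) agent stabilizes, for $t \ge T_0$, to a non-degenerate distribution whose support calls for different best responses.

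The final step is a Jensen-type lower bound on the lemma's quantity:
\[
\mathbb{E}\left[\max_{a_i^*} Q_i^*(s_t, a_i^*, a_{-i}^t) - Q_i^*(s_t, a_i^t, a_{-i}^t)\right]
\;\ge\;
\mathbb{E}_{a_{-i}^t}\left[\max_{a_i} Q_i^*(s_t, a_i, a_{-i}^t)\right]
-
\max_{a_i}\mathbb{E}_{a_{-i}^t}\left[Q_i^*(s_t, a_i, a_{-i}^t)\right],
\]
where the inequality holds because the agent's realized payoff, being a convex combination over its own actions, is at most the best fixed response value. The right-hand side is precisely the gap between ``expectation of the max'' and ``max of the expectation,'' which is strictly positive — and bounded below by a constant multiple of $\delta_0$ — exactly when $\pi_{-i}^t$ is non-degenerate with a non-constant best-response correspondence. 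Setting $\delta_{\min}$ to this constant completes the argument.

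The main obstacle is the mismatch between the \emph{time-averaged} guarantee of no-regret learning and the \emph{per-step} expectation in the statement: condition (ii) controls $\frac{1}{T}\sum_t$ of the external regret, not the instantaneous strategies, which may keep fluctuating. I would handle this by first proving the bound in Cesàro form, $\frac{1}{T}\sum_{t=T_0}^{T}\mathbb{E}[\cdot] \ge \delta_{\min}$, which is in fact exactly what the subsequent linear-regret comparison requires, and then invoking stabilization of the dynamics for $t \ge T_0$ to recover the stated per-step inequality. A secondary subtlety is identifying \emph{which} agent carries the persistent gap: in asymmetric games only the agents forced to randomize see a non-degenerate opponent distribution, so the cleanest robust form of the claim bounds $\max_i$ (equivalently, the sum) of the per-agent gaps below by $\delta_{\min}$, which still suffices to force linear total regret in the comparison with ECON.
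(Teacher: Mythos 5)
Your route is genuinely different from the paper's. The paper argues via the minimax theorem: since the zero-sum game has no pure Nash equilibrium, any (near-)deterministic policy is exploitable by an adversarial response with gap $\epsilon$, so no-regret learners must keep entropy $H(\pi_i)\ge h_{\min}>0$ to avoid exploitation (citing Fudenberg--Levine), and it then sets $\delta_{\min}=\epsilon h_{\min}/(2(1-\gamma))$. You instead lower-bound the lemma's quantity by the Jensen gap $\mathbb{E}_{a_{-i}}[\max_{a_i}Q_i^*]-\max_{a_i}\mathbb{E}_{a_{-i}}[Q_i^*]$, using independence of $a_i^t$ and $a_{-i}^t$ (your condition (iii)) to pass from the agent's realized payoff to the best fixed response against the opponent marginal; positivity of that gap is then read off from non-degeneracy of $\pi_{-i}^t$ plus the uniform deviation gain $\delta_0$ extracted from condition (i). Your decomposition is cleaner in one respect: it makes explicit that the persistent term is exactly the ``value of knowing the opponent's realization,'' whereas the paper's final step (entropy bound $\Rightarrow$ payoff gap) is asserted rather than derived. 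Your Ces\`{a}ro reformulation is also a genuine improvement in honesty, since the downstream comparison with ECON only needs $\sum_t \mathbb{E}[\cdot]=\Omega(T)$, and no-regret guarantees are inherently time-averaged.

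That said, both arguments rest on the same unproven pivot, and you should not regard yours as closing it: neither the paper's $h_{\min}>0$ claim nor your ``the marginal play $\pi_{-i}^t$ stabilizes to a non-degenerate distribution for $t\ge T_0$'' follows from conditions (i)--(iii) as stated. No-regret dynamics in zero-sum games guarantee convergence of \emph{empirical averages} to equilibrium; instantaneous strategies can cycle, and along such cycles the opponent marginal at a fixed $t$ may be arbitrarily close to a point mass, collapsing your Jensen gap at that step. So the per-step inequality in the lemma is exactly as fragile under your argument as under the paper's, and the quantitative link from $\delta_0$ (a property of pure profiles) to a uniform lower bound on the Jensen gap additionally requires the support probabilities of $\pi_{-i}^t$ to be bounded away from zero --- which is the $h_{\min}$ assumption in disguise. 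Your fix of retreating to the Ces\`{a}ro form is the right move and suffices for the $\Omega(T)$ comparison; note only that restricting the gap to the randomizing agents (your $\max_i$ version) yields $\Omega(T/(1-\gamma))$ total regret rather than the paper's claimed $\Omega(NT/(1-\gamma))$, which still establishes the linear-versus-sublinear separation.
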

\begin{proof}
In zero-sum games without pure strategy equilibria, the Nash equilibrium requires mixed strategies. By the minimax theorem ~\citep{ben2023new}:
\begin{equation*}
\max_{\pi_i} \min_{\pi_{-i}} J_i(\pi_i, \pi_{-i}) = \min_{\pi_{-i}} \max_{\pi_i} J_i(\pi_i, \pi_{-i}) = v_i,
\end{equation*}
where $v_i$ is the value of the game for agent $i$.
For any deterministic policy $\pi_i^t$ at time $t$, there exists an adversarial response $\pi_{-i}^*$ such that:
\begin{equation*}
J_i(\pi_i^t, \pi_{-i}^*) \leq v_i - \epsilon,
\end{equation*}
where $\epsilon > 0$ is the exploitability gap.
Since agents use no-regret learning, they must maintain sufficient randomization to avoid exploitation. Following \citet{fudenberg1998theory}, the minimum entropy required is:
\begin{equation*}
H(\pi_i) \geq h_{\min} > 0.
\end{equation*}
This entropy constraint directly implies a lower bound on policy suboptimality:
\begin{equation*}
\delta_{\min} = \frac{\epsilon h_{\min}}{2(1-\gamma)}.
\end{equation*}
\end{proof}
\subsubsection{Regret Analysis for Debate}
Following the same decomposition framework from Section \ref{appendix:bounding_bayesian_regret}, we have:
\begin{align*}
V_i^{*}(s_t) - V_i^{\pi_t}(s_t) &= \frac{1}{1 - \gamma} \mathbb{E}_{a_i, a_{-i}} \left[ Q_i^{\pi_t}(s_t, a_i^{*t}, a_{-i}^{*t}) - Q_i^{\pi_t}(s_t, a_i^t, a_{-i}^t) \right] \\
&= \frac{1}{1 - \gamma} \left( E_1(t) + \Delta_{\text{debate}}(t) + E_2(t) \right),
\end{align*}
where:
\begin{itemize}
    \item $E_1(t), E_2(t) \leq C_1/\sqrt{t}$ (Q-function estimation errors, same as before)
    \item $\Delta_{\text{debate}}(t) \geq \delta_{\min}$ (persistent policy suboptimality from Lemma \ref{lemma:debate_suboptimality})
\end{itemize}
Computing the regret for agent $i$:
\begin{align*}
R_i^{\text{debate}}(T) &= \mathbb{E} \left[ \sum_{t=1}^T \left( V_i^{*}(s_t) - V_i^{\pi_t}(s_t) \right) \right] \\
&\geq \frac{1}{1 - \gamma} \sum_{t=1}^T \left(\delta_{\min} - \frac{2C_1}{\sqrt{t}}\right).
\end{align*}
For sufficiently large $T$, there exists $T_0$ such that for all $t \geq T_0$: $\delta_{\min} > 2C_1/\sqrt{t}$. Therefore:
\begin{align*}
R_i^{\text{debate}}(T) &\geq \frac{1}{1 - \gamma} \left[ \sum_{t=1}^{T_0} \left(\delta_{\min} - \frac{2C_1}{\sqrt{t}}\right) + \sum_{t=T_0+1}^T \frac{\delta_{\min}}{2} \right] \\
&\geq \frac{\delta_{\min}(T - T_0)}{2(1 - \gamma)} \\
&= \Omega\left(\frac{T}{1-\gamma}\right).
\end{align*}
Summing over all agents:
\begin{equation*}
R_{\text{debate}}(T) = \sum_{i=1}^N R_i^{\text{debate}}(T) = \Omega\left(\frac{NT}{1-\gamma}\right).
\end{equation*}
\subsubsection{Comparative Analysis}

The fundamental difference between ECON and debate settings is:

\textbf{ECON (Cooperative Learning):}
\begin{itemize}
    \item Policy suboptimality: $\Delta_{\text{ECON}}(t) = O(1/\sqrt{t}) \to 0$
    \item Total regret: $R_{\text{ECON}}(T) = O(N\sqrt{T}/(1-\gamma))$
    \item Convergence to Bayesian Nash Equilibrium through coordination
\end{itemize}

\textbf{Multi-Agent Debate (Competitive Learning):}
\begin{itemize}
    \item Policy suboptimality: $\Delta_{\text{debate}}(t) \geq \delta_{\min} > 0$
    \item Total regret: $R_{\text{debate}}(T) = \Omega(NT/(1-\gamma))$
    \item Persistent randomization required to avoid exploitation
\end{itemize}

This $O(\sqrt{T})$ vs $\Omega(T)$ gap demonstrates that competitive debate settings suffer from:
\begin{enumerate}
    \item \textbf{Strategic Cycling}: Agents continuously adapt to opponents, preventing convergence to optimal deterministic policies
    \item \textbf{Exploration-Exploitation Conflict}: Need for defensive randomization conflicts with exploitation of learned knowledge
    \item \textbf{Information Inefficiency}: Lack of coordination prevents efficient use of collective information
\end{enumerate}

In contrast, ECON's cooperative framework with Bayesian policy optimization enables agents to:
\begin{itemize}
    \item Share information through the communication phase
    \item Coordinate strategies toward Bayesian Nash Equilibrium
    \item Achieve diminishing policy suboptimality through joint optimization
\end{itemize}

This theoretical analysis is supported by empirical evidence in competitive multi-agent settings ~\citep{lanctot2017unified, lowe2017multi}, where agents exhibit persistent strategic cycling and fail to achieve sublinear regret bounds.

\subsection{Detailed Reward Setting}
\label{appendix:Detailed Reward Setting}
The reward function \( R \) provides feedback on each agent's performance while respecting Assumption~\ref{assumption:bounded_rewards}, ensuring all reward components are uniformly bounded by \( R_{\max} \). Drawing inspiration from maximum entropy inverse reinforcement learning~\citep{zhu2023principled}, we define the Action Likelihood Reward \( r_i^{\text{AL}} = \min(R_{\max}, \text{sim}(u_i, C)) \), where \(\text{sim}(u_i, C) = \frac{u_i \cdot C}{\|u_i\| \|C\|}\) measures the consistency between an agent's output \(u_i\) and the coordinator's final output \(C\). Following~\citet{hao2023reasoning}, the Task-Specific Reward \( r_i^{\text{TS}} = \min(R_{\max}, \text{eval}(u_i, \text{task})) \) evaluates domain-specific objectives through the coordinator's assessment, where \(\text{eval}\) computes normalized scores considering solution correctness in mathematical problems or response relevance in planning tasks. Building upon~\citet{xie2024self}, the Collaborative Contribution Reward \( r_i^{\text{CC}} = \min(R_{\max}, \text{quality}(u_i, \{u_j\}_{j \neq i})) \) enables the coordinator to assess each agent's output quality within the multi-agent context, where \(\text{quality}\) evaluates the response's coherence and creativity while considering its contribution to the collective solution. The total reward combines these components as \( r_i = \alpha_1 r_i^{\text{AL}} + \alpha_2 r_i^{\text{TS}} + \alpha_3 r_i^{\text{CC}} \), where the weights \( \alpha_1 + \alpha_2 + \alpha_3 = 1 \) ensure the total reward is bounded by \( R_{\max} \). To enhance adaptability and learning efficiency, we introduce a dynamic mechanism to adjusts these weights using gradient-based updates \(\alpha_k \leftarrow \alpha_k - \eta_{\alpha} \cdot \partial \mathcal{L}_{\text{dr}} / \partial \alpha_k\), where \(\mathcal{L}_{\text{dr}} = \sum_{i=1}^N ( r_i^{\text{actual}} - r_i^{\text{expected}} )^2\) measures the discrepancy between actual and expected rewards.

\subsection{Task Setups }
\label{appendix:task setups}
GSM8K  is a benchmark for mathematical reasoning that requires multi-step problem solving. Given a context description and a question, it requires step-by-step mathematical reasoning and computation to arrive at a final answer. The dataset contains approximately 7.5K problems in the training set and 1.3K problems in the test set. Problems range from basic arithmetic to complex word problems, testing both mathematical and logical reasoning capabilities.

SVAMP  is a challenging mathematical word problem dataset specifically designed to test the robustness of language models in solving arithmetic problems. It contains 1,000 elementary math word problems, carefully curated to probe for specific vulnerabilities in mathematical reasoning systems. The problems require understanding both mathematical concepts and natural language semantics, with a focus on structural variations that test genuine problem-solving capabilities rather than pattern matching.

Strategy QA  is a question answering dataset that focuses on multi-hop reasoning and strategic thinking. It consists of 2,290 yes/no questions, each requiring implicit multi-step reasoning and background knowledge to arrive at the correct answer. Unlike traditional QA datasets, Strategy QA questions cannot be answered by simply retrieving and combining explicit facts, making it an effective benchmark for testing complex reasoning capabilities.

MATH  is a comprehensive mathematics dataset spanning various topics from algebra to calculus. It contains approximately 12K problems across different difficulty levels, with detailed step-by-step solutions. The dataset is structured into multiple categories including algebra, counting and probability, geometry, intermediate algebra, number theory, prealgebra, and precalculus, making it particularly effective for evaluating mathematical problem-solving capabilities across different domains.

GSM-Hard is a specialized subset of mathematical word problems specifically designed to test advanced reasoning capabilities. It contains problems that are significantly more challenging than standard GSM8K problems, requiring more complex multi-step reasoning and mathematical operations. The dataset focuses on problems that typically have lower success rates with standard approaches, making it particularly useful for evaluating the upper bounds of model performance.

TravelPlanner  is a benchmark crafted for evaluating language agents in tool-use and complex planning within multiple constraints. The dataset comprises 1,225 queries in total, divided into training (45 queries), validation (180 queries), and test (1,000 queries) sets. The benchmark incorporates three types of constraints: environment constraints for testing adaptability to real-world conditions, commonsense constraints for evaluating practical reasoning, and hard constraints for assessing the ability to satisfy specific user requirements such as budget limitations. This structure makes TravelPlanner particularly effective for evaluating both reasoning capabilities and practical planning skills in real-world scenarios.
\newpage

\subsection{Hyperparameter}
\label{assumption:hyperparameter}

\begin{minipage}[t]{0.48\textwidth}
\subsubsection{LLaMA 3.1 8B on MATH}
\begin{table}[H]
\centering
\caption{Hyperparameters (8B, MATH)}
\label{tab:hp_8b_math}
\begin{tabular}{ll}
\toprule
\textbf{Parameter} & \textbf{Value} \\
\midrule
\multicolumn{2}{l}{\textbf{Training Configuration}} \\
Episodes per Task   & 120   \\
Buffer Size         & 32    \\
Batch Size          & 16    \\
Update Interval     & 8     \\
Optimizer           & Adam  \\
Learning Rate ($\eta$)          & 0.001  \\
Learning Rate ($\eta_{\text{coord}}$) & 0.0005 \\
Discount Factor ($\gamma$)      & 0.99   \\
\midrule
\multicolumn{2}{l}{\textbf{Network Architecture}} \\
Entity Dimension ($d$)          & 256    \\
Belief State Dimension ($d_b$)  & 128    \\
Attention Heads ($H$)           & 4      \\
MLP Hidden Size                 & 256    \\
Transformer Blocks              & 2      \\
Key/Query Dimension             & 64     \\
Feed-forward Size               & 1024   \\
Dropout Rate                    & 0.1    \\
Layer Norm Epsilon              & 1e-5   \\
\midrule
\multicolumn{2}{l}{\textbf{Temperature \& Sampling}} \\
$T_{\text{min}}$  & 0.1   \\
$T_{\text{max}}$  & 2.0   \\
$p_{\text{min}}$  & 0.1   \\
$p_{\text{max}}$  & 0.9   \\
\midrule
\multicolumn{2}{l}{\textbf{Reward Configuration}} \\
$R_{\text{max}}$          & 1.0   \\
$\alpha_1$, $\alpha_2$, $\alpha_3$ & 0.4, 0.4, 0.2  \\
\midrule
\multicolumn{2}{l}{\textbf{Loss Weights}} \\
$\lambda_b$, $\lambda$, $\lambda_m$ & 0.1, 0.1, 0.1 \\
\midrule
\multicolumn{2}{l}{\textbf{Early Stopping}} \\
$\epsilon_C$       & 0.01   \\
$\epsilon_L$       & 1e-4   \\
$R_{\text{threshold}}$ & 0.7 \\
$T_{\text{patience}}$  & 5   \\
\midrule
\multicolumn{2}{l}{\textbf{Model Size}} \\
Learnable Params   & $\sim$1.7M \\
\bottomrule
\end{tabular}
\end{table}
\end{minipage}
\hfill
\begin{minipage}[t]{0.48\textwidth}
\subsubsection{LLaMA 3.1 8B on GSM8K}
\begin{table}[H]
\centering
\caption{Hyperparameters (8B, GSM8K)}
\label{tab:hp_8b_gsm8k}
\begin{tabular}{ll}
\toprule
\textbf{Parameter} & \textbf{Value} \\
\midrule
\multicolumn{2}{l}{\textbf{Training Configuration}} \\
Episodes per Task    & 100   \\
Buffer Size          & 32    \\
Batch Size           & 16    \\
Update Interval      & 8     \\
Optimizer            & Adam  \\
Learning Rate ($\eta$)          & 0.001  \\
Learning Rate ($\eta_{\text{coord}}$) & 0.0005 \\
Discount Factor ($\gamma$)      & 0.99   \\
\midrule
\multicolumn{2}{l}{\textbf{Network Architecture}} \\
Entity Dimension ($d$)          & 256    \\
Belief State Dimension ($d_b$)  & 128    \\
Attention Heads ($H$)           & 4      \\
MLP Hidden Size                 & 256    \\
Transformer Blocks              & 2      \\
Key/Query Dimension             & 64     \\
Feed-forward Size               & 1024   \\
Dropout Rate                    & 0.1    \\
Layer Norm Epsilon              & 1e-5   \\
\midrule
\multicolumn{2}{l}{\textbf{Temperature \& Sampling}} \\
$T_{\text{min}}$  & 0.1   \\
$T_{\text{max}}$  & 2.0   \\
$p_{\text{min}}$  & 0.1   \\
$p_{\text{max}}$  & 0.9   \\
\midrule
\multicolumn{2}{l}{\textbf{Reward Configuration}} \\
$R_{\text{max}}$          & 1.0   \\
$\alpha_1$, $\alpha_2$, $\alpha_3$ & 0.4, 0.4, 0.2  \\
\midrule
\multicolumn{2}{l}{\textbf{Loss Weights}} \\
$\lambda_b$, $\lambda$, $\lambda_m$ & 0.1, 0.1, 0.1 \\
\midrule
\multicolumn{2}{l}{\textbf{Early Stopping}} \\
$\epsilon_C$       & 0.01   \\
$\epsilon_L$       & 1e-4   \\
$R_{\text{threshold}}$ & 0.7 \\
$T_{\text{patience}}$  & 5   \\
\midrule
\multicolumn{2}{l}{\textbf{Model Size}} \\
Learnable Params   & $\sim$1.7M \\
\bottomrule
\end{tabular}
\end{table}
\end{minipage}

\begin{minipage}[t]{0.48\textwidth}
\subsubsection{LLaMA 3.1 70B on MATH}
\begin{table}[H]
\centering
\caption{Hyperparameters (70B, MATH)}
\label{tab:hp_70b_math}
\begin{tabular}{ll}
\toprule
\textbf{Parameter} & \textbf{Value} \\
\midrule
\multicolumn{2}{l}{\textbf{Training Configuration}} \\
Episodes per Task    & 150   \\
Buffer Size          & 32    \\
Batch Size           & 16    \\
Update Interval      & 8     \\
Optimizer            & Adam  \\
Learning Rate ($\eta$)          & 0.001  \\
Learning Rate ($\eta_{\text{coord}}$) & 0.0005 \\
Discount Factor ($\gamma$)      & 0.99   \\
\midrule
\multicolumn{2}{l}{\textbf{Network Architecture}} \\
Entity Dimension ($d$)          & 256    \\
Belief State Dimension ($d_b$)  & 128    \\
Attention Heads ($H$)           & 4      \\
MLP Hidden Size                 & 256    \\
Transformer Blocks              & 2      \\
Key/Query Dimension             & 64     \\
Feed-forward Size               & 1024   \\
Dropout Rate                    & 0.1    \\
Layer Norm Epsilon              & 1e-5   \\
\midrule
\multicolumn{2}{l}{\textbf{Temperature \& Sampling}} \\
$T_{\text{min}}$  & 0.1   \\
$T_{\text{max}}$  & 2.0   \\
$p_{\text{min}}$  & 0.1   \\
$p_{\text{max}}$  & 0.9   \\
\midrule
\multicolumn{2}{l}{\textbf{Reward Configuration}} \\
$R_{\text{max}}$          & 1.0   \\
$\alpha_1$, $\alpha_2$, $\alpha_3$ & 0.4, 0.4, 0.2 \\
\midrule
\multicolumn{2}{l}{\textbf{Loss Weights}} \\
$\lambda_b$, $\lambda$, $\lambda_m$ & 0.1, 0.1, 0.1 \\
\midrule
\multicolumn{2}{l}{\textbf{Early Stopping}} \\
$\epsilon_C$       & 0.01   \\
$\epsilon_L$       & 1e-4   \\
$R_{\text{threshold}}$ & 0.7 \\
$T_{\text{patience}}$  & 5   \\
\midrule
\multicolumn{2}{l}{\textbf{Model Size}} \\
Learnable Params   & $\sim$1.7M \\
\bottomrule
\end{tabular}
\end{table}
\end{minipage}
\hfill
\begin{minipage}[t]{0.48\textwidth}
\subsubsection{LLaMA 3.1 70B on GSM8K}
\begin{table}[H]
\centering
\caption{Hyperparameters (70B, GSM8K)}
\label{tab:hp_70b_gsm8k}
\begin{tabular}{ll}
\toprule
\textbf{Parameter} & \textbf{Value} \\
\midrule
\multicolumn{2}{l}{\textbf{Training Configuration}} \\
Episodes per Task    & 100   \\
Buffer Size          & 32    \\
Batch Size           & 16    \\
Update Interval      & 8     \\
Optimizer            & Adam  \\
Learning Rate ($\eta$)          & 0.001  \\
Learning Rate ($\eta_{\text{coord}}$) & 0.0005 \\
Discount Factor ($\gamma$)      & 0.99   \\
\midrule
\multicolumn{2}{l}{\textbf{Network Architecture}} \\
Entity Dimension ($d$)          & 256    \\
Belief State Dimension ($d_b$)  & 128    \\
Attention Heads ($H$)           & 4      \\
MLP Hidden Size                 & 256    \\
Transformer Blocks              & 2      \\
Key/Query Dimension             & 64     \\
Feed-forward Size               & 1024   \\
Dropout Rate                    & 0.1    \\
Layer Norm Epsilon              & 1e-5   \\
\midrule
\multicolumn{2}{l}{\textbf{Temperature \& Sampling}} \\
$T_{\text{min}}$  & 0.1   \\
$T_{\text{max}}$  & 2.0   \\
$p_{\text{min}}$  & 0.1   \\
$p_{\text{max}}$  & 0.9   \\
\midrule
\multicolumn{2}{l}{\textbf{Reward Configuration}} \\
$R_{\text{max}}$          & 1.0   \\
$\alpha_1$, $\alpha_2$, $\alpha_3$ & 0.4, 0.4, 0.2 \\
\midrule
\multicolumn{2}{l}{\textbf{Loss Weights}} \\
$\lambda_b$, $\lambda$, $\lambda_m$ & 0.1, 0.1, 0.1 \\
\midrule
\multicolumn{2}{l}{\textbf{Early Stopping}} \\
$\epsilon_C$       & 0.01   \\
$\epsilon_L$       & 1e-4   \\
$R_{\text{threshold}}$ & 0.7 \\
$T_{\text{patience}}$  & 5   \\
\midrule
\multicolumn{2}{l}{\textbf{Model Size}} \\
Learnable Params   & $\sim$1.7M \\
\bottomrule
\end{tabular}
\end{table}
\end{minipage}

\begin{minipage}[t]{0.48\textwidth}
\subsubsection{LLaMA 3.1 405B on MATH}
\begin{table}[H]
\centering
\caption{Hyperparameters (405B, MATH)}
\label{tab:hp_405b_math}
\begin{tabular}{ll}
\toprule
\textbf{Parameter} & \textbf{Value} \\
\midrule
\multicolumn{2}{l}{\textbf{Training Configuration}} \\
Episodes per Task    & 200   \\
Buffer Size          & 64    \\
Batch Size           & 32    \\
Update Interval      & 8     \\
Optimizer            & AdamW \\
Learning Rate ($\eta$)          & 0.0005 \\
Learning Rate ($\eta_{\text{coord}}$) & 0.0003 \\
Discount Factor ($\gamma$)      & 0.99   \\
\midrule
\multicolumn{2}{l}{\textbf{Network Architecture}} \\
Entity Dimension ($d$)          & 512    \\
Belief State Dimension ($d_b$)  & 256    \\
Attention Heads ($H$)           & 8      \\
MLP Hidden Size                 & 512    \\
Transformer Blocks              & 4      \\
Key/Query Dimension             & 64     \\
Feed-forward Size               & 2048   \\
Dropout Rate                    & 0.1    \\
Layer Norm Epsilon              & 1e-5   \\
\midrule
\multicolumn{2}{l}{\textbf{Temperature \& Sampling}} \\
$T_{\text{min}}$  & 0.1   \\
$T_{\text{max}}$  & 2.0   \\
$p_{\text{min}}$  & 0.1   \\
$p_{\text{max}}$  & 0.9   \\
\midrule
\multicolumn{2}{l}{\textbf{Reward Configuration}} \\
$R_{\text{max}}$          & 1.0   \\
$\alpha_1$, $\alpha_2$, $\alpha_3$ & 0.3, 0.5, 0.2 \\
\midrule
\multicolumn{2}{l}{\textbf{Loss Weights}} \\
$\lambda_b$, $\lambda$, $\lambda_m$ & 0.1, 0.1, 0.1 \\
\midrule
\multicolumn{2}{l}{\textbf{Early Stopping}} \\
$\epsilon_C$       & 0.01   \\
$\epsilon_L$       & 1e-4   \\
$R_{\text{threshold}}$ & 0.75 \\
$T_{\text{patience}}$  & 8   \\
\midrule
\multicolumn{2}{l}{\textbf{Model Size}} \\
Learnable Params   & $\sim$2.5M \\
\bottomrule
\end{tabular}
\end{table}
\end{minipage}
\hfill
\begin{minipage}[t]{0.48\textwidth}
\subsubsection{LLaMA 3.1 405B on GSM8K}
\begin{table}[H]
\centering
\caption{Hyperparameters (405B, GSM8K)}
\label{tab:hp_405b_gsm8k}
\begin{tabular}{ll}
\toprule
\textbf{Parameter} & \textbf{Value} \\
\midrule
\multicolumn{2}{l}{\textbf{Training Configuration}} \\
Episodes per Task    & 150   \\
Buffer Size          & 64    \\
Batch Size           & 32    \\
Update Interval      & 8     \\
Optimizer            & AdamW \\
Learning Rate ($\eta$)          & 0.0005 \\
Learning Rate ($\eta_{\text{coord}}$) & 0.0003 \\
Discount Factor ($\gamma$)      & 0.99   \\
\midrule
\multicolumn{2}{l}{\textbf{Network Architecture}} \\
Entity Dimension ($d$)          & 512    \\
Belief State Dimension ($d_b$)  & 256    \\
Attention Heads ($H$)           & 8      \\
MLP Hidden Size                 & 512    \\
Transformer Blocks              & 4      \\
Key/Query Dimension             & 64     \\
Feed-forward Size               & 2048   \\
Dropout Rate                    & 0.1    \\
Layer Norm Epsilon              & 1e-5   \\
\midrule
\multicolumn{2}{l}{\textbf{Temperature \& Sampling}} \\
$T_{\text{min}}$  & 0.1   \\
$T_{\text{max}}$  & 2.0   \\
$p_{\text{min}}$  & 0.1   \\
$p_{\text{max}}$  & 0.9   \\
\midrule
\multicolumn{2}{l}{\textbf{Reward Configuration}} \\
$R_{\text{max}}$          & 1.0   \\
$\alpha_1$, $\alpha_2$, $\alpha_3$ & 0.4, 0.4, 0.2 \\
\midrule
\multicolumn{2}{l}{\textbf{Loss Weights}} \\
$\lambda_b$, $\lambda$, $\lambda_m$ & 0.1, 0.1, 0.1 \\
\midrule
\multicolumn{2}{l}{\textbf{Early Stopping}} \\
$\epsilon_C$       & 0.01   \\
$\epsilon_L$       & 1e-4   \\
$R_{\text{threshold}}$ & 0.7 \\
$T_{\text{patience}}$  & 5   \\
\midrule
\multicolumn{2}{l}{\textbf{Model Size}} \\
Learnable Params   & $\sim$2.5M \\
\bottomrule
\end{tabular}
\end{table}
\end{minipage}

\newpage

\section{Together API Integration for ECON}
\label{appendix:together_api_details}

This subsection elaborates on how we invoke the \textbf{Together API} to query different LLMs (LLaMA3.1 8B/70B/405B, Mistral-7B, GPT4 turbo) in our ECON framework across GSM8K, GSM-Hard, MATH, SVAMP, and StrategyQA.

\subsection{Parallel Invocation and Rate Limits}
We employ four LLMs in ECON (one Coordinator and three Executions). To respect Together’s \emph{Requests per Minute} (RPM) and \emph{Tokens per Minute} (TPM) limits, we maintain:
\begin{itemize}[leftmargin=*]
    \item A job queue that sends LLM requests in mini-batches if concurrency could exceed the allowed RPM. 
    \item A global token counter to ensure we do not surpass the daily or per-minute token quota. If a query risks going over, we delay or split the request.
    \item A backoff mechanism that retries an LLM call up to 3 times if we encounter rate-limit or network errors, incrementally increasing the wait period.
\end{itemize}

\subsection{Prompt Construction and Truncation}
Each Execution LLM $i$ receives a prompt string containing:
\begin{enumerate}[leftmargin=1.5em]
    \item \textbf{Local Belief State $\mathbf{b}_i^t$.} This is a textual or embedded summary of the agent’s partial view of the environment, updated via the belief network.
    \item \textbf{Coordinator Strategy $e_s$.} The high-level guidance from the Coordinator LLM. 
\end{enumerate}
If the total token count (prompt + expected output) could exceed the Together API’s per-request cap (e.g., 2048 tokens), we truncate repeated instructions or compress partial states. Similarly, we impose a 50-token (soft) and 70-token (hard) limit for the Coordinator outputs, aligning with Sec.\ref{sec:exp-setup}.

\subsection{Online vs.\ Offline Modes}
By default, we adopt an \emph{offline} training procedure on each dataset’s training split (see Sec.\ref{sec:exp-setup}). During training, repeated queries are sent to gather transitions for the TD and mixing losses. For test evaluation, we freeze all parameters and still use the same prompt construction pipeline, but no further updates are performed.

\subsection{Sample Workflow}
An ECON iteration typically goes as follows:
\begin{itemize}[leftmargin=*]
    \item \textbf{Coordinator LLM call:} We pass aggregated local outputs from the previous iteration (or initial context) to the Coordinator, which returns strategy $e_s$ or final output $C$.
    \item \textbf{Execution LLM calls (parallel):} Each Execution LLM $i$ is invoked with the prompt described above. They produce outputs $u_i$, which we collect, compute rewards $r_i$, and record $(O_i^t, a_i^t, r_i^t)$ transitions.
    \item \textbf{Update Phase:} Belief networks and mixing networks are optimized offline, then the next iteration starts. If we are in test phase, we skip updates and only retrieve final solutions.
\end{itemize}

\subsection{Error Handling}
\begin{itemize}[leftmargin=*]
    \item \textbf{Rate Exceeded or Network Timeout.} We wait 10–30 seconds, then retry. 
    \item \textbf{Invalid Output.} If the Execution LLM responds with incomplete reasoning or an error message, we store a special placeholder $u_i^t = <INVALID>$ with reward $r_i^t=0$, and proceed.
\end{itemize}

\section{Prompt and Additional Result}
\label{appendix:Prompt}

\begin{figure*}[htbp ]
    \centering
    \includegraphics[width=1\textwidth]{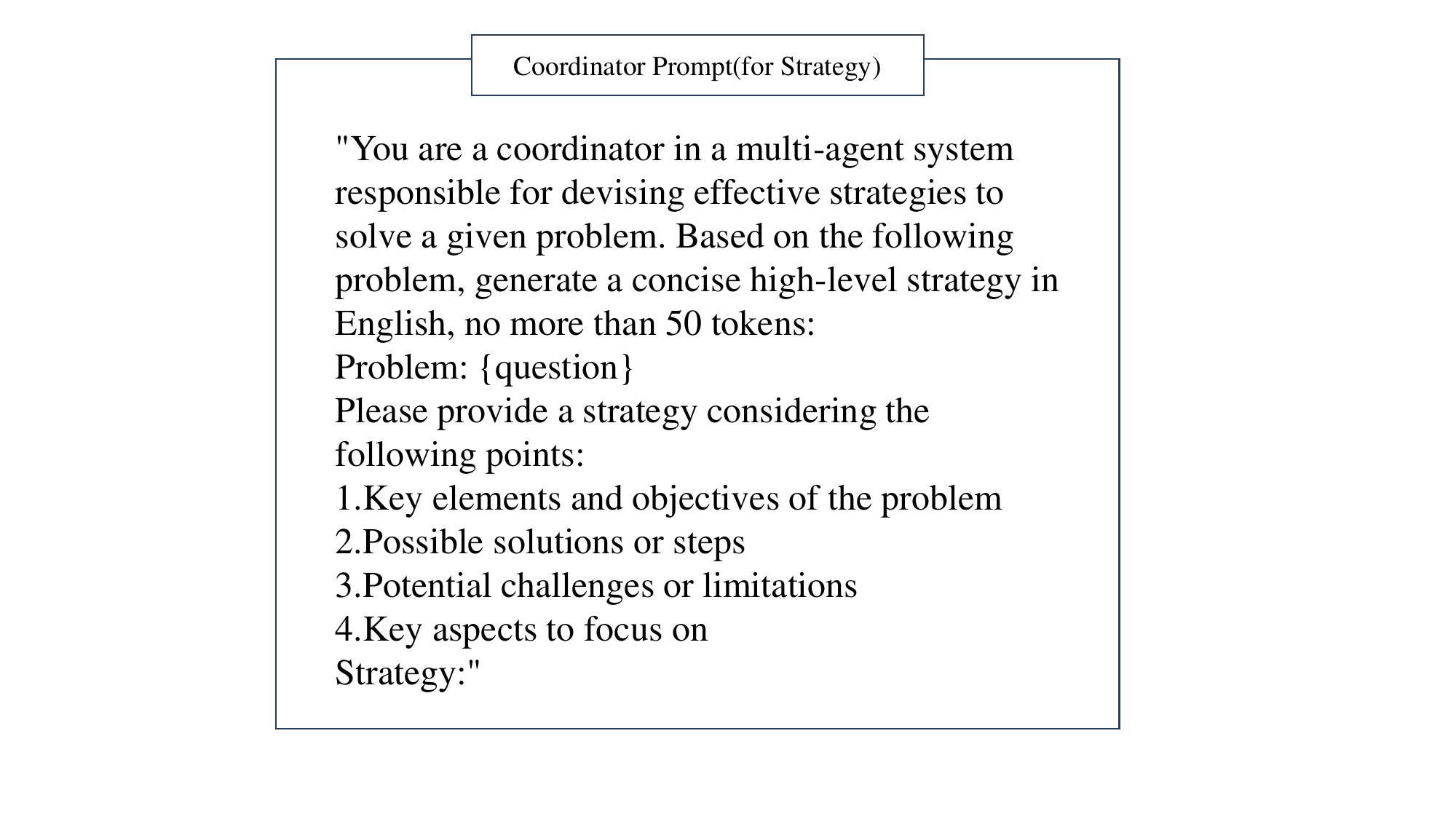}
    \caption{Coordinator Prompt(for Strategy)  }
    \label{fig:Coordinator Prompt(for Strategy)   }
\end{figure*}

\begin{figure*}[htbp ]
    \centering
    \includegraphics[width=1.0\textwidth]{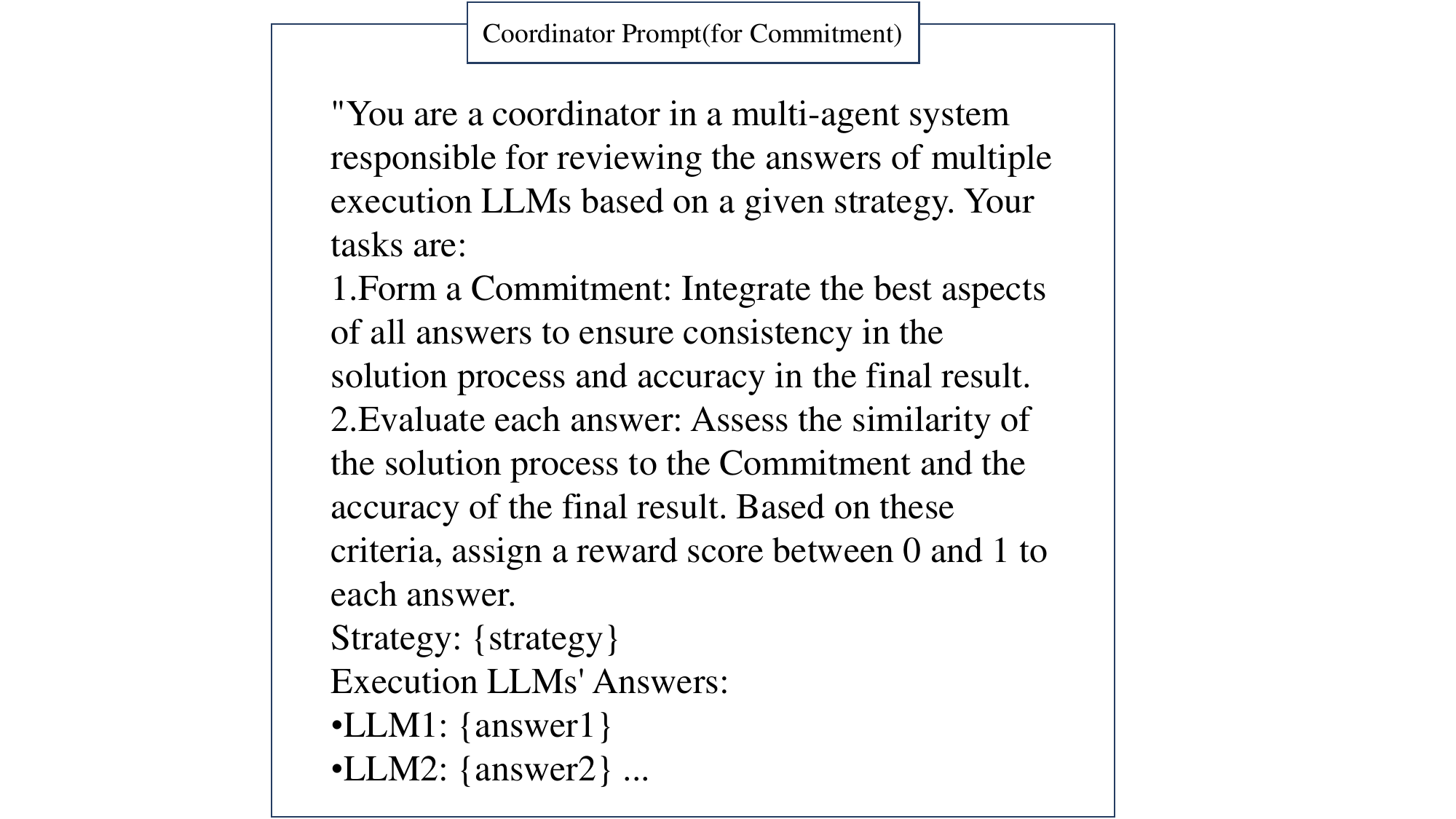}
    \caption{Coordinator Prompt(for final output)  }
    \label{fig:Coordinator Prompt(for Commitment)   }
\end{figure*}
\begin{figure*}[htbp ]
    \centering
    \includegraphics[width=1\textwidth]{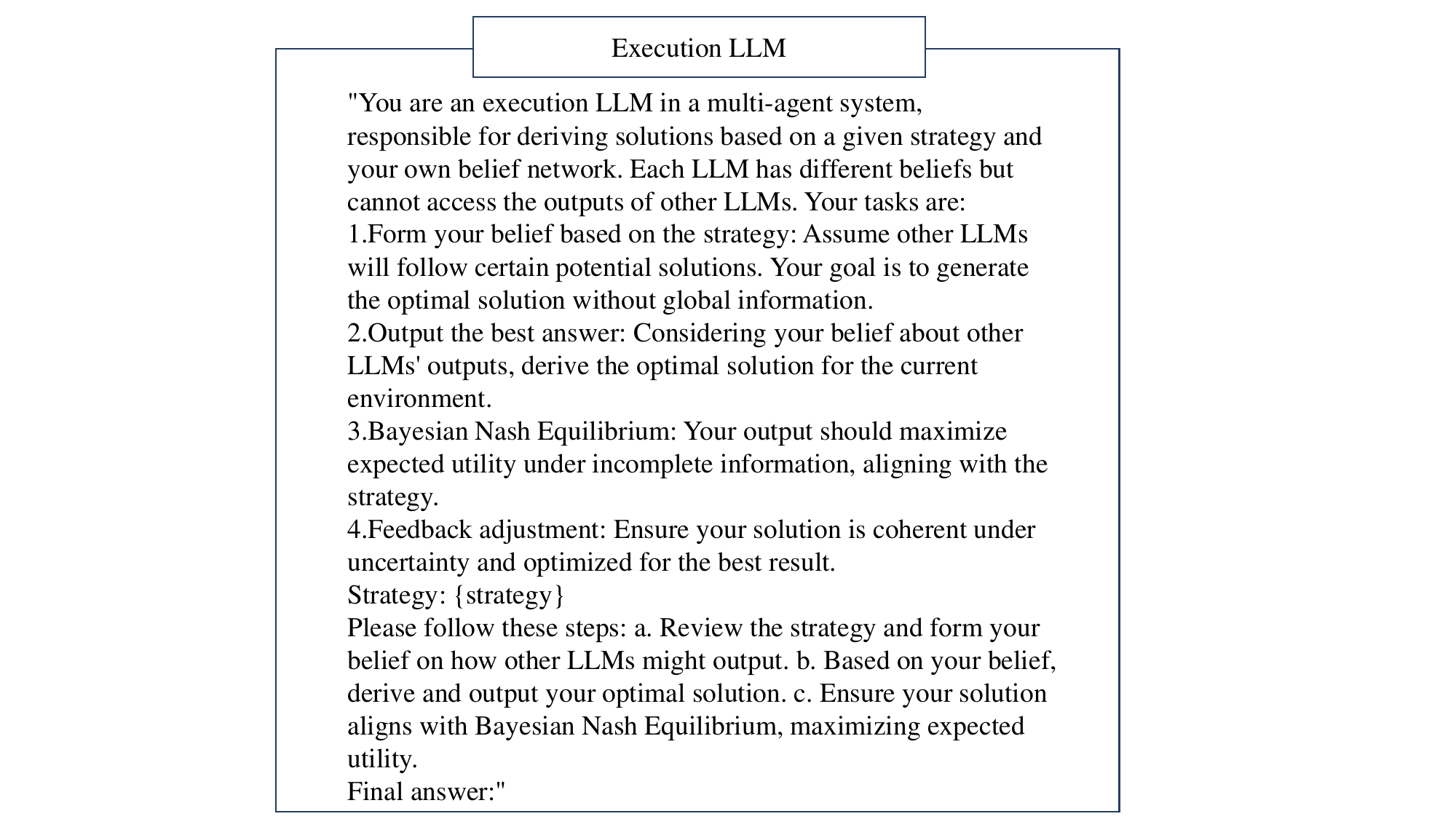}
    \vspace{-20pt}
    \caption{Execution LLM  }
    
    \label{fig:Execution Prompt)   }
\end{figure*}

\begin{figure*}[htbp ]
    \vspace{-30pt}
    \centering
    \includegraphics[width=0.8\textwidth]{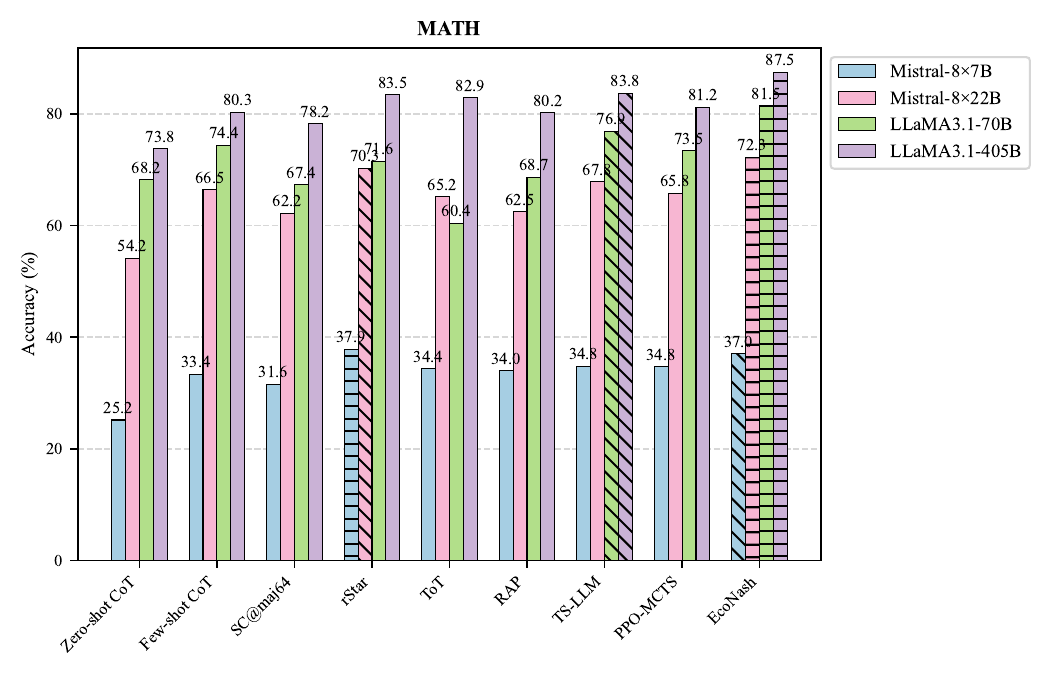}
    \vspace{-20pt}
    \caption{Accuracy Comparison of MATH}
    \vspace{-10pt}
    \label{fig:math_result }
\end{figure*}

\begin{figure*}[htbp ]
    \centering
    \includegraphics[width=0.8\textwidth]{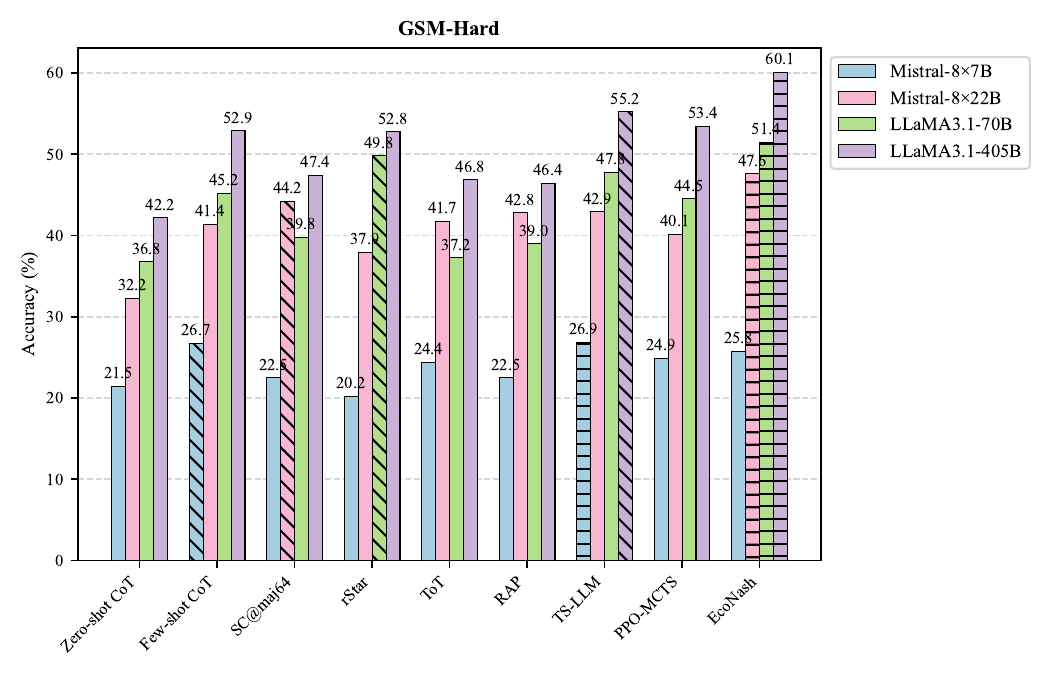}

    \caption{Accuracy Comparison of GSM-Hard}
    \label{fig:GSM-Hard_result }
\end{figure*}

\begin{figure*}[htbp ]
    \centering
    \includegraphics[width=0.8\textwidth]{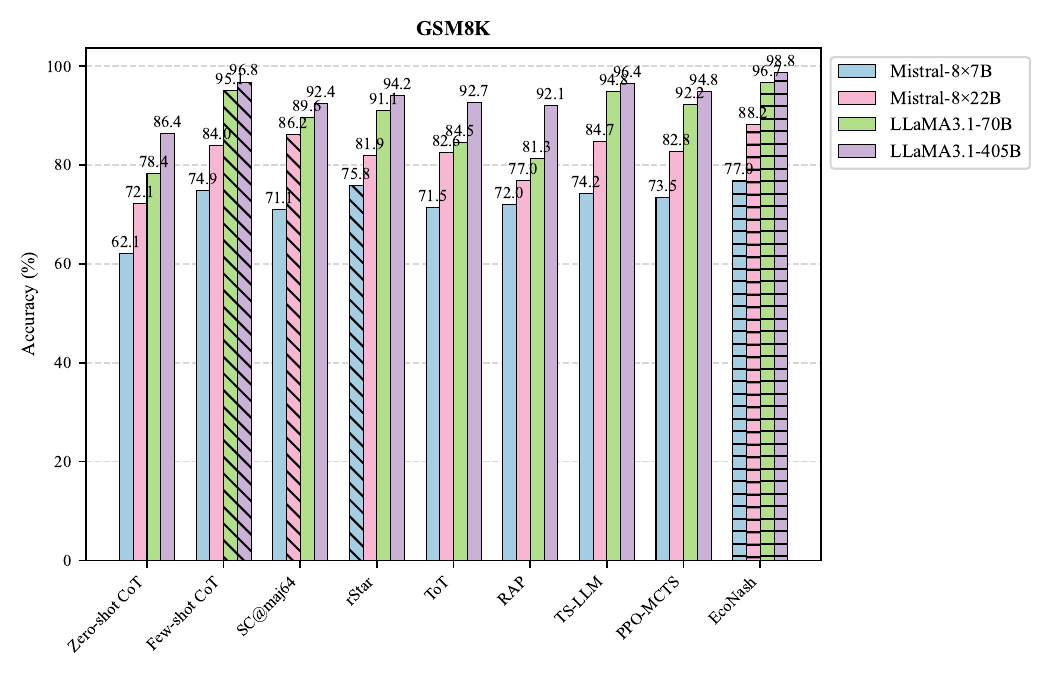}
    \caption{Accuracy Comparison of GSM8K}
    \label{fig:GSM8K_result }
\end{figure*}

\begin{figure*}[htbp ]
    \centering
    \includegraphics[width=0.8\textwidth]{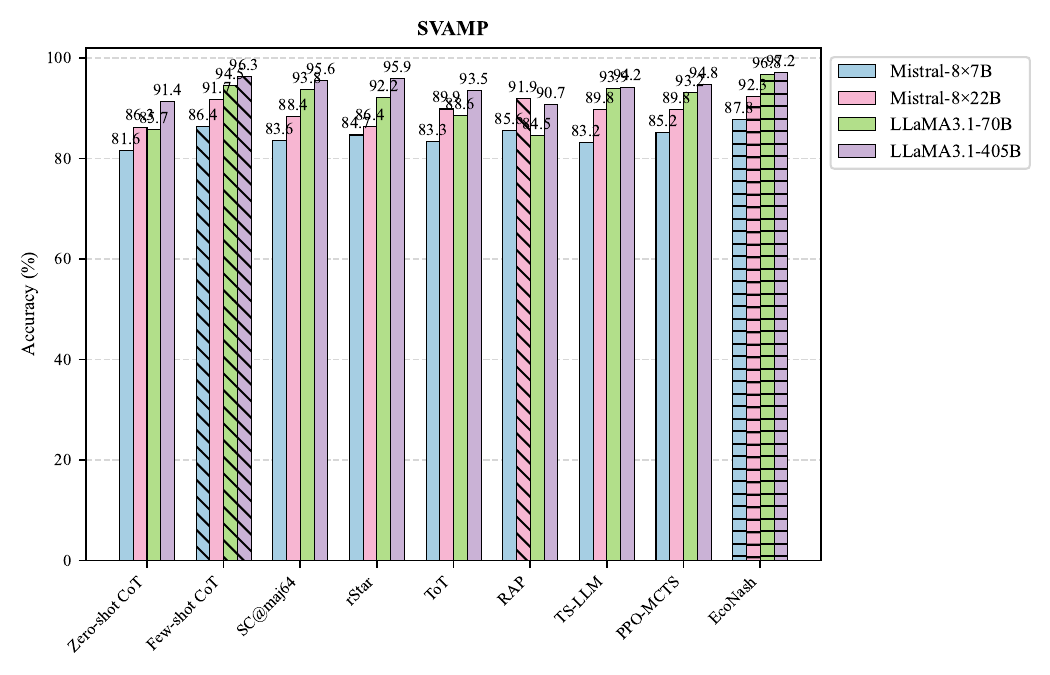}
    \caption{Accuracy Comparison of SVAMP}
    \label{fig:SVAMP_result }
\end{figure*}
\begin{figure*}[htbp ]
    \centering
    \includegraphics[width=0.8\textwidth]{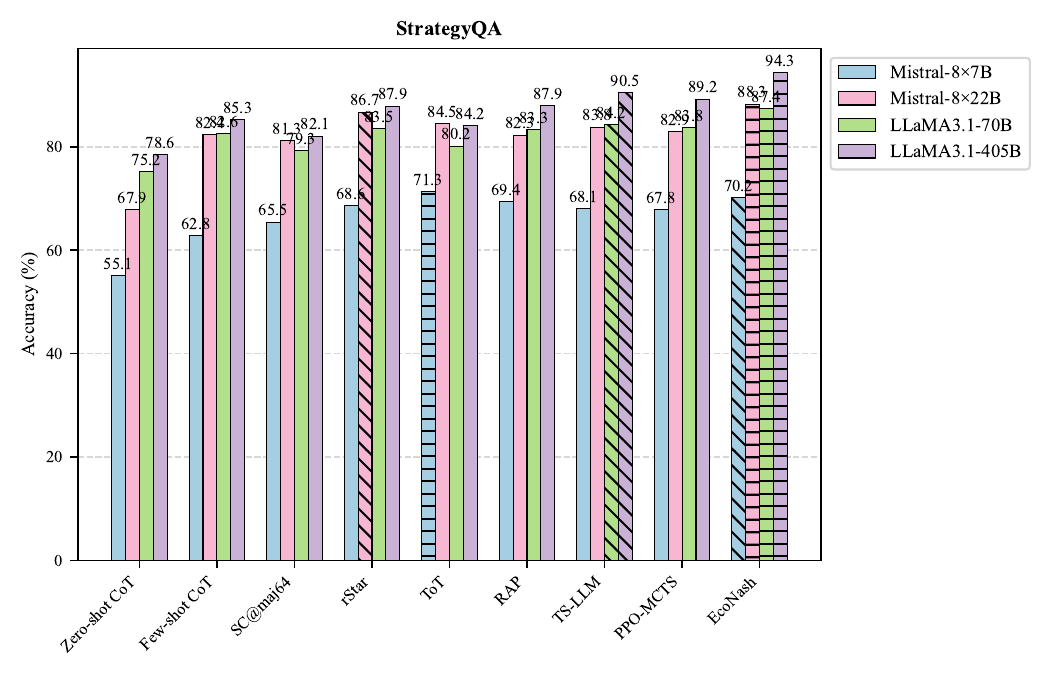}
    \caption{Accuracy Comparison of StrategyQA}
    \label{fig:StrategyQA_result }
\end{figure*}
\clearpage

\section{Example}
\label{appendix:Example}

\subsection{Case Study}
\label{appendix:case study}

\begin{figure*}[htbp ]
    \centering
    \includegraphics[width=1\textwidth]{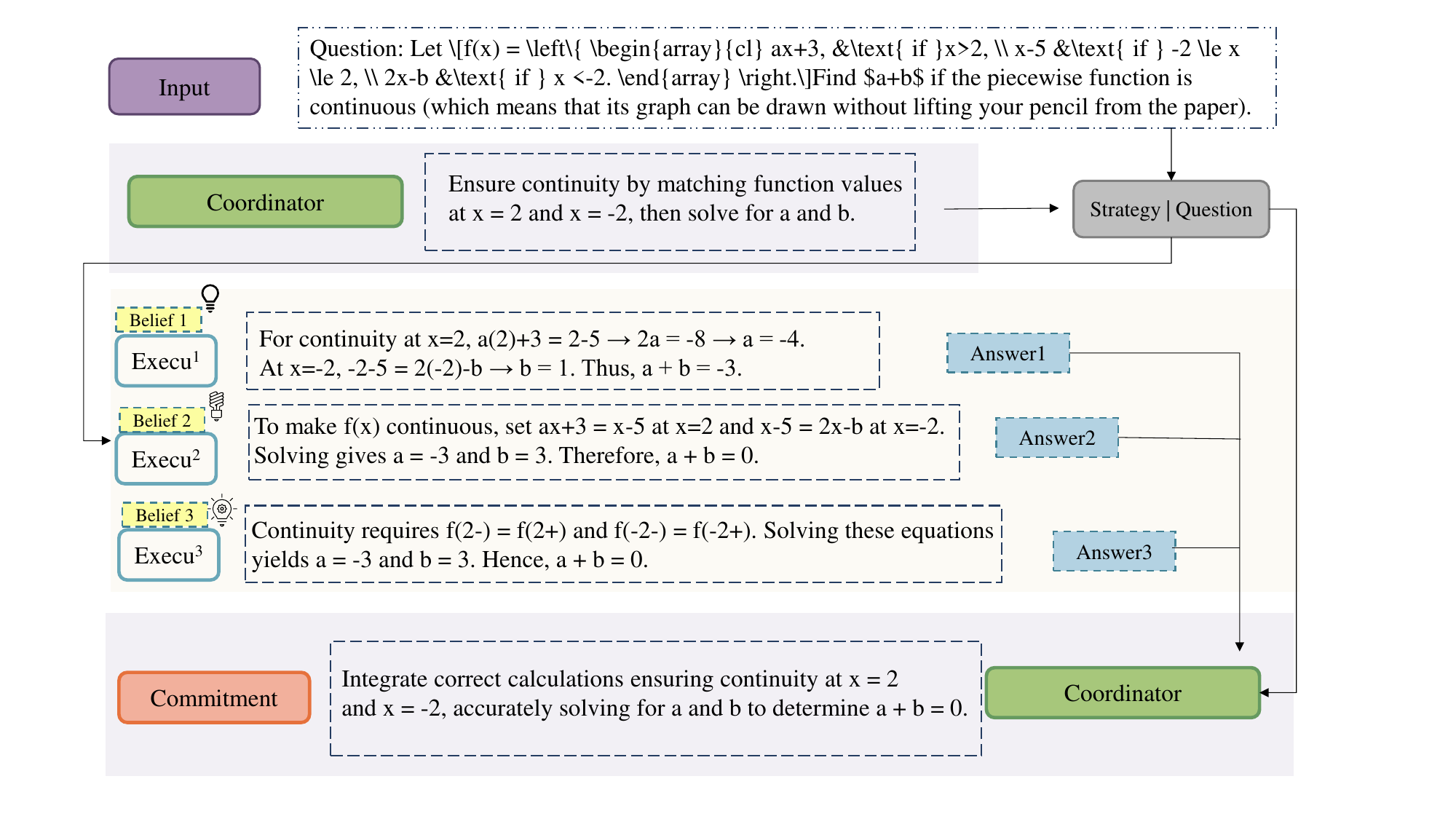}
    \caption{case study of math}
    \label{fig:case study of math }
\end{figure*}

\subsection{Strategy Examples}
\label{appendix:strategy_example_tcb}

\subsubsection{GSM8K}

\begin{longpromptbox}[gsm8k_problem1_qsf]{GSM8K Problem 1:} 
\textbf{Q1:} John buys 3 pizzas for \$12 each. If he gives the delivery person a 20\% tip on the total, how much did he spend in total?

\medskip
\textbf{S1:} Calculate pizza subtotal first. Add 20\% of subtotal for tip. Sum for final amount. 

\medskip
\textbf{F1:}
\begin{enumerate}[nosep]
    \item Pizza cost = \$? \(\times\) ?
    \item Tip = ? \(\times\) subtotal
    \item Total = subtotal + tip
\end{enumerate}
$Strategy + Format: 35 tokens$
\end{longpromptbox}

\begin{longpromptbox}[gsm8k_problem2_qsf]{GSM8K Problem 2:} 
\textbf{Q2:} Janet saves twice as much money as Tom. If Tom saves \$45 per week, how much does Janet save in 5 weeks?

\medskip
\textbf{S2:} Find Janet's weekly savings relative to Tom's. Multiply by number of weeks. 

\medskip
\textbf{F2:}
\begin{enumerate}[nosep]
    \item Janet weekly = ? \(\times\) Tom
    \item Total = weekly \(\times\) weeks
\end{enumerate}
$Strategy + Format: 28 tokens$
\end{longpromptbox}

\begin{longpromptbox}[gsm8k_problem3_qsf]{GSM8K Problem 3:} 
\textbf{Q3:} A factory produces 150 cars per day. If they increase production by 15\% next month, how many cars will they produce in a 30-day month?

\medskip
\textbf{S3:} Calculate production increase. Add to original. Multiply by days in month. 

\medskip
\textbf{F3:}
\begin{enumerate}[nosep]
    \item Increase = original \(\times\) 15\%
    \item New daily = original + increase
    \item Monthly = daily \(\times\) days
\end{enumerate}
$Strategy + Format: 36 tokens$
\end{longpromptbox}

\begin{longpromptbox}[gsm8k_problem4_qsf]{GSM8K Problem 4:} 
\textbf{Q4:} Alex has 240 marbles and gives \(\frac{3}{8}\) of them to Sarah. Sarah then gives \(\frac{1}{4}\) of her marbles to Tom. How many marbles does Sarah have left?

\medskip
\textbf{S4:} Calculate Sarah's initial share. Find amount she gives to Tom. Subtract. 

\medskip
\textbf{F4:}
\begin{enumerate}[nosep]
    \item Sarah gets = total \(\times\) \(\frac{3}{8}\)
    \item Sarah gives = her marbles \(\times\) \(\frac{1}{4}\)
    \item Remaining = initial - given
\end{enumerate}
$Strategy + Format: 39 tokens$
\end{longpromptbox}

\begin{longpromptbox}[gsm8k_problem5_qsf]{GSM8K Problem 5:} 
\textbf{Q5:} A train travels at 60 mph for 2.5 hours, then increases speed to 75 mph for 1.5 hours. What's the total distance traveled?

\medskip
\textbf{S5:} Calculate distance for each speed separately using \(d = r \times t\). Sum distances. 

\medskip
\textbf{F5:}
\begin{enumerate}[nosep]
    \item First distance = speed\(_1\) \(\times\) time\(_1\)
    \item Second distance = speed\(_2\) \(\times\) time\(_2\)
    \item Total = \(d_1 + d_2\)
\end{enumerate}
$Strategy + Format: 36 tokens$
\end{longpromptbox}

\subsubsection{MATH}

\begin{longpromptbox}[math_problem1_qsf]{MATH Problem 1:} 
\textbf{Q1:} In a bag of marbles, \(\frac{3}{7}\) are blue and \(\frac{2}{5}\) are red. The remaining 11 marbles are green. How many marbles are in the bag?

\medskip
\textbf{S1:} Convert fractions to common denominator. Find the fraction for remaining color. Use given count to find total. 

\medskip
\textbf{F1:}
\begin{enumerate}[nosep]
    \item Convert to common denominator
    \item Add converted fractions
    \item Subtract from whole
    \item Use remaining count to find total
\end{enumerate}
$Strategy + Format: 32 tokens$
\end{longpromptbox}

\begin{longpromptbox}[math_problem2_qsf]{MATH Problem 2:} 
\textbf{Q2:} Find the area of a triangle with vertices at (0,0), (4,0), and (2,5).

\medskip
\textbf{S2:} Use coordinate geometry method for area. Set up calculation matrix. Take final result. 

\medskip
\textbf{F2:}
\begin{enumerate}[nosep]
    \item Set up coordinate matrix
    \item Calculate determinant
    \item Apply area formula
\end{enumerate}
$Strategy + Format: 28 tokens$
\end{longpromptbox}

\begin{longpromptbox}[math_problem3_qsf]{MATH Problem 3:} 
\textbf{Q3:} If \(\log_2(x) = 3\) and \(\log_2(y) = 4\), find \(\log_2(xy)\).

\medskip
\textbf{S3:} Apply logarithm properties. Combine given values. Express final result. 

\medskip
\textbf{F3:}
\begin{enumerate}[nosep]
    \item Write multiplication property
    \item Substitute given values
    \item Simplify result
\end{enumerate}
$Strategy + Format: 26 tokens$
\end{longpromptbox}

\begin{longpromptbox}[math_problem4_qsf]{MATH Problem 4:} 
\textbf{Q4:} A circle has radius 6. Find the area of the sector formed by a 40\(^\circ\) angle at the center.

\medskip
\textbf{S4:} Convert angle measurement. Apply sector area formula. Simplify result. 

\medskip
\textbf{F4:}
\begin{enumerate}[nosep]
    \item Convert to radians
    \item Write sector formula
    \item Calculate final area
\end{enumerate}
$Strategy + Format: 27 tokens$
\end{longpromptbox}

\begin{longpromptbox}[math_problem5_qsf]{MATH Problem 5:} 
\textbf{Q5:} Solve the equation: \(2x^2 + 5x - 12 = 0\).

\medskip
\textbf{S5:} Identify quadratic components. Apply standard formula. Solve for variables. 

\medskip
\textbf{F5:}
\begin{enumerate}[nosep]
    \item Identify coefficients
    \item Setup quadratic formula
    \item Calculate solutions
\end{enumerate}
$Strategy + Format: 28 tokens$
\end{longpromptbox}

\subsubsection{SVAMP}

\begin{longpromptbox}[svamp_problem1_qsf]{SVAMP Problem 1:} 
\textbf{Q1:} There are 56 books on the shelf. Tom puts 14 more books and Jane removes 22 books. How many books are on the shelf now?

\medskip
\textbf{S1:} Track sequential changes. Apply additions and subtractions in order. 

\medskip
\textbf{F1:}
\begin{enumerate}[nosep]
\item Add new books
\item Subtract removed books
\end{enumerate}
$Strategy + Format: 25 tokens$
\end{longpromptbox}

\begin{longpromptbox}[svamp_problem2_qsf]{SVAMP Problem 2:} 
\textbf{Q2:} A box has 3 rows of chocolates. Each row has 4 chocolates. If 5 chocolates were eaten, how many are left?

\medskip
\textbf{S2:} Calculate initial total. Subtract consumed amount. 

\medskip
\textbf{F2:}
\begin{enumerate}[nosep]
\item Find total chocolates
\item Subtract eaten ones
\end{enumerate}
$Strategy + Format: 23 tokens$
\end{longpromptbox}

\begin{longpromptbox}[svamp_problem3_qsf]{SVAMP Problem 3:} 
\textbf{Q3:} Mary has 5 times as many stickers as John. John has 12 stickers. How many stickers do they have together?

\medskip
\textbf{S3:} Calculate second person's amount. Sum both quantities. 

\medskip
\textbf{F3:}
\begin{enumerate}[nosep]
\item Find Mary's stickers
\item Add both totals
\end{enumerate}
$Strategy + Format: 24 tokens$
\end{longpromptbox}

\begin{longpromptbox}[svamp_problem4_qsf]{SVAMP Problem 4:} 
\textbf{Q4:} A garden has 35 flowers. ($\frac{2}{7}$) are roses and ($\frac{3}{7}$) are tulips. How many flowers are neither roses nor tulips?

\medskip
\textbf{S4:} Sum known fractions. Find remaining fraction. Calculate final count. 

\medskip
\textbf{F4:}
\begin{enumerate}[nosep]
\item Add type fractions
\item Find remaining fraction
\item Calculate flower count
\end{enumerate}
$Strategy + Format: 27 tokens$
\end{longpromptbox}

\begin{longpromptbox}[svamp_problem5_qsf]{SVAMP Problem 5:} 
\textbf{Q5:} Each child needs 3 pencils. If there are 23 children, how many boxes of 10 pencils should the teacher buy?

\medskip
\textbf{S5:} Calculate total need. Convert to required units. Round appropriately. 

\medskip
\textbf{F5:}
\begin{enumerate}[nosep]
\item Calculate total pencils
\item Divide by box size
\item Round to whole boxes
\end{enumerate}
$Strategy + Format: 28 tokens$
\end{longpromptbox}

\paragraph{Note on Token Counts:}
\begin{itemize}
\item All problems follow consistent format: strategy + step-by-step format
\item Strategy statements aim to be concise yet clear
\item Format points provide framework without giving solutions
\item Token ranges:
\begin{itemize}
\item Shortest: 23 tokens (SVAMP Q2)
\item Longest: 39 tokens (GSM8K Q4)
\item Average: ($\sim$)30 tokens
\end{itemize}
\end{itemize}
\end{document}